\numberwithin{equation}{section}
\titleformat*{\section}{\large\bfseries}
\titleformat*{\subsection}{\normalfont\bfseries}
\theoremstyle{plain}
\newtheorem{theorem}{Theorem}[section]
\newtheorem{lemma}[theorem]{Lemma}
\theoremstyle{definition}
\newtheorem{defn}{Definition}
\newtheorem{assumption}{Assumption}
\newcommand{\prob}{\mathbb{P}}
\newcommand{\real}{\mathbb{R}}
\newcommand{\xl}{\mathcal{X}^{(\ell)}}
\newcommand{\qil}{\mathcal{Q}_{-i}^{(\ell)}}
\newcommand{\xil}{X_{-i}^{(\ell)}}
\newcommand\numberthis{\addtocounter{equation}{1}\tag{\theequation}}
\newcommand{\back}{\backslash}
\newcommand{\mQ}{\mathcal{Q}}
\newcommand{\mT}{\mathcal{T}}
\newcommand{\mP}{\mathcal{P}}
\newcommand{\const}{c}
\DeclareMathOperator*{\argmax}{argmax}
\DeclareMathOperator{\conv}{conv}
\DeclareMathOperator{\diag}{diag}
\DeclareMathOperator{\sign}{sgn}
\DeclareMathOperator{\proj}{\Pi}
\title{Sparse Subspace Clustering with Missing and Corrupted Data}
\author[1]{Zachary Charles}
\author[2]{Amin Jalali}
\author[1,2]{Rebecca Willett}
\affil[1]{Department of Electrical and Computer Engineering, University of Wisconsin-Madison}
\affil[2]{Wisconsin Institute for Discovery, University of Wisconsin-Madison}
\begin{document}
\maketitle

\begin{abstract}
Given full or partial information about a collection of points that lie close to a union of several subspaces, {\em subspace clustering} refers to the process of clustering the points according to their subspace and identifying the subspaces. 
One popular approach, sparse subspace clustering (SSC) \cite{elhamifar2009sparse}, represents each sample as a weighted combination of the other samples, with weights of minimal $\ell_1$ norm, and then uses those learned weights to cluster the samples. SSC is stable in settings where each sample is contaminated by a relatively small amount of noise. However, when there is a significant amount of additive noise, or a considerable number of entries are missing, theoretical guarantees are scarce. 
In this paper, we study a robust variant of SSC and establish clustering guarantees in the presence of corrupted or missing data. We give explicit bounds on amount of noise and missing data that the algorithm can tolerate, both in deterministic settings and in a random generative model.
Notably, our approach provides guarantees for higher tolerance to noise and missing data than existing analyses for this method.  
By design, the results hold even when we do not know the locations of the missing data; e.g., as in presence-only data. 
\end{abstract}

\section{Introduction}

	In many applications, including image compression \cite{hong2006multiscale, yang2008unsupervised}, network estimation \cite{eriksson2011domain}, video segmentation \cite{costeira1998multibody, kanatani2001}, and recommender systems \cite{zhang2012guess}, what is ostensibly high-dimensional data can be modeled as data sampled from a {\it union of low-dimensional subspaces}. In~subspace clustering, we observe a data matrix $X \in \real^{n\times N}$ whose columns lie near a union of several low-dimensional subspaces of~$\real^n$. We wish to cluster the columns according to their subspace and infer the subspaces. In practice, $X$ may be corrupted by large amounts of noise, adversarial or otherwise, or it may have missing entries.

	Subspace clustering has experienced significant attention over the last decade, resulting in many different algorithms with varying levels of established theory. These include expectation-maximization methods~\cite{bradley2000k}, algebraic methods~\cite{vidal2005generalized}, matrix factorization methods~\cite{costeira1998multibody}, and local sampling methods \cite{rao2008motion}, among others. Of particular note is sparse subspace clustering (SSC)~\cite{elhamifar2009sparse}, which exhibits good empirical performance in many real data applications, most notably in computer vision applications, and enjoys provable guarantees on its performance~\cite{soltanolkotabi2012}. Moreover, various theoretical and empirical work has been carried out to show that SSC can handle outliers~\cite{soltanolkotabi2012} and certain forms of noisy measurements~\cite{soltanolkotabi2014robust,Wang2016}.

In the noiseless setup, SSC tries to represent each column of the data matrix $X$ as a linear combination of the other data points, with coefficients of minimal $\ell_1$ norm, by solving the following optimization program,
	\begin{equation}\label{eq:ssc}
	\min_{C}~\|C\|_{1} \quad \text{subject~to}~~~ X = XC,~\diag(C) = 0,\end{equation}
	where $\|C\|_1$ denotes the sum of absolute values of all entries in $C$. 
	The entries of the optimal solution are hoped to approximately encode whether pairs of columns of $X$ come from the same subspace. If so, spectral clustering can be applied to $|C|+|C|^T$ to recover a partition for columns of $X$. Most of theoretical analysis for SSC, e.g., see \cite{soltanolkotabi2012,Wang2016}, is dedicated to establishing guarantees on the pattern of zero entries in the optimal $C$ and presenting conditions under which $C_{ij}$ is zero whenever samples $i$ and $j$ do not belong to the same subspace. Such property, referred to as the {\em subspace detection property} in \cite{soltanolkotabi2012}, can then be used as a proxy for the performance of the spectral clustering phase and clustering. 
	
	In this paper we analyze the following {\em robust} variant of SSC, designed to work in the presence of corruptions and missing data,  
		\begin{equation}\label{lasso_ssc}
		\min_C~\|C\|_{1}+\frac{\lambda}{2}\|XC-X\|_F^2 \quad \text{subject~to}~~~ \diag(C) = 0,
		\end{equation}
		where $X$ is the observed matrix, either corrupted with additive noise or with zeros in place of missing entries. 
	We refer to this approach as LS-SSC since it involves a least-squares (LS) loss. This estimator has been previously considered in the literature with different theoretical guarantees \cite{soltanolkotabi2014robust,Wang2016} or with only empirical evidence for its effectiveness \cite{yang2015}. In this work, we consider different geometric quantities and provide an analysis that is similar in nature to \cite{Wang2016} but guarantees successful recovery for higher noise levels. Such improved thresholds directly translate to the ability to tolerate more missing entries: $O(n/d)$ versus $O(n/d^2)$ missing entries, for $d$-dimensional subspaces in $\real^n$.

	\subsection{Summary of Main Results}

		We provide analysis for (\ref{lasso_ssc}) and give theoretical guarantees for establishing the subspace detection property (see Definition~\ref{def:sub-det-prop}). We give a deterministic criteria for success based on the geometry of the true samples and the corruptions as well as the noise level defined as the maximum $\ell_2$ norm of the additive error in each observation.

The optimization problem in \eqref{lasso_ssc} has been previously studied in the literature \cite{Wang2016}. Our analysis in the deterministic setting follows the well-known strategy of constructing a dual certificate, similarly taken by \cite{Wang2016}. However, {\em our analysis removes a step of projection for the dual certificates} that is required in \cite{Wang2016}. As a result, we use a different definition of incoherence. This difference leads to an improved trade-off between the noise level and the dimensions of the subspaces. 

Our analysis allows us to show that \eqref{lasso_ssc} succeeds in the presence of noisy and missing data under the setup where subspaces are chosen uniformly at random and samples are drawn uniformly at random from these subspaces.
More specifically, we can tolerate corruptions with $\ell_2$ norm bounded by $O(1/\sqrt{d})$, or $O(n/d)$ missing entries per sample, where our subspaces are $d$-dimensional subspaces of $\real^n$. This is an improvement over what we get using results from \cite{Wang2016}: tolerating corruptions with $\ell_2$ norm bounded by $O(1/d)$, or $O(n/d^2)$ missing entries per sample. 
This is an important distinction in high-dimensional settings, where our results indicate the ability to recover from missing entries when the dimension of each subspace is as large as the ambient dimension, up to constants, while the results of \cite{Wang2016} require $d \lesssim \sqrt{n}$. Our improved rates concerning the number of missing entries are similar to the ones in a work that has been done in parallel \cite{tsakiris2018theoretical}. Our work uses different proof techniques that may extend to other
settings more easily, or at the very least provide different insights into subspace clustering with missing data.
		
		Finally, the presented framework is {\it location-agnostic}, as the estimator and the results need not know the location of the missing entries. This feature allows the estimator to be used in more general situations than algorithms such as those in \cite{yang2015,elhamifar2016high,tsakiris2018theoretical} that exploit the locations of the missing entries for estimation. This feature is important in certain real-data settings, such as the presence-only data described in Section~\ref{sec:compare}. 

	In the following, we state informal versions of our main results in both deterministic and random settings.

	\begin{theorem}[deterministic guarantee]
		Suppose we are given $X = Y+Z$ where $Y$ comes from a union of subspaces model and each column of $Z$ has $\ell_2$ norm bounded by an explicit function of the configuration of subspaces and the placement of the true samples on the subspaces. Then there is an explicit interval of $\lambda$ for which $\eqref{lasso_ssc}$ returns a nontrivial solution with no false positives.
	\end{theorem}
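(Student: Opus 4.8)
The plan is to exploit the column-wise separability of \eqref{lasso_ssc}. Fixing a column $x_i$ of $X$, the $i$-th column $c_i$ of the optimal $C$ solves the LASSO-type program $\min_{c}~\|c\|_1+\tfrac{\lambda}{2}\|X_{-i}c-x_i\|_2^2$, where $X_{-i}$ collects the remaining columns. Writing $x_i = y_i + z_i$ with $y_i$ lying in one subspace $\mathcal{S}_\ell$ of the union, ``no false positives'' for column $i$ means the optimal $c_i$ is supported only on indices whose noiseless part lies in $\mathcal{S}_\ell$. I would first solve the \emph{restricted} program in which $c$ is forced to be supported on the in-subspace index set $\mathcal{I}_\ell$; call its minimizer $c_i^\star$ and its residual $r = x_i - X_{-i}c_i^\star$. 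The first-order optimality conditions of the restricted program produce the vector $\nu = \lambda r$ satisfying $\langle x_j,\nu\rangle = \sign((c_i^\star)_j)$ on the support and $|\langle x_j,\nu\rangle|\le 1$ for the remaining columns with $j\in\mathcal{I}_\ell$; this $\nu$ is the candidate dual certificate. Unlike \cite{Wang2016}, I would \emph{not} project $\nu$ onto $\mathcal{S}_\ell$, so $\nu$ keeps a component orthogonal to the subspace — this is precisely what forces the alternative incoherence definition.

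The heart of the argument is to show that the same $\nu$ certifies optimality of $c_i^\star$ for the \emph{unrestricted} program, which reduces to strict dual feasibility $|\langle x_j,\nu\rangle|<1$ for every column $x_j$ with $j\notin\mathcal{I}_\ell$; strictness then rules out any false positive in any optimal $c_i$. Decomposing $x_j = y_j + z_j$, I would bound $|\langle y_j,\nu\rangle|$ by an incoherence-type quantity between the certificate direction $\nu/\|\nu\|_2$ and the other subspaces, times $\|\nu\|_2$, and bound $|\langle z_j,\nu\rangle|\le\|z_j\|_2\,\|\nu\|_2$ by Cauchy--Schwarz using the noise hypothesis. This needs two-sided control of $\|\nu\|_2=\lambda\|r\|_2$: comparing $c_i^\star$ against $c=0$ gives the crude bound $\tfrac{\lambda}{2}\|r\|_2^2\le\tfrac{\lambda}{2}\|x_i\|_2^2$, which one sharpens using the in-subspace geometry (e.g.\ the inradius of the symmetrized convex hull of the $\mathcal{I}_\ell$-points), while a lower bound on $\|r\|_2$ keeps $\nu$ from collapsing. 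Assembled, strict feasibility becomes an inequality of the shape $(\text{incoherence})\cdot\|\nu\|_2+(\text{noise level})\cdot\|\nu\|_2<1$, which holds exactly when the per-column noise norm lies below the advertised explicit function of the subspace configuration and sample placement.

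The ``nontrivial solution'' requirement pins down the lower end of the $\lambda$-interval: $c_i=0$ is optimal for column $i$ precisely when $\lambda\max_{j\neq i}|\langle x_j,x_i\rangle|\le 1$, so $\lambda$ must exceed roughly $1/\max_{j\neq i}|\langle x_j,x_i\rangle|$; the dual-feasibility computation simultaneously forces $\lambda$ below an explicit upper threshold, and the hypothesis on the noise level is exactly what keeps this interval nonempty. Intersecting over all columns $i$ yields the stated interval of admissible $\lambda$.

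I expect the main obstacle to be the two-sided control of $\|\nu\|_2$ \emph{without} the projection step. Because $\nu$ now lives in $\real^n$ rather than in $\mathcal{S}_\ell$, the usual bound relating $\|\nu\|_2$ to the in-subspace inradius must be re-derived, and one must verify that the component of $\nu$ orthogonal to $\mathcal{S}_\ell$ does not inflate the inner products $\langle y_j,\nu\rangle$ with out-of-subspace signal points; this is where the new incoherence parameter does the work and where the quantitative gain (noise tolerance $O(1/\sqrt{d})$ rather than $O(1/d)$, hence $O(n/d)$ rather than $O(n/d^2)$ missing entries) originates. A secondary technical point is ensuring the restricted minimizer is well-behaved — that its residual is nonzero and that the KKT system produces a genuine $\ell_1$-subgradient — which follows from the in-subspace points being in suitably general position as quantified by the geometric hypotheses.
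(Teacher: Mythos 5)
Your proposal follows essentially the same route as the paper: construct a dual certificate from the restricted (in-subspace) program without projecting it onto $S_\ell$, verify strict dual feasibility on out-of-subspace columns via the incoherence-plus-noise bound $(\mu+\delta)\|\nu\|_2<1$, control $\|\nu\|_2$ by splitting it into its $S_\ell$ and $S_\ell^\perp$ components bounded via the (restricted) inradius, and rule out $c_i=0$ by showing $\nu=\lambda x_i$ violates dual feasibility once $\lambda$ exceeds a threshold tied to $r_\ell$. The obstacles you flag — the two-component bound on $\|\nu\|_2$ and the need to lower-bound $\max_j|\langle x_j,x_i\rangle|$ geometrically — are exactly the steps the paper resolves with its Lemmas on $\|\nu_1\|_2$, $\|\nu_2\|_2$, and the gauge/inradius duality.
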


	\begin{theorem}[missing data]
		 Let $Y\in\real^{n\times N}$ be a matrix whose columns are drawn from the intersection of the unit ball and the union of $L$ $d$-dimensional subspaces that are drawn uniformly at random, and there are $\kappa d$ points in $Y$ corresponding to each subspace (i.e. $N = \kappa d L$).
		 Suppose we are given an incomplete version $X$ of $Y$ where zeros have been filled in to missing locations. If $d \leq O(n/\log N)$ and each column is missing at most $O(n/d)$ entries, then there is an explicit interval of $\lambda$ for which \eqref{lasso_ssc} returns a nontrivial solution with no false positives, with high probability. 
Here $O(\cdot)$ hides a small dependence on the number of points drawn from each subspace.
	\end{theorem}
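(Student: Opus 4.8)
The plan is to reduce the missing-data problem to the noisy-data setting and then invoke the random-model counterpart of the deterministic guarantee (the version in which subspaces are drawn uniformly and samples are drawn uniformly on each subspace). Write the observation as $X = Y + Z$, where the $j$-th column of $Z$ is $z_j = -\proj_{\Omega_j^c}\, y_j$, i.e.\ minus the restriction of the true sample $y_j$ to its set $\Omega_j^c$ of missing coordinates, with $\proj_{\Omega_j^c}$ the coordinate projection. Then $Z$ is a deterministic function of $Y$ and the missing pattern, and in particular $\|z_j\|_2 = \|\proj_{\Omega_j^c} y_j\|_2 \le \|y_j\|_2 \le 1$. Because the deterministic criterion constrains $Z$ only through the per-column $\ell_2$ norms, it suffices to (i) show $\max_j \|z_j\|_2 = O(1/\sqrt d)$ with high probability, and (ii) check that a noise level of this magnitude falls inside the admissible interval of $\lambda$ produced by the random-model analysis of \eqref{lasso_ssc}.

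For step (i), fix a column $j$ lying in a subspace $S_\ell$ and use rotational invariance: a uniformly random $d$-dimensional subspace equipped with a uniformly random unit vector produces a vector that is marginally uniform on the unit sphere of $\real^n$ (drawing $y_j$ from the unit ball rather than the sphere only decreases the norm). Hence $\|\proj_{\Omega_j^c} y_j\|_2^2$ has the distribution of $\chi^2_{m_j}/(\chi^2_{m_j}+\chi^2_{n-m_j})$ for independent chi-square variables with the indicated degrees of freedom, where $m_j = |\Omega_j^c| \le O(n/d)$; its mean is $m_j/n$. Laurent--Massart bounds give $\chi^2_{m_j} \lesssim m_j + \log N$ and $\chi^2_{n-m_j} \gtrsim n$, each with probability $1 - N^{-c}$ (using $n \gg \log N$ and $m_j \le n/2$), so a union bound over the $N$ columns yields $\max_j \|z_j\|_2^2 \lesssim (m_j + \log N)/n \lesssim 1/d$. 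The hypothesis $d \le O(n/\log N)$ is precisely what keeps the $(\log N)/n$ term --- the cost of the union bound --- from exceeding $m_j/n = O(1/d)$, which is why it enters the statement; the resulting noise level is $O(1/\sqrt d)$.

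For step (ii), I would appeal to the random-model version of the deterministic guarantee: under $d \le O(n/\log N)$ and the stated sampling model, the geometric quantities in the deterministic criterion --- essentially the inradii of the sample polytopes and the relevant subspace incoherence --- concentrate, so that the admissible interval of $\lambda$ is nonempty, explicit, and robust to any additive corruption with per-column $\ell_2$ norm at most $c/\sqrt d$, the constants carrying the mild dependence on $\kappa$. Intersecting the high-probability events from (i) and (ii) and choosing $\lambda$ in their common interval then delivers the subspace detection property: a nontrivial solution of \eqref{lasso_ssc} with no false positives, with high probability. The main obstacle is the sharpness demanded in step (i): to improve on the earlier $O(n/d^2)$ bound one must control $\|\proj_{\Omega_j^c} y_j\|_2$ simultaneously over all $N$ columns while keeping it below the $O(1/\sqrt d)$ threshold that the dual-certificate construction can absorb, so a cruder concentration inequality or a looser union bound would immediately lose a factor of $d$ in the tolerable number of missing entries. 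A minor point still to verify --- but not a genuine difficulty --- is that handing the certificate a $Z$ that is correlated with $Y$, rather than adversarial or independent, causes no trouble, precisely because the deterministic hypotheses on $Z$ refer only to its column norms.
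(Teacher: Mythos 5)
Your proposal is correct and follows essentially the same route as the paper: reduce missing entries to additive noise $z_j=-\proj_{U_j}(y_j)$ with $U_j$ a coordinate subspace independent of $Y$, bound $\|z_j\|_2$ by concentration of the norm of the projection of a uniform spherical vector onto an $O(n/d)$-dimensional subspace, union bound over columns, and feed the resulting $\delta = O(1/\sqrt{d})$ into the random-model theorem. The only cosmetic difference is the concentration tool (Laurent--Massart chi-square bounds versus the paper's cited projection lemma, Corollary~3.4 of \cite{barvinok2005math}), and your closing caveat about $Z$ being correlated with $Y$ is resolved exactly as you suspect, via the assumption that the mask is independent of $Y$.
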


\subsection{Comparison to Other Work}\label{sec:compare}
In this section, we briefly discuss the existing literature in relation to our main contributions.
\paragraph{Location-agnostic Estimation.}	
	The robust variant of SSC given in \eqref{lasso_ssc} does not require knowledge of the locations of the corruptions. This is in contrast with recent efforts in subspace clustering with missing data \cite{yang2015,elhamifar2016high,tsakiris2018theoretical} where the location of missing entries is a main ingredient for the estimator. 
		In practice, such information may not be available. For example, in some applications we face {\it presence-only} data where we only record the observed presence of a feature \cite{pearce2006modelling}. In ecological modeling, we often only have access to the observed population presence of a species in a given location but we do not know when a species is absent~\cite{ward2009presence}. Authors in~\cite{burnside2005knowledge} acknowledge that in structured mammography data, there is ambiguity in whether to interpret zeros as missing or as indicating that a breast imaging radiology feature is actually not present. This ambiguity is common in many models and applications \cite{liu2003building,elkan2008learning,fithian2013finite}, yet many proposed subspace clustering algorithms cannot be used in this setting.

	\paragraph{High-rank Matrix Completion.}
		When there are missing entries, subspace clustering can be viewed as a generalization of low-rank matrix completion \cite{candes2012exact}. Unlike low-rank matrix completion, the data matrix may have high rank if there are many subspaces. While there are many algorithms for low-rank matrix completion with theoretical guarantees on convergence and correctness (e.g. see \cite{candes2012exact} and references therein), such analysis has been more elusive for subspace clustering with missing data. Various algorithms for subspace clustering with missing data have been proposed \cite{gruber2004multibody, vidal2004motion, yang2015, pimentel2016group, elhamifar2016high,tsakiris2018theoretical,LADMC_allerton}. Many of these exhibit strong empirical results but lack theoretical guarantees. On the other hand, \cite{eriksson2012high} gives a theoretically justified method for subspace clustering with missing data, but requires an unrealistically large number of samples. Authors in \cite{pimentel2016information} give information-theoretic lower bounds on the number of observations per column required, but it is not known whether the aforementioned methods meet these bounds.

	\paragraph{Other Robust Variants.} 
	The key idea behind SSC is that if $X$ comes from a union of subspaces model, then the columns of $X$ are {\em self-expressive}. That is, each column of $X$ can be expressed as a linear combination of a small number of columns from the same subspace. Mathematically, this means that we can write $X = XC$ where $C$ is sparse and has 0 on the diagonals. If we assume that $X = Y+Z$ where $Y$ comes from a union of subspaces model, then $Y$ satisfies $Y = YC$ for C sparse. Therefore,
	we get $X-XC = Z-ZC = Z(I-C)$. Thus, minimizing a norm of $X-XC$ can be viewed as a proxy for minimizing a norm of $Z$, and prior information on the corruption mechanism can inform us on the design of the estimator. For example, when $Z$ is believed to be sparse, it makes sense to study another robust variant of SSC as 
		\begin{equation}\label{l1_ssc}
		\min_C~\|C\|_{1}+\lambda\|XC-X\|_1 \quad \text{subject~to}~~~ \diag(C) = 0.
		\end{equation}
	
	To analyze SSC and its variants, we often look at vectors $\nu$ that are optimal for the dual program to the SSC optimization. In studying \eqref{lasso_ssc}, the structure of the dual vectors allows us to understand them by analyzing their projection on to the subspaces in our union of subspaces model. However, in studying \eqref{l1_ssc}, this projection-based approach becomes more difficult due to the differing structure of the dual vectors.
		
		An earlier version of this manuscript was flawed in its analysis of \eqref{l1_ssc} due to an error in using the total inradius as opposed to the restricted inradius. We have removed the incorrect analysis of \eqref{l1_ssc} and updated our work to use restricted inradii accordingly when necessary (see Lemma \ref{polytope_rel_radii}, which replaces Lemma \ref{polytope_radii}).

\section{Problem Setup}

		We are given a matrix $X \in \real^{n\times N}$, where $X$ is the sum of an uncorrupted data matrix $Y$ and a noise matrix $Z$. We can view subspace clustering with missing data as a special case of this setup where the unobserved entries of $Y$ are replaced by 0 to obtain $X$. This is equivalent to the setting where the noise matrix $Z$ satisfies $Z_{ij} = -Y_{ij}$ for each missing entry $(i,j)$. We do not assume that we know the locations of the missing data. This way, we allow for a zero entry in $X$ to correspond to an actual zero or to a missing entry. We assume that the columns of $Y$ come from a union of $L$ subspaces
		$$S_1 \cup S_2 \cup \ldots \cup S_{L}.$$
		We make no assumptions on how the subspaces are aligned so that they can intersect arbitrarily.

		The original SSC method first solves the optimization problem in \eqref{eq:ssc}.
		SSC then uses spectral clustering \cite{ng2002spectral} on the affinity matrix $W = |C|+|C|^T$ to recover a partition of samples. 
		The key idea by \cite{elhamifar2009sparse} is that the columns of $X$ are {\it self-expressive}. If the $i$th column $x_i$ lies in a low-dimensional subspace from which we have enough closely-aligned samples, we can express $x_i$ as a sparse linear combination $c_i$ of other columns from that subspace. By enforcing $x_i = Xc_i$ and minimizing $\|c_i\|_1$, we hope to recover that sparse representation. If we approximately recover this $c_i$ for all $x_i$, then spectral clustering on the graph with edge weights from $W$ will recover the correct clusters.

		While this approach has empirical and theoretical guarantees when $X$ is uncorrupted by noise, we have to change our approach when $X$ has missing data or noise added. In order to perform sparse subspace clustering in these kinds of settings, we need to relax the assumption that $X = XC$. We do this by using a loss term of the form $\|X-XC\|_F^2$. This gives us the optimization problem LS-SSC in \eqref{lasso_ssc}. 
		For LS-SSC, \cite{Wang2016} describe a method to solve \eqref{lasso_ssc} using a modification of the ADMM method \cite{boyd2011distributed}. Moreover, \cite{soltanolkotabi2014robust} show that TFOCS \cite{becker2011templates} has competitive performance in solving this optimization program in practical applications. 
		We then apply spectral clustering to the weighted graph corresponding to the affinity matrix $W = |C|+|C|^T$. The resulting clusters are the subspace clusters we return. We use the standard technique of estimating the number of clusters $\hat{L}$ from the spectrum of the normalized Laplacian associated to $W$~\cite{von2007tutorial}. The full algorithm for LS-SSC is given in Algorithm~\ref{alg}.
		\begin{algorithm}
			{\bf Input: }A data matrix $X \in \real^{n\times N}$ and $\lambda > 0$.\\
		    1. Solve
		    $$\min_{C}~\|C\|_{1} + \frac{\lambda}{2}\|X-XC\|_F^2~~\text{s.t.}~~\diag(C)=0.$$\\
		    2. Form the weighted graph $G$ on $N$ vertices with affinity matrix $W = |C|+|C|^T$.\\
		    3. Let $\sigma_1\geq \sigma_2 \geq \ldots \geq \sigma_N$ denote the eigenvalues of the normalized Laplacian of $G$. Set
		    $$\hat{L} = N-\argmax_{i=1,\ldots, N-1} (\sigma_i-\sigma_{i+1}).$$
		    4. Apply spectral clustering to $G$ with $\hat{L}$ clusters.\\
			{\bf Output: }Clusters $\mathcal{X}_1,\ldots,\mathcal{X}_{\hat{L}}$.
			\caption{LS-SSC}
			\label{alg}
		\end{algorithm}

		We wish to find conditions for which LS-SSC has low clustering mismatch error. A good proxy for this is to show that $C$ has no false positives (\cite{von2007tutorial}), that is, for any $i$ and $j$ corresponding to different subspaces, $C_{ij} = 0$. This is reflected in the following definition.
		\begin{defn}[\cite{soltanolkotabi2012}] \label{def:sub-det-prop}
		We say that $X$ {\it obeys the subspace detection property with parameter~$\lambda$} if for all $\ell$ and for all $x_i$ corresponding to $S_\ell$, the columns $c_i$ of the solution to (\ref{lasso_ssc}) have non-zero entries corresponding only to columns in $Y$ sampled from $S_\ell$. We say that the {\it subspace detection property} holds if there is a non-empty interval of values of $\lambda$ for which the subspace detection property with parameter $\lambda$ holds.\end{defn}
		The above definition states that $c_i$ does not contain any entries corresponding to other subspace than the one to which $x_i$ corresponds. If the $c_i$ are non-zero and the subspace detection property holds, we should get low clustering error from spectral clustering.	

	\subsection{Preliminaries}

		Let $Y\in \real^{n\times N}$ be the matrix whose columns are all the uncorrupted samples. Let $N_\ell$ denote the number of columns in $Y$ drawn from $S_\ell$, and $Y^{(\ell)} \in \real^{n\times N_\ell}$ be the submatrix collecting such columns. We let $X^{(\ell)}$ and $Z^{(\ell)}$ denote the corresponding submatrices of $X$ and $Z$. Let $d_\ell$ be the dimension of $S_\ell$ and define $\kappa_\ell \coloneqq N_\ell/d_\ell$.

		For a matrix $A \in \real^{n\times m}$, we let $A_{-i}$ denote the $n \times (m-1)$ submatrix formed by removing the $i$th column. We let $\mathcal{Y} \subseteq \real^n$ denote the set of columns of $Y$ and let $\mathcal{Y}^{(\ell)}$ be the set of columns in $X$ corresponding to $S^{(\ell)}$. We define $\mathcal{X}$ and $\mathcal{X}^{(\ell)}$ analogously.

		For a matrix $A$, let $\mathcal{SC}(A)$ denote the symmetrized convex hull of its columns. If $A$ has columns $a_1,\ldots, a_n$, then $\mathcal{SC}(A)$ is $\conv(\pm a_1,\ldots, \pm a_n)$. We define $$\qil \coloneqq \mathcal{SC}(Y_{-i}^{(\ell)}).$$ Finally, we require some definitions from convex analysis. 		
		Given a set $\mathcal{P}\subseteq \real^d$, the {\it polar set} $\mathcal{P}^\circ$ of $\mathcal{P}$ is defined as
		$$\mathcal{P}^\circ = \{y \in \real^d :~ \langle x,y\rangle \leq 1\text{ for all }x \in \mathcal{P}\}.$$
		Note that $\mathcal{P}^\circ$ is a convex region. 

		\begin{defn}For any closed polytope $\mP$, we let $r(\mP)$ denote the {\it inradius} of $\mP$. This is defined as the radius of the largest Euclidean ball that can be inscribed in $\mP$.\end{defn}

		\begin{defn}For any closed polytope $\mathcal{P}$ and subspace $S$, we let $r_S(\mathcal{P})$ denote the {\it restricted inradius} of $\mathcal{P}$ with respect to $S$. This is defined as the radius of the largest disk in $S$ that can be inscribed in $\mathcal{P}$.\end{defn}

		Suppose $\mP \subseteq \real^n$ is symmetric and convex. Here symmetric means that $\mP = -\mP$. For such polytopes, the largest inscribed ball will necessarily be centered at 0. This follows from the fact that if a ball $B$ of radius $r$ can be inscribed in to $\mP$, then by symmetry, so can $-B$. Taking the convex hull of $B \cup -B$, we necessarily contain the ball of radius $r$ centered at 0. 
		Let $B_r$ denote the Euclidean ball centered at 0 of radius $r$ in $\real^n$. Then for $\mP$ symmetric and convex, we have 
		\begin{align*}
			r(\mP) = \sup\{ r : B_r \subseteq \mP\}~,~~
			r_S(\mP) = \sup \{ r : B_r\cap S \subseteq \mP\}.
		\end{align*}
		We now define the notion of circumradius.

		\begin{defn}For any closed polytope $\mP$, the {\it circumradius} of $\mP$, denoted $R(\mP)$ is the radius of the Euclidean ball containing $\mP$.\end{defn}	

		\begin{defn}For any closed polytope $\mP$, the {\it restricted circumradius} of $\mP$, denoted $R_S(\mP)$ is the radius of the smallest disk in $S$ containing $\mP$.\end{defn}
		With the same notation as above, and assuming $\mP$ is symmetric and convex, we have
		\begin{align*}
			R(\mP) = \inf\{ r : B_r \supseteq \mP\} ~,~~
			R_S(\mP) = \inf \{ r : B_r \cap S \supseteq \mP\}.
		\end{align*}
		For notational convenience, we define
		\begin{align}\label{def:r-ell}
			r_\ell \coloneqq \min_{i: x_i \in \xl}r_{S_\ell}(\qil) ~~,~~~
			r \coloneqq \min_{\ell=1,\ldots,L} r_\ell. 
		\end{align}
Finally, for a given subspace $S\subseteq \mathbb{R}^n$ and a point $x\in\mathbb{R}^n$, we denote by $\proj_S(x)$ the orthogonal projection of $x$ onto $S$. Given a matrix $A$, we overload the notation to denote by $\proj_S(A)$ the matrix which collects the projection of columns of $A$ onto $S$.

	\subsection{Dual Directions and Incoherence}\label{section:incoherence}

	    Given a vector $x$ and a matrix $A$, we define an optimization problem denoted $P(x,A,\lambda)$ as
	    \begin{equation}\label{eq:gen_lasso}
	    \min_{c,e}~\|c\|_1 + \frac{\lambda}{2}\|e\|_2^2~~\text{s.t.}~~e = x-Ac,\end{equation}
	    and its Lagrangian dual $D(x,A,\lambda)$ is given by
	    \begin{equation}\label{eq:dual_lasso}
	    \max_\nu~\langle x,\nu\rangle - \frac{1}{2\lambda}\|\nu\|_2^2~~\text{s.t.}~~\|A^T\nu\|_\infty \leq 1.\end{equation}	    

	    The optimization problem for LS-SSC in (\ref{lasso_ssc}) is equivalent to solving $P(x_i,X_{-i},\lambda)$ for $i = 1,\ldots, N$. We will use the dual program to analyze the geometry underlying our problem. For a given $x, A, \lambda$, let $\nu$ be the solution to $D(x,A,\lambda)$. If there are multiple solutions, select the one with the smallest $\ell_2$ norm. The corresponding {dual \em direction} $v$ is defined by
	     $$v(x,A,\lambda) = \nu/\|\nu\|_2.$$
		Define
		\begin{align}\label{def:incoh-us}
		v_i^{(\ell)} \coloneqq v(x_i^{(\ell)},\xil,\lambda) ~~\text{and}~~
		V^{(\ell)} \coloneqq [v_1^{(\ell)},\ldots, v_{N_\ell}^{(\ell)}].
		\end{align}
		We say that the set $\xl$ is {\it $\mu$-incoherent} with respect to the set $\mathcal{X}\back\mathcal{X}_{-i}^{(\ell)}$ if
		\begin{equation}\label{def:incoherence}
		\mu \geq \mu(\xl) \coloneqq \max_{y \in \mathcal{Y}\back\mathcal{Y}^{(\ell)}} \|(V^{(\ell)})^Ty\|_\infty
		= \max_{ \substack{y \in \mathcal{Y}\back\mathcal{Y}^{(\ell)} \\ 1 \leq i \leq N_\ell } } |\langle v_i^{(\ell)},y\rangle|
		\end{equation}
		For notational convenience, we consider
		\begin{align}\label{def:mu-ell}
			\mu_\ell \coloneqq \mu(\xl) ~~,~
		\mu \coloneqq \max_{\ell=1,\ldots,L} \mu_\ell.
		\end{align}
		This parameter $\mu$ is referred to as the {\em incoherence}. It is a measure of the alignment between the true observations $\mathcal{Y}^{(i)}$ from each subspace $S_i$ and the corrupted observations $\mathcal{X}^{(j)}$ from $S_j$ for $j \neq i$. Intuitively, the smaller $\mu$ is, the less aligned $\mathcal{Y}^{(i)}$ and $\mathcal{X}^{(j)}$ are. If $\mu$ is small enough, then it should be easier to group observations from distinct subspaces into distinct clusters. In Figure~\ref{fig}, we give a pictorial explanation of the subspace incoherence.

\begin{figure}
    \centering
    \begin{tabular}{cc}
        \includegraphics[width=.35\textwidth]{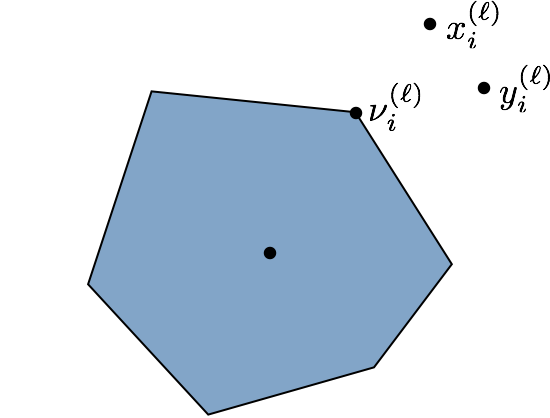}	&  \includegraphics[width=.5\textwidth]{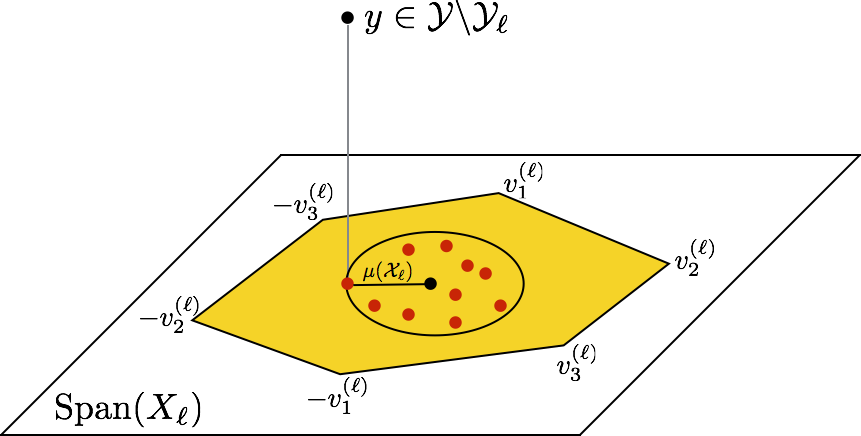}
    \end{tabular}
    \caption{Left: The dual direction $v(x_{i}^{(\ell)}, X_{-i}^{(\ell)},\lambda)$, where $x_{i}^{(\ell)}$ is a corrupted version of the true observation $y_{i}^{(\ell)}$. Right: The subspace incoherence $\mu_\ell$ is the radius of the smallest sphere in the span of $X^{(\ell)}$ containing all projections of $y \in \mathcal{Y}\backslash \mathcal{Y}^{(\ell)}$ onto the polytope determined by the dual directions.}
    \label{fig}
\end{figure}

This definition of subspace incoherence is a generalization of the subspace incoherence defined by \cite{soltanolkotabi2012} to the noisy setup described above. If there is no noise in the samples then for $\lambda$ sufficiently large these definitions of subspace incoherence will specialize to the definition in~\cite{soltanolkotabi2012}. 

On the other hand, our definition of incoherence is different from the one in \cite{Wang2016}, which is 
\begin{align}\label{def:incoh-WX}
v_i^{(\ell)} = \frac{\proj_{S_\ell}(\nu)}{\|\proj_{S_\ell}(\nu)\|_2} ~~~~ \text{(in \cite{Wang2016})}
\end{align}
where $\nu$ is the optimal solution to $D(x_i^{(\ell)},\xil,\lambda)$. This difference leads to different results for deterministic scenarios; i.e.,  Theorem 6 in \cite{Wang2016} and Theorem \ref{det_thm} below. 
Straightforward applications of both deterministic results to standard random generative models show that our results lead to improved tolerance to noise and missing entries.
We elaborate on these differences in Section~\ref{sec:compareWX}.

\section{Main Results}

	\subsection{Deterministic Model}		

		Let $Y$ be a matrix of samples drawn from a union of subspaces model. We assume that $Y$ is self-expressive, so that every column of $Y$ can be expressed as a linear combination of other columns from the same subspace. Let $Z$ be a deterministic noise matrix. We observe $X = Y+Z$. We assume that each column of $X$ has at least 1 non-zero entry. Define
		$$\delta = \max_i \|z_i\|_2.$$
		We will characterize how large $\delta$ can be for the subspace detection property to hold. For ease of analysis we assume, as in \cite{soltanolkotabi2012}, that each column $y$ of $Y$ lies on the unit sphere.\footnote{The results below all generalize to the case that $Y$ is unnormalized. The results will depend on the gap between the largest and smallest norm of columns in $Y$.}

		The following theorem gives conditions on the subspaces and noise under which LS-SSC will have the subspace detection property and produce a non-trivial output. Recall the definitions of~$r$ in~\eqref{def:r-ell} and $\mu$ in~\eqref{def:mu-ell}.

		\begin{theorem}[Deterministic model criteria]\label{det_thm}Suppose that
		\begin{equation}\label{det_crit}
		\delta < \dfrac{r-\mu}{5}\end{equation}
		and $\lambda$ lies in the non-empty interval
		\begin{equation}\label{lambda_crit}
		\dfrac{5}{2r + 3\mu} < \lambda < \frac{15}{2r+8\mu}.\end{equation}
		Then the subspace detection property with parameter $\lambda$ will hold. Moreover, we are guaranteed that any optimal $C$ for \eqref{lasso_ssc} have all nonzero columns. 
		\end{theorem}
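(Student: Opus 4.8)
The plan is to establish the subspace detection property via a dual-certificate argument, following the standard SSC analysis template but exploiting the modified (non-projected) dual direction from \eqref{def:incoh-us}. The overall logic has two halves: (i) show that for each $x_i \in \xl$, the solution $c_i$ to $P(x_i, X_{-i}, \lambda)$ has support contained in columns from $S_\ell$ (no false positives), and (ii) show $c_i \neq 0$ (nontrivial output). For half (i), I would use the standard sufficient condition: if there exists a feasible dual point $\nu$ for the restricted problem $D(x_i, \xil, \lambda)$ (i.e., $\nu$ constrained to be optimal for the within-subspace problem) such that $|\langle \nu, x_j\rangle| < 1$ strictly for every $x_j \notin \xl$, then by strong duality and complementary slackness the optimal $c_i$ for the full problem $P(x_i, X_{-i}, \lambda)$ must coincide with the restricted solution and hence have no support outside $S_\ell$. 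The natural certificate is $\nu = \nu_i^{(\ell)}$, the min-norm optimal dual of the restricted problem.

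The key quantitative steps are then: first, bound $\|\nu_i^{(\ell)}\|_2$ from above and below. The lower bound comes from the KKT stationarity relation: since $c_i$ for the restricted problem is nonzero with support in $S_\ell$, we have $\nu = \lambda(x_i - \xil c)$ and $\|(\xil)^T \nu\|_\infty = 1$ on the support, which — combined with the restricted inradius $r_{S_\ell}(\qil) \geq r$ and the fact that $x_i$ is within $\delta$ of the unit sphere — gives $\|\nu_i^{(\ell)}\|_2 \gtrsim$ something like a lower bound driven by $r$; the upper bound follows from feasibility of a suitable scaled primal point and the objective comparison, giving $\|\nu_i^{(\ell)}\|_2 \lesssim 1/r$ up to the noise correction $\delta$. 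Second, decompose $\langle \nu_i^{(\ell)}, x_j \rangle = \langle \nu_i^{(\ell)}, y_j\rangle + \langle \nu_i^{(\ell)}, z_j\rangle$ for $x_j = y_j + z_j \notin \xl$. The first term is $\|\nu_i^{(\ell)}\|_2 \cdot \langle v_i^{(\ell)}, y_j\rangle \leq \|\nu_i^{(\ell)}\|_2 \cdot \mu$ by the incoherence definition \eqref{def:incoherence}; the second is at most $\|\nu_i^{(\ell)}\|_2 \cdot \delta$ by Cauchy–Schwarz. So the false-positive-free condition reduces to $\|\nu_i^{(\ell)}\|_2 (\mu + \delta) < 1$, and plugging in the upper bound $\|\nu_i^{(\ell)}\|_2 \lesssim 1/(r - \delta)$ or similar yields a condition of the form $\mu + \delta < c(r - \delta)$, which rearranges to the hypothesis $\delta < (r-\mu)/5$ together with the $\lambda$-range in \eqref{lambda_crit}. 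The $\lambda$ interval enters because the min-norm dual $\nu_i^{(\ell)}$ has norm controlled by $\min(\lambda \cdot \text{const}, 1/r \cdot \text{const})$: small $\lambda$ makes $\|\nu\|_2 \approx \lambda\|x_i\|_2$ too small to certify optimality of a nonzero $c_i$ (breaking nontriviality), while large $\lambda$ makes $\|\nu\|_2$ large enough that $\|\nu\|_2(\mu+\delta) \geq 1$ (breaking no-false-positives); the stated interval is exactly where both hold.

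For half (ii), nontriviality, I would argue that $c_i = 0$ is optimal for $P(x_i, X_{-i}, \lambda)$ only if the zero subgradient condition $\|\lambda X_{-i}^T x_i\|_\infty \leq 1$ holds; since some column of $\xil$ has inner product with $x_i$ bounded below (again using the inradius $r$ and $\|x_i\|_2 \approx 1$, and the fact that $x_i$'s projection onto $S_\ell$ is sizeable relative to $r$), one gets $\|X_{-i}^T x_i\|_\infty \geq $ roughly $r/$something $- \delta$, so for $\lambda > 5/(2r+3\mu)$ this exceeds $1$ and forces $c_i \neq 0$. Combining with half (i), every column of the optimal $C$ is nonzero and supported within the correct subspace block.

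The main obstacle I anticipate is the two-sided control of $\|\nu_i^{(\ell)}\|_2$ in the presence of noise — in particular getting a clean lower bound on the relevant projection $\|\proj_{S_\ell}(x_i)\|_2$ or on $\langle v_i^{(\ell)}, y_i^{(\ell)}\rangle$-type quantities when $x_i$ is a $\delta$-perturbation of a unit vector, and making sure the geometric quantity that appears is genuinely the restricted inradius $r_{S_\ell}(\qil)$ rather than the full inradius (this is precisely the subtlety flagged in the erratum remark about Lemma~\ref{polytope_rel_radii} versus Lemma~\ref{polytope_radii}). A secondary technical point is verifying that the certificate $\nu_i^{(\ell)}$, optimal for the \emph{restricted} problem, is actually \emph{feasible} and \emph{optimal} for the \emph{full} problem — this requires the strict inequality $|\langle \nu_i^{(\ell)}, x_j\rangle| < 1$ for all $j \notin \xl$, and then an appeal to uniqueness/complementary slackness to conclude the full-problem solution has no support there; handling the case of non-unique primal solutions (where one must be careful that \emph{every} optimal $C$ has the property, as the theorem claims) needs the standard argument that the set of optimal supports is constrained by the dual certificate.
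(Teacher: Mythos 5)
Your plan reproduces the architecture of the paper's proof: a dual certificate built from the restricted problem $P(x_i,\xil,\lambda)$ and its dual (Lemma~\ref{lem:opt_cond}), the reduction of the no-false-positives condition to $(\mu(\xl)+\delta)\|\nu\|_2<1$ via the split $\langle\nu,x_j\rangle=\langle\nu,y_j\rangle+\langle\nu,z_j\rangle$ with the incoherence bounding the first term and Cauchy--Schwarz the second (Lemma~\ref{suff_det_cond}), and the nontriviality argument by contradiction from $\nu=\lambda x_i$ together with the restricted-inradius lower bound on $\|(Y_{-i}^{(\ell)})^Ty_i\|_\infty$ (Lemmas~\ref{lem:dual-Qnorm-inradius} and~\ref{lambda_lower2}). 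You also correctly flag the restricted-versus-full inradius subtlety. All of that matches the paper.

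The one step you leave under-specified is the quantitative bound on $\|\nu\|_2$, and as sketched it would not deliver the stated constants in \eqref{det_crit} and \eqref{lambda_crit}. You propose either $\|\nu\|_2\lesssim 1/(r-\delta)$ ``from feasibility of a suitable scaled primal point and the objective comparison'' or a bound of the shape $\min(\lambda\cdot\mathrm{const},\,\mathrm{const}/r)$. Neither has the right form: a $\lambda$-free bound cannot produce the upper endpoint $15/(2r+8\mu)$ of the $\lambda$ interval (you yourself attribute that endpoint to $\|\nu\|_2$ growing with $\lambda$), and a direct objective comparison only controls $\tfrac{1}{2\lambda}\|\nu\|_2^2$, giving $\|\nu\|_2\lesssim\sqrt{2\lambda/r}+\lambda\delta(1+1/r)$, whose $\sqrt{\lambda}$ term is the wrong shape and yields different constants. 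The paper instead decomposes $\nu=\nu_1+\nu_2$ with $\nu_1=\proj_{S_\ell}(\nu)$: the in-subspace part is bounded by $\|\nu_1\|_2\leq(1+\delta\|\nu_2\|_2)/(r_{S_\ell}(\qil)-\delta)$ via the polar set and the restricted circumradius identity $r_S(\mT)R_S(\mT^\circ\cap S)=1$ together with the perturbation Lemma~\ref{perturb_rad} (this part does \emph{not} grow with $\lambda$), while the orthogonal part satisfies $\|\nu_2\|_2=\lambda\|\proj_{S_\ell^\perp}(z_i-Z_{-i}^{(\ell)}c_i)\|_2\leq\lambda\delta(1+\|c_i\|_1)$, with $\|c_i\|_1\leq 1/r_{S_\ell}(\qil)$ obtained by comparing against a feasible point built from the noiseless $\ell_1$ program (Lemma~\ref{lem:nu2-norm}). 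Combining gives $\|\nu\|_2\leq(1+\lambda\delta(1+r))/(r-\delta)$ (Lemma~\ref{nu_norm}), whose linear-in-$\lambda\delta$ growth is exactly what turns $(\mu+\delta)\|\nu\|_2<1$ into the condition $2\lambda\delta<(r_\ell-\mu_\ell-2\delta)/(\mu_\ell+\delta)$ of Lemma~\ref{intermediate_det}, and hence, after the elementary chain of inequalities under $\delta<(r-\mu)/5$, into the interval \eqref{lambda_crit}. Without this orthogonal decomposition --- note it is a decomposition of the dual \emph{vector}, not a projection of the dual \emph{direction} as in \cite{Wang2016} --- the proof does not close with the stated thresholds.
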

		This theorem (proved in Appendix~\ref{app:pf_det_thm}) gives conditions that guarantee when $c_i$ will not have any false positives. It then refines this to find $\lambda$ for which the $c_i$ will also be non-trivial. 
When $\delta = 0$, the condition in \eqref{det_crit} reduces to having $\mu < r$, which is similar to the deterministic criteria in \cite{soltanolkotabi2012}.

		We will refer to the condition in (\ref{det_crit}) as the {\em geometric separation condition}. We will show that under random model assumptions, the geometric separation condition will hold with high probability.

	\subsection{Random Model}
		
		In the following, we define a random generative model for the uncorrupted points $Y$ on $L$ subspaces. 
		We then add a noise matrix $Z$ and impose Assumption \ref{indep_assumption} which comes after. 
		
		\begin{defn}[Random Model]\label{def:rand_model}
		Given a number $L\geq 1$, integer parameters $d_1,\ldots,d_L$, and positive parameters $\kappa_1,\ldots, \kappa_L$, the random generative model $\mathcal{RM}(L,\{d_\ell\}, \{\kappa_\ell\})$ is defined as the set of points generated as follows:  
		each subspace is drawn independently and uniformly at random among all $d_\ell$-dimensional subspaces, and $N_\ell = \kappa_\ell d_\ell$ samples are drawn independently and uniformly from the intersection of the subspace and the unit sphere. 
		\end{defn}

		\begin{assumption}\label{indep_assumption}
			For columns $y_i$ and $y_j$ drawn from distinct subspaces, $z_i$ is independent from $y_j$. In other words, the noise added to the points in one subspace is independent from the points in other subspaces. 
		\end{assumption}
		This assumption will appear in bounding the incoherence parameter. 
		As before, let $\delta = \max_{i} \|z_i\|_2$. The theorem below (proved in Appendix~\ref{app:pf_rand_thm}) gives conditions on the noise under which the subspace detection property holds and the columns of the optimal solution are non-trivial with high probability.

		\begin{theorem}[Random model criteria]\label{rand_thm}
Assume $N=\sum_{\ell=1}^L$ samples are drawn from $\mathcal{RM}(L,\{d_\ell\}, \{\kappa_\ell\})$ and stored in $Y$. Assume $Z$ is a noise matrix that satisfies Assumption~\ref{indep_assumption} with respect to~$Y$. 
		There are absolute constants $c_1, c_2$ such that, if for all $\ell$,
		\begin{equation}\label{rand_dl_cond}
		d_\ell < \dfrac{c_1\const(\kappa_\ell)^2\log(\kappa_\ell)}{\log N} n \end{equation}
		and
		\begin{equation}\label{rand_delta_cond}
		\delta < c_2\const(\kappa_\ell)\sqrt{\dfrac{\log(\kappa_\ell)}{d_\ell}},\end{equation}
		then with probability at least
		$$1-\frac{2}{N}-\sum_{\ell=1}^L N_\ell e^{-\sqrt{\kappa_\ell}d_\ell}$$
		the subspace detection property holds and the output of LS-SSC is non-trivial for all $\lambda$ satisfying
		$$\frac{5}{7}\sqrt{\dfrac{n}{6 \log N}} < \lambda < \frac{10}{3}\sqrt{\dfrac{n}{6 \log N}}.$$
		\end{theorem}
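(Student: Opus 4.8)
The plan is to derive Theorem~\ref{rand_thm} from the deterministic criterion Theorem~\ref{det_thm}: it is enough to show that, on an event of the stated probability, the samples drawn from $\mathcal{RM}(L,\{d_\ell\},\{\kappa_\ell\})$ together with the noise $Z$ satisfy the geometric separation condition~\eqref{det_crit}, $\delta<(r-\mu)/5$, and moreover that $r$ and $\mu$ are controlled tightly enough that the displayed $\lambda$-window lies inside the admissible interval~\eqref{lambda_crit}. So the whole argument reduces to two high-probability estimates: a lower (and matching upper) bound on the restricted inradius $r$ of~\eqref{def:r-ell}, and an upper bound on the incoherence $\mu$ of~\eqref{def:mu-ell}.

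First I would bound $r$. Since $\qil=\mathcal{SC}(Y_{-i}^{(\ell)})$ is built from the \emph{noiseless} columns, $r_{S_\ell}(\qil)$ is exactly the inradius, taken inside the $d_\ell$-dimensional subspace $S_\ell$, of the symmetric convex hull of the $N_\ell-1$ i.i.d.\ uniform points of $Y_{-i}^{(\ell)}$ on $S_\ell\cap S^{n-1}$. The standard concentration bound for inradii of random symmetric polytopes (as used in~\cite{soltanolkotabi2012}) then gives $r_{S_\ell}(\qil)\ge \const(\kappa_\ell)\sqrt{\log(\kappa_\ell)/d_\ell}$ with probability at least $1-N_\ell e^{-\sqrt{\kappa_\ell}\,d_\ell}$ --- precisely the source of the $\sum_\ell N_\ell e^{-\sqrt{\kappa_\ell}d_\ell}$ term in the failure probability --- together with a matching upper bound of the same order; a union bound over $\ell$ and $i\le N_\ell$ then controls $r=\min_\ell r_\ell$ from both sides.

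The core of the argument is the bound on $\mu$. The dual direction $v_i^{(\ell)}=v(x_i^{(\ell)},\xil,\lambda)$ is a measurable (though non-explicit) function of $S_\ell$, $Y^{(\ell)}$ and $Z^{(\ell)}$ alone. By Assumption~\ref{indep_assumption} and the independence of the subspaces in the random model, $v_i^{(\ell)}$ is therefore independent of every $y\in\mathcal{Y}\back\mathcal{Y}^{(\ell)}$. Conditioning on $v_i^{(\ell)}$, a fixed unit vector of $\real^n$, and using that a uniform point on a uniformly random $d$-dimensional subspace intersected with $S^{n-1}$ is marginally uniform on $S^{n-1}$, the scalar $\langle v_i^{(\ell)},y\rangle$ has the law of one coordinate of a uniform point on $S^{n-1}$, hence obeys a sub-Gaussian tail of the form $\prob(|\langle v_i^{(\ell)},y\rangle|>t)\le 2e^{-nt^2/2}$. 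Taking $t\asymp\sqrt{\log N/n}$ and a union bound over the at most $N^2$ quadruples $(\ell,i,\ell',j)$ with $\ell\ne\ell'$ yields $\mu\le c\sqrt{\log N/n}$ with probability at least $1-2/N$, the remaining term in the claimed failure probability.

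On the intersection of these two good events the conclusion follows by bookkeeping: condition~\eqref{rand_dl_cond}, with $c_1$ small enough to absorb the inradius constant, forces $\mu$ to be at most a fixed small fraction of $r$, so $r-\mu$ is bounded below by a constant multiple of $r\ge \const(\kappa_\ell)\sqrt{\log(\kappa_\ell)/d_\ell}$; then~\eqref{rand_delta_cond}, with $c_2$ correspondingly small, gives $\delta<(r-\mu)/5$, i.e.\ \eqref{det_crit}; and inserting the two-sided bound on $r$ and the upper bound on $\mu$ into~\eqref{lambda_crit} and simplifying the numerical constants places the stated window $\big(\tfrac57\sqrt{n/(6\log N)},\tfrac{10}{3}\sqrt{n/(6\log N)}\big)$ inside the admissible interval, so Theorem~\ref{det_thm} applies. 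I expect the incoherence estimate to be the main obstacle: the delicate points are the measurability/independence bookkeeping --- showing $v_i^{(\ell)}$ is $\sigma(S_\ell,Y^{(\ell)},Z^{(\ell)})$-measurable (this is exactly where Assumption~\ref{indep_assumption} enters) and doing so uniformly in $\ell$ so a single union bound suffices --- and verifying that, after conditioning, $\langle v_i^{(\ell)},y\rangle$ genuinely has the claimed spherical law and sub-Gaussian tail; matching the constants of~\eqref{lambda_crit} to the explicit $\lambda$-window, which is why the upper bound on $r$ is needed as well, is a secondary computational nuisance.
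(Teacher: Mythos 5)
Your proposal follows essentially the same route as the paper's proof: reduce to Theorem~\ref{det_thm}, lower-bound $r$ via the inradius concentration result of \cite{soltanolkotabi2012} (the source of the $\sum_\ell N_\ell e^{-\sqrt{\kappa_\ell}d_\ell}$ term), upper-bound $\mu$ by the spherical-cap tail $2e^{-n\epsilon^2/2}$ with $\epsilon=\sqrt{6\log N/n}$ and a union bound over all cross-subspace pairs (the $2/N$ term), and then substitute these bounds together with \eqref{rand_delta_cond} into \eqref{det_crit} and \eqref{lambda_crit}. The only deviation is your claim that a matching \emph{upper} bound on $r$ is needed to place the stated $\lambda$-window inside the admissible interval; the paper does not establish or use such an upper bound and instead plugs the one-sided bounds directly into the interval of Lemmas~\ref{intermediate_det} and~\ref{lambda_lower2}.
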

		
	Here, $\const(\kappa)$ is a constant depending only on $\kappa$. It is the same constant $\const(\kappa)$ as that in Section 1.4.2~of \cite{soltanolkotabi2012}. They also show that for all $\kappa$ sufficiently large we can take $\const(\kappa)$ to be a constant. Moreover, in many reasonable scenarios, we can treat $\const(\kappa_\ell)$ as a small constant.

		It is also worth noting that the condition in \eqref{rand_dl_cond} is, up to constants, the same as the condition on $d_\ell$ for noiseless subspace clustering \cite{soltanolkotabi2012}. Therefore, for $\delta$ sufficiently small, we recover the sufficient condition in \cite{soltanolkotabi2012} on $d_\ell$ for subspace detection under the same random model.

	\subsection{Missing Data}

		As in the random model, we assume that the $L$ subspaces are chosen independently and uniformly at random, and that the points $y_i$ are drawn randomly from the unit ball in $S_\ell$. Our data matrix $X$ satisfies $X = Y+Z$ where, $Z_{ij} = -Y_{ij}$ if we do not view $Y_{ij}$ and zero otherwise. In other words, $X$ is the entry-wise zero fill of $Y$ in the missing entries.

		We do not assume that we know the locations of the missing entries. A zero entry in $X$ could be because we do not observe that entry, or because there is a zero in $Y$ there. This allows LS-SSC to be used in more general settings such as with presence-only data.

Denote by $\odot$ the Hadamard (entry-wise) product. 
		\begin{assumption}\label{missing_assump}
		Given uncorrupted samples $Y\in\real^{n\times N}$ that are generated according to a random generative model, we observe $X = Y\odot \Omega$ where the mask matrix $\Omega\in\{0,1\}^{n\times N}$ is generated independently from $Y$. 
		\end{assumption}

		The following theorem (proved in Appendix~\ref{app:pf_scmd_rand}) gives conditions under which LS-SSC succeeds in this model with high probability.

		\begin{theorem}[Missing data criteria]\label{scmd_rand}
	    Assume $N=\sum_{\ell=1}^L$ samples are drawn from the random generative model $\mathcal{RM}(L,\{d_\ell\}, \{\kappa_\ell\})$ where 
		\begin{equation}\label{missing_d_cond}
		d_\ell < \dfrac{c_1\const(\kappa_\ell)^2\log(\kappa_\ell)}{\log N} n \end{equation}	    
	    Moreover, assume that the observed data matrix $X$ satisfies Assumption~\ref{missing_assump} and each column corresponding to $S_\ell$ is missing at most $M_\ell$ entries, where
	    $$ M_\ell \coloneqq c_3\const(\kappa_\ell)^2\log(\kappa_\ell)\dfrac{n}{d_\ell}. $$
	    Then, with probability at least
	    $$1-\frac{2}{N}-\sum_{\ell=1}^L N_\ell e^{-\sqrt{\kappa_\ell}d_\ell}-2\sum_{\ell=1}^L N_\ell e^{-M_\ell/16},$$
	    the subspace detection property holds and the output of LS-SSC is non-trivial for all $\lambda$ satisfying
		$$\frac{5}{7}\sqrt{\dfrac{n}{6 \log N}} < \lambda < \frac{10}{3}\sqrt{\dfrac{n}{6 \log N}}.$$\end{theorem}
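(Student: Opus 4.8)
\emph{Proof plan.} The idea is that Theorem~\ref{scmd_rand} is Theorem~\ref{rand_thm} specialized to the corruption $Z$ induced by erasures, the only genuinely new ingredient being a concentration bound on the noise level $\delta=\max_i\|z_i\|_2$. Here each column $z_i$ of $Z$ agrees with $-y_i$ on the coordinates missing from column $i$ and vanishes elsewhere.

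\textbf{Step 1 (reduction).} First I would check that Assumption~\ref{missing_assump} implies Assumption~\ref{indep_assumption}: for columns $y_i,y_j$ from distinct subspaces, $z_i$ is a deterministic function of $y_i$ and of the $i$th column of $\Omega$, both of which are independent of $y_j$ (distinct subspaces are drawn independently, samples are drawn independently within each subspace, and $\Omega\perp Y$). Since the dimension condition~\eqref{missing_d_cond} coincides with~\eqref{rand_dl_cond}, it then suffices, following the proof of Theorem~\ref{rand_thm}, to show that on a high-probability event the geometric separation condition~\eqref{det_crit} of Theorem~\ref{det_thm} holds. The proof of Theorem~\ref{rand_thm} already produces, from the randomness of $Y$ alone, an event of probability at least $1-\tfrac2N-\sum_\ell N_\ell e^{-\sqrt{\kappa_\ell}d_\ell}$ on which $r\gtrsim \const(\kappa_\ell)\sqrt{\log(\kappa_\ell)/d_\ell}$ and $\mu$ is correspondingly small; this step uses Assumption~\ref{indep_assumption} and~\eqref{rand_dl_cond}, and one checks that the dual direction $v_i^{(\ell)}$, being a function of the columns and sub-mask of subspace $\ell$ only, is independent of samples $y\in\mathcal{Y}\backslash\mathcal{Y}^{(\ell)}$, so the incoherence bound carries over verbatim. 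What remains is to bound $\delta$.

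\textbf{Step 2 (bounding $\delta$).} Fix a column $i$ lying in $S_\ell$ and let $T_i$ be its set of missing coordinates, so $|T_i|\le M_\ell$ and $\|z_i\|_2^2=\sum_{j\in T_i}y_{ij}^2$. Because a uniformly random unit vector in a uniformly random $d_\ell$-dimensional subspace of $\real^n$ is uniformly distributed on $S^{n-1}$, and because $\Omega\perp Y$, I may condition on $T_i$ and treat $y_i$ as uniform on $S^{n-1}$ (the unit-ball case being analogous and only more favorable). Writing $y_i=g/\|g\|_2$ with $g\sim N(0,I_n)$ gives $\|z_i\|_2^2=\|g_{T_i}\|_2^2/\|g\|_2^2$ with $\|g_{T_i}\|_2^2\sim\chi^2_{|T_i|}$ and $\|g\|_2^2\sim\chi^2_n$. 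Standard $\chi^2$ tail bounds (Laurent--Massart) give $\|g_{T_i}\|_2^2\le 2M_\ell$ and $\|g\|_2^2\ge n/2$, each with probability at least $1-e^{-M_\ell/16}$ (the second using $M_\ell\le n$, which holds in the regime $d_\ell\gtrsim\const(\kappa_\ell)^2\log\kappa_\ell$ of interest). Hence $\|z_i\|_2^2\le 4M_\ell/n$ with probability at least $1-2e^{-M_\ell/16}$, and a union bound over the $N_\ell$ columns of each subspace and over $\ell$ yields $\delta^2\le 4\max_\ell M_\ell/n$ with probability at least $1-2\sum_\ell N_\ell e^{-M_\ell/16}$.

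\textbf{Step 3 (constants and conclusion).} Plugging in $M_\ell=c_3\const(\kappa_\ell)^2\log(\kappa_\ell)\,n/d_\ell$ gives $\delta^2\le 4c_3\max_\ell\const(\kappa_\ell)^2\log(\kappa_\ell)/d_\ell$; choosing $c_3$ small enough relative to $c_2$ and the absolute constants of Step~1 makes this at most $((r-\mu)/5)^2$ on the Step~1 event, i.e.\ establishes~\eqref{rand_delta_cond} and hence~\eqref{det_crit}. Intersecting the Step~1 and Step~2 events costs only an additive $2\sum_\ell N_\ell e^{-M_\ell/16}$, giving the claimed probability, and on that intersection Theorem~\ref{det_thm} yields the subspace detection property together with all-nonzero columns for every $\lambda$ in $\bigl(\tfrac57\sqrt{n/(6\log N)},\,\tfrac{10}{3}\sqrt{n/(6\log N)}\bigr)$, the same interval as in Theorem~\ref{rand_thm} since it depends only on $n$ and $N$. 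The main obstacle is the bookkeeping of entangled randomness: $r,\mu$ depend on $Y$ (and $\mu$ on the sub-masks of $\Omega$) while $\delta$ depends on both $Y$ and $\Omega$, so one must route the argument through the deterministic criterion of Theorem~\ref{det_thm} rather than invoke Theorem~\ref{rand_thm} as a black box, in order to combine the geometry event and the $\delta$ event with only an additive loss in probability.
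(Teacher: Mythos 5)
Your proposal is correct and follows essentially the same route as the paper: reduce to the random-model analysis (equivalently, the deterministic criterion of Theorem~\ref{det_thm}), bound $\delta$ by concentration of $\|\proj_U(y)\|_2$ for $y$ uniform on the sphere and $U$ the span of the missing coordinates, and union bound over columns. The only difference is that the paper invokes the cited projection lemma (Lemma~\ref{proj_lem}) for the concentration step, whereas you rederive the same estimate via the Gaussian representation and $\chi^2$ tails, landing on the identical bound $\|z_i\|_2^2\leq 4M_\ell/n$ with per-column failure probability $2e^{-M_\ell/16}$ and the same constant choice $c_3=c_2^2/4$.
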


	    Here, the constant $c_1$ is the same as in Theorem \ref{rand_thm}. This condition says that if the dimensions $d_\ell$ of our subspaces obey the same condition required for the noiseless SSC to succeed, then LS-SSC will succeed in the presence of $O(n/d)$ missing entries per column with high probability. If $d$ is constant with respect to $n$, then this says that LS-SSC can tolerate a constant fraction of missing entries in each column.

\subsection{Comparison with \cite{Wang2016}}\label{sec:compareWX}

Our alternative definition for the incoherence parameter is what led to the deterministic criteria in Lemmas \ref{intermediate_det} and \ref{lambda_lower} different from the ones in Section 5.3.3~of \cite{Wang2016}. Then, in combining these conditions for subspace detection property and non-triviality of the solutions, we get a different condition on the noise level $\delta$ in Theorem~\ref{det_thm}, which is
\begin{align}\label{eq:us}
\delta < \dfrac{r-\mu}{5},
\end{align}
than the one in Theorem 6 of \cite{Wang2016}, which is
\begin{align}\label{eq:WX}
\delta < \min_{\ell=1,\ldots, L} \frac{r(r_\ell-\mu_\ell)}{2+7r_\ell}.
\end{align}
While our definition of $\mu_\ell$ is different from \cite{Wang2016}, in the random model similar arguments can be used to bound both. The only property of the dual directions used in these arguments is the unit $\ell_2$ norm, which holds for both definitions of the dual direction (in \eqref{def:incoh-us} and \eqref{def:incoh-WX}). 
Under the assumption on the dimensions of the subspaces specified in \eqref{rand_dl_cond} and \eqref{missing_d_cond}, which is the same as the assumption in Theorem~11 in \cite{Wang2016} up to a constant factor, we can derive high-probability bounds on $r_\ell$ and $\mu_\ell$ (provided in \eqref{eq:r-ell-mu-ell} in Appendix \ref{app:pf_rand_thm}) as 
\begin{align*}
	    r_\ell &\geq \dfrac{\const(\kappa_\ell)\sqrt{\log(\kappa_\ell)}}{\sqrt{2d_\ell}}
	     \geq\sqrt{ \dfrac{24\log N }{n}}
	     \geq 2\mu_\ell.
\end{align*}
Plugging these bounds in \eqref{eq:us} yields
\begin{align}
\delta \lesssim \sqrt{\frac{1}{d_\ell}}
\end{align}
and allows for $M_\ell \simeq \frac{n}{d_\ell}$ missing entries per sample using this paper's analysis approach. 
Plugging the bounds in \eqref{eq:WX} yields
\begin{align}
\delta \lesssim \frac{1}{d_\ell}
\end{align}
which allows for $M_\ell \simeq \frac{n}{d_\ell^2}$ missing entries per sample using the \cite{Wang2016} analysis approach. 
This is an important distinction in high-dimensional settings, where our results indicate the ability of the method to recover from missing entries when the dimension of each subspace is as large as the ambient dimension, up to a constant factor, while a straightforward application of the results in \cite{Wang2016} require $d_\ell \lesssim \sqrt{n}$.

It is worth mentioning that in the noiseless setup of \cite{soltanolkotabi2012}, there are infinitely many solutions to the dual program and a projection step allows for well-posed analysis. While \cite{Wang2016} followed the same strategy of projection, it is not required as, in the noisy case and with \eqref{lasso_ssc}, the dual program is strongly concave and has a unique solution.

\section{Sketch of Proofs}\label{proof_section}

	\subsection{Dual Certificates and Admissible $\lambda$}

		Recall that in LS-SSC, we solve $P(x_i,X_{-i},\lambda)$ for $1 \leq i \leq N$. Fix some $i$ with $1 \leq i \leq N$. In order to guarantee that the solution $(c,e)$ to this problem contains no false positives, we consider an idealized problem. That is, we analyze
		$$P(x_i,X_{-i}^{(\ell)},\lambda).$$

		In other words, we attempt to solve LS-SSC but only using the matrix of other samples drawn from the same subspace. Since we do not know $X_{-i}^{(\ell)}$ a priori, we cannot solve this problem in practice. However, we show in Lemma \ref{lem:opt_cond} in the appendix that as long as the solution to this idealized problem and its dual satisfy certain conditions, the solution to $P(x_i, X_{-i},\lambda)$ will satisfy the subspace detection property. This follows from similar techniques to those in \cite{Wang2016}. We find the following sufficient condition for the subspace detection property.

		\begin{lemma}[establishing the subspace detection property]\label{intermediate_det}
		Suppose that for all $\ell$ we have
		\begin{equation}\label{inter_cond}
		2\lambda\delta < \dfrac{r_\ell-\mu_\ell-2\delta}{\mu_\ell+\delta}.\end{equation}
		Then the subspace detection property with parameter $\lambda$ holds.
		\end{lemma}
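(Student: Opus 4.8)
The plan is to run the standard dual-certificate argument, but --- following the design choice advertised in Section~\ref{section:incoherence} --- \emph{without} projecting the certificate onto the subspace, so that the whole proof collapses to a single scalar bound on the norm of the dual optimum of a fictitious in-subspace problem. Fix a column index $i$ and let $S_\ell$ be the subspace with $x_i\in\xl$; it suffices (doing this for every $i$) to show that the $i$-th column of any optimal $C$ in \eqref{lasso_ssc} is supported only on $\xl$. Consider the fictitious program $P(x_i,\xil,\lambda)$ in \eqref{eq:gen_lasso}, with primal optimum $(\tilde c,\tilde e)$ and, since the dual \eqref{eq:dual_lasso} is strongly concave, unique dual optimum $\tilde\nu$; by construction $\tilde\nu/\|\tilde\nu\|_2=v_i^{(\ell)}$ is exactly the dual direction entering \eqref{def:incoh-us}. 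By Lemma~\ref{lem:opt_cond}, it is enough to check that $\tilde\nu$ is strictly dual-feasible for the \emph{full} program $P(x_i,X_{-i},\lambda)$, i.e.\ that $|\langle x_j,\tilde\nu\rangle|<1$ for every $x_j\notin\xl$, which then gives the subspace detection property with parameter $\lambda$ in the sense of Definition~\ref{def:sub-det-prop}.

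To estimate the left-hand side, split $x_j=y_j+z_j$ with $y_j\in\mathcal Y\setminus\mathcal Y^{(\ell)}$ and $\|z_j\|_2\le\delta$; using $\|v_i^{(\ell)}\|_2=1$, Cauchy--Schwarz, and the incoherence bound in \eqref{def:incoherence},
\[
|\langle x_j,\tilde\nu\rangle|\;\le\;|\langle y_j,\tilde\nu\rangle|+|\langle z_j,\tilde\nu\rangle|\;\le\;\big(|\langle v_i^{(\ell)},y_j\rangle|+\delta\big)\|\tilde\nu\|_2\;\le\;(\mu_\ell+\delta)\,\|\tilde\nu\|_2 .
\]
A one-line rearrangement shows that \eqref{inter_cond} is equivalent to $(1+2\lambda\delta)(\mu_\ell+\delta)<r_\ell-\delta$, i.e.\ to $\tfrac{1+2\lambda\delta}{r_\ell-\delta}<\tfrac1{\mu_\ell+\delta}$ (and in particular $r_\ell>\delta$). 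Hence the lemma reduces to the norm bound
\[
\|\tilde\nu\|_2\;\le\;\frac{1+2\lambda\delta}{r_\ell-\delta}.
\]

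For this bound --- the heart of the argument --- I would decompose $\tilde\nu=\proj_{S_\ell}(\tilde\nu)+\proj_{S_\ell^\perp}(\tilde\nu)$ and bound the two pieces from the two halves of the KKT system. In-subspace: dual feasibility gives $|\langle x_k,\tilde\nu\rangle|\le1$ for every column $x_k=y_k+z_k$ of $\xil$, hence $|\langle y_k,\tilde\nu\rangle|\le 1+\delta\|\tilde\nu\|_2$; since by \eqref{def:r-ell} the disk of radius $r_\ell$ in $S_\ell$ is contained in $\qil=\mathcal{SC}(Y_{-i}^{(\ell)})$, expanding $r_\ell\,\proj_{S_\ell}(\tilde\nu)/\|\proj_{S_\ell}(\tilde\nu)\|_2$ as a convex combination of the $\pm y_k$ yields $\|\proj_{S_\ell}(\tilde\nu)\|_2\le(1+\delta\|\tilde\nu\|_2)/r_\ell$. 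Orthogonal: KKT stationarity in $e$ gives $\tilde\nu=\lambda\tilde e=\lambda(x_i-\xil\tilde c)$, and since $y_i$ together with the columns of $Y_{-i}^{(\ell)}$ lie in $S_\ell$, the $S_\ell^\perp$-component of $\tilde e$ is the purely noise-driven vector $\proj_{S_\ell^\perp}(z_i-Z_{-i}^{(\ell)}\tilde c)$, so $\|\proj_{S_\ell^\perp}(\tilde\nu)\|_2\le\lambda\delta(1+\|\tilde c\|_1)$. Finally control $\|\tilde c\|_1$ through strong duality and the same inradius estimate (using $\|\tilde c\|_1=\langle x_i,\tilde\nu\rangle-\tfrac1\lambda\|\tilde\nu\|_2^2$ and $|\langle y_i,\tilde\nu\rangle|\le(1+\delta\|\tilde\nu\|_2)/r_\ell$), add the two pieces, invoke $r_\ell\le1$, and solve the resulting linear inequality in $\|\tilde\nu\|_2$. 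I expect this last bookkeeping to be the main obstacle: one must squeeze the extra, noise-driven $S_\ell^\perp$ contribution --- the term that is absent in the noiseless analysis of \cite{soltanolkotabi2012} and that \cite{Wang2016} neutralizes by projecting the certificate --- into only a $-\delta$ in the denominator and a $+2\lambda\delta$ in the numerator, which is precisely what makes \eqref{inter_cond} tight rather than the weaker condition produced by cruder estimates. Since \eqref{inter_cond} is assumed for every $\ell$, the argument applies verbatim to every column, proving the claim.
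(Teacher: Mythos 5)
Your proposal is correct and shares the paper's overall architecture---reduce to condition 4 of Lemma~\ref{lem:opt_cond} via the certificate $\tilde\nu$ of the fictitious in-subspace problem, bound $(\mu_\ell+\delta)\|\tilde\nu\|_2$ by $1$, and split $\tilde\nu$ into its $S_\ell$ and $S_\ell^\perp$ parts---but both norm bounds are obtained by genuinely different, and more elementary, arguments. For the in-subspace part the paper works with the \emph{perturbed} polytope $\mathcal{SC}(\proj_{S_\ell}(X_{-i}^{(\ell)}))$, which forces it to prove an inradius-perturbation lemma (Lemma~\ref{perturb_rad}) and invoke the polar duality $r_S(\mT)R_S(\mT^\circ\cap S)=1$ (Lemma~\ref{polytope_rel_radii}), yielding $\|\nu_1\|_2\le(1+\delta\|\nu_2\|_2)/(r_{S_\ell}(\qil)-\delta)$; you instead test $\tilde\nu$ against the \emph{clean} polytope $\qil$ (writing $r_\ell\nu_1/\|\nu_1\|_2$ as a convex combination of the $\pm y_k$ and using $|\langle y_k,\tilde\nu\rangle|\le1+\delta\|\tilde\nu\|_2$), getting $\|\nu_1\|_2\le(1+\delta\|\tilde\nu\|_2)/r_\ell$---a worse numerator but a better denominator and no auxiliary geometry. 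For $\|\tilde c\|_1$ the paper compares against a feasible point built from the noiseless program \eqref{hat_opt}, whereas you read $\|\tilde c\|_1$ off strong duality of the noisy program and reuse the convex-combination argument on $\langle y_i,\tilde\nu\rangle$ (valid since $r_\ell y_i\in\qil$; note $y_i$ is not a column of $Y_{-i}^{(\ell)}$, so this is not dual feasibility). The price is that your two bounds are coupled through $\|\tilde\nu\|_2$ on both sides, and the resulting self-consistent inequality is \emph{quadratic}, not linear as you claim; fortunately the quadratic terms carry the favorable sign, and one checks (splitting on whether $\|\tilde\nu\|_2\ge\lambda\delta(1+1/r_\ell)$) that it still yields $\|\tilde\nu\|_2\le\bigl(1+\lambda\delta(1+r_\ell)\bigr)/(r_\ell-\delta)$, which is exactly the paper's Lemma~\ref{nu_norm} and, via $r_\ell\le1$, your target $(1+2\lambda\delta)/(r_\ell-\delta)$. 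To make the argument complete you should spell out that case analysis and also verify conditions 1--3 of Lemma~\ref{lem:opt_cond}, which hold by complementary slackness for the fictitious problem exactly as in the paper.
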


	    We now consider the parameter $\lambda$ in our optimization program. We want a condition on $\lambda$ that guarantees that the solution $(c,e)$ to $P(x_i,X_{-i},\lambda)$ is non-trivial. Using tools from convex geometry, one can show the following lemma. The full proof is contained in the appendix. Note that here we assume that $x_i \neq 0$. If $x_i = 0$, we have little to no hope of correctly clustering this point.

	    \begin{lemma}[non-trivial solution]\label{lambda_lower}If
	    $$\lambda > \frac{1}{r_\ell-2\delta-\delta^2},$$
	    then the solution $(c,e)$ to $P(x_i,X_{-i},\lambda)$ satisfies $c \neq 0$.\end{lemma}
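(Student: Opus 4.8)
Eliminating $e$, the problem $P(x_i,X_{-i},\lambda)$ is the Lasso $\min_c \|c\|_1 + \frac{\lambda}{2}\|x_i - X_{-i}c\|_2^2$, so the whole statement is the assertion that the trivial point $c=0$ is not a minimizer. The plan is to use the standard first-order optimality characterization at the origin: since the subdifferential of $\|\cdot\|_1$ at $0$ is the unit $\ell_\infty$ ball, $c=0$ minimizes this convex objective if and only if $\lambda X_{-i}^T x_i$ lies in that ball, i.e.\ $\lambda\,\|X_{-i}^T x_i\|_\infty \le 1$. Hence if $\lambda\,\|X_{-i}^T x_i\|_\infty > 1$ then $0$ is not a minimizer, and since the minimizer set of a convex function is convex, \emph{no} optimal $c$ equals $0$. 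So it suffices to prove $\|X_{-i}^T x_i\|_\infty \ge r_\ell - 2\delta - \delta^2$; combined with the hypothesis $\lambda > 1/(r_\ell - 2\delta - \delta^2)$ this gives $\lambda\,\|X_{-i}^T x_i\|_\infty > 1$ and finishes the proof.

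To bound $\|X_{-i}^T x_i\|_\infty$ I would first discard the columns of $X_{-i}$ outside $S_\ell$, so that $\|X_{-i}^T x_i\|_\infty \ge \|(X_{-i}^{(\ell)})^T x_i\|_\infty$, and then pass from $X$ to $Y$. The geometric core is this: writing $y_i\in S_\ell$ for the noiseless version of $x_i$, we have $\|y_i\|_2=1$, so $r_\ell\, y_i \in B_{r_\ell}\cap S_\ell$, and by the definition of the restricted inradius together with \eqref{def:r-ell} (which gives $r_\ell \le r_{S_\ell}(\qil)$) this set lies in $\qil = \mathcal{SC}(Y_{-i}^{(\ell)})$. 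Thus $r_\ell\, y_i = \sum_j \alpha_j \epsilon_j\, y_j^{(\ell)}$ for convex weights $\alpha_j$ and signs $\epsilon_j\in\{\pm 1\}$ ranging over the columns $y_j^{(\ell)}$ of $Y_{-i}^{(\ell)}$, and taking the inner product with $y_i$ yields
\[
r_\ell = \sum_j \alpha_j\,\epsilon_j\,\langle y_j^{(\ell)}, y_i\rangle \;\le\; \max_j |\langle y_j^{(\ell)}, y_i\rangle| = \|(Y_{-i}^{(\ell)})^T y_i\|_\infty .
\]

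Next I would propagate the noise: with $x_i = y_i + z_i$ and each column $x_j^{(\ell)} = y_j^{(\ell)} + z_j^{(\ell)}$, expanding $\langle x_j^{(\ell)}, x_i\rangle$ and using $\|y\|_2=1$ for noiseless columns together with $\|z_i\|_2,\|z_j^{(\ell)}\|_2\le\delta$ gives $|\langle x_j^{(\ell)}, x_i\rangle| \ge |\langle y_j^{(\ell)}, y_i\rangle| - 2\delta - \delta^2$ for every $j$; evaluating at the index attaining the maximum above produces $\|(X_{-i}^{(\ell)})^T x_i\|_\infty \ge r_\ell - 2\delta - \delta^2$. Chaining the inequalities then gives $\lambda\,\|X_{-i}^T x_i\|_\infty \ge \lambda(r_\ell - 2\delta - \delta^2) > 1$ (which incidentally forces $r_\ell - 2\delta - \delta^2 > 0$, so the scalar in the hypothesis is positive), so $c=0$ is not optimal and every optimal $c$ is nonzero.

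The step I expect to take the most care is the geometric one: correctly invoking the definition of the restricted inradius to certify that the scaled point $r_\ell\, y_i$ really lies inside the symmetrized convex hull of the \emph{other} same-subspace samples (recall $y_i$ itself has been removed), and keeping the subspace index $\ell$ — the one containing $x_i$'s noiseless part — straight throughout. The Lasso optimality characterization at $0$ and the bilinear noise expansion are routine, so the whole argument is short once that convex-geometry fact is in place.
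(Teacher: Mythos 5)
Your proposal is correct and follows essentially the same route as the paper's proof of Lemma~\ref{lambda_lower2}: both reduce the claim to showing $\lambda\|X_{-i}^Tx_i\|_\infty>1$ (you via the subdifferential condition at $c=0$, the paper via complementary slackness forcing $\nu=\lambda x_i$ and then violating dual feasibility --- the same inequality in two guises), and both use the identical noise expansion $|\langle x_j,x_i\rangle|\ge|\langle y_j,y_i\rangle|-2\delta-\delta^2$ together with the bound $\|(Y_{-i}^{(\ell)})^Ty_i\|_\infty\ge r_{S_\ell}(\qil)$. The only substantive difference is that you prove that last geometric fact directly, by exhibiting $r_\ell y_i$ as a point of $\qil$ and pairing with $y_i$, whereas the paper derives it from the gauge-norm duality statement in Lemma~\ref{lem:dual-Qnorm-inradius}; your version is a clean, self-contained proof of exactly the special case needed.
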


	    We would like conditions on $\delta$ for which there is a non-empty set of $\lambda$ for which the subspace detection property holds and such that all of the columns of the output $C$ are non-zero. By Lemma \ref{lambda_lower} and Lemma \ref{intermediate_det}, we require that for all $\ell$,
	    \begin{gather*}
	    \dfrac{1}{r_\ell - 2\delta-\delta^2} < \lambda ~~~\text{and}~~~
	    2\lambda\delta < \dfrac{r_\ell-\mu_\ell-2\delta}{\mu_\ell+\delta}.\end{gather*}
	    Straightforward calculations show that for all $\delta$ satisfying (\ref{det_crit}) and for all $\lambda$ satisfying (\ref{lambda_crit}), the conditions of both Lemma \ref{intermediate_det} and \ref{lambda_lower} will hold. This allows us to derive Theorem \ref{det_thm}. The details are contained in the appendix.

	\subsection{Geometric Separation in the Random Model}

	    We now assume the conditions of the random model. By Theorem \ref{det_thm}, it suffices to give condition on $\delta$ and $d_\ell$ for which $\delta < \frac{r-\mu}{5}$ holds with high probability. Using techniques from \cite{soltanolkotabi2012} we lower bound $r$ and upper bound $\mu$, as 
	    	    \begin{gather*}
	    r_\ell \geq \dfrac{\const(\kappa_\ell)\sqrt{\log(\kappa_\ell)}}{\sqrt{2d_\ell}}
	    ~~,~~
	    \mu_\ell \leq \sqrt{ \dfrac{6\log N}{n}}.
	    \end{gather*}
with high probability. 

	\subsection{Subspace Clustering with Missing Data}

		As noted above, clustering with missing data is a special case of subspace clustering with additive noise. Let $X = Y+Z$ where each entry $Z_{ij}$ either equals $-Y_{ij}$ or 0. If the number of missing entries is not too large, then the corruption matrix $Z$ is relatively sparse.

		Recall that we assume that in each column coming from $S_\ell$, we have at most $M_\ell$ missing entries. We make no assumptions on how these missing entries are selected except that the missing locations are chosen independently from the observations; see Assumption~\ref{missing_assump}. We want a condition on $M_\ell$ under which LS-SSC succeed with high probability. It suffices to find a condition on $M_\ell$ such that the assumptions of Theorem \ref{rand_thm} hold.

		To do this, we have to control $\|z\|_2$ for each column $z$ of $Z$. In the missing data model, $z$ is the negative of the projection of a column $y$ of $Y$ on to $m$ coordinates. We can then use the following lemma.
		
		\begin{lemma}[Corollary~3.4~\cite{barvinok2005math}]\label{proj_lem}
		Let $x$ be uniformly drawn from the unit sphere and let $S \subseteq \real^n$ be any $m$-dimensional subspace. For $x \in S^{n-1}$, let $\proj_S(x)$ denote the orthogonal projection of $x$ on to $S$. Then for any $0 < \epsilon < 1$,
		\begin{align*}
			\prob \bigg( \|\proj_S(x)\|_2 \geq \frac{1}{1-\epsilon}\sqrt{ \frac{m}{n} } \bigg) \leq 2e^{-\epsilon^2m/4}.
		\end{align*}
		\end{lemma}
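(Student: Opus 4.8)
This is a classical concentration estimate, so the plan is to reduce it to chi-squared tail bounds via a Gaussian representation. First I would invoke rotational invariance: since the uniform law on the unit sphere is invariant under orthogonal transformations and both sides of the claimed inequality depend on $S$ only through its dimension $m$, we may assume $S = \mathrm{span}(e_1,\dots,e_m)$, so that $\|\proj_S(x)\|_2^2 = \sum_{i=1}^m x_i^2$. Realizing the uniform point as $x = g/\|g\|_2$ with $g \sim N(0,I_n)$ and setting $A \coloneqq \sum_{i=1}^m g_i^2$ and $B \coloneqq \sum_{i=m+1}^n g_i^2$, we obtain independent variables $A \sim \chi^2_m$, $B \sim \chi^2_{n-m}$ with $A+B \sim \chi^2_n$, and $\|\proj_S(x)\|_2^2 = A/(A+B)$. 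Squaring the event, the quantity to bound is $\prob\big(A/(A+B) \geq \tfrac{m}{(1-\epsilon)^2 n}\big)$.

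Next I would decouple the numerator and denominator by a union bound. The key elementary observation is that if $A < \tfrac{m}{1-\epsilon}$ and $A+B > (1-\epsilon)n$ both hold then $A/(A+B) < \tfrac{m}{(1-\epsilon)^2 n}$; taking contrapositives, the event of interest is contained in $\{A \geq \tfrac{m}{1-\epsilon}\} \cup \{A+B \leq (1-\epsilon)n\}$, and it suffices to bound each of these by $e^{-\epsilon^2 m/4}$. For the second, I would apply the Laurent--Massart lower-tail inequality $\prob(\chi^2_n \leq n - 2\sqrt{nt}) \leq e^{-t}$ with $t = \epsilon^2 n/4$, which gives $\prob(A+B \leq (1-\epsilon)n) \leq e^{-\epsilon^2 n/4} \leq e^{-\epsilon^2 m/4}$ since $m \leq n$. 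For the first, I would use the upper-tail inequality $\prob(\chi^2_m \geq m + 2\sqrt{mt} + 2t) \leq e^{-t}$ with $t = \epsilon^2 m/4$; here $2\sqrt{mt} = \epsilon m$ and $2t = \epsilon^2 m/2$, so one needs $m(1+\epsilon+\tfrac{\epsilon^2}{2}) \leq \tfrac{m}{1-\epsilon}$, which after clearing denominators is the trivially true inequality $1 - \tfrac{\epsilon^2}{2} - \tfrac{\epsilon^3}{2} \leq 1$. Summing the two bounds yields the claimed $2e^{-\epsilon^2 m/4}$.

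There is no serious obstacle here — the statement is exactly Corollary~3.4 of \cite{barvinok2005math}, which we will simply cite — but the fiddliest step is the last one, namely threading the exponent constant $1/4$ through the upper-tail bound; the point is that the slack $\tfrac{m}{1-\epsilon} = m + \tfrac{\epsilon m}{1-\epsilon}$ is comfortably larger than $m + \epsilon m$, which is just enough to absorb the $2\sqrt{mt}+2t$ term at $t = \epsilon^2 m/4$. An alternative route that avoids the chi-squared bookkeeping entirely is to apply Lévy's concentration inequality on $S^{n-1}$ to the $1$-Lipschitz map $x \mapsto \|\proj_S(x)\|_2$, together with the identity $\mathbb{E}\,\|\proj_S(x)\|_2^2 = m/n$ (hence $\mathbb{E}\,\|\proj_S(x)\|_2 \leq \sqrt{m/n}$ by Jensen) to control the centering; this yields a bound of the same shape.
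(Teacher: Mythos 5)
The paper does not prove this lemma at all: it is imported verbatim as Corollary~3.4 of the cited Barvinok lecture notes, so there is no in-paper argument to compare against. Your proposal supplies a correct, self-contained derivation. The reduction by rotational invariance to $S=\mathrm{span}(e_1,\dots,e_m)$, the Gaussian representation $x=g/\|g\|_2$ with $\|\proj_S(x)\|_2^2=A/(A+B)$ for independent $A\sim\chi^2_m$, $B\sim\chi^2_{n-m}$, and the decoupling inclusion $\{A/(A+B)\geq \tfrac{m}{(1-\epsilon)^2n}\}\subseteq\{A\geq \tfrac{m}{1-\epsilon}\}\cup\{A+B\leq(1-\epsilon)n\}$ are all sound. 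The numerical checks go through: with $t=\epsilon^2 n/4$ the Laurent--Massart lower tail gives exactly the threshold $(1-\epsilon)n$, and with $t=\epsilon^2 m/4$ the upper-tail threshold $m(1+\epsilon+\tfrac{\epsilon^2}{2})$ satisfies $(1+\epsilon+\tfrac{\epsilon^2}{2})(1-\epsilon)=1-\tfrac{\epsilon^2}{2}-\tfrac{\epsilon^3}{2}\leq 1$, so it sits below $\tfrac{m}{1-\epsilon}$ as needed; summing yields $2e^{-\epsilon^2 m/4}$. What your approach buys over the paper's bare citation is a verifiable proof with explicit constants tied to standard $\chi^2$ tail bounds; the Lévy-concentration alternative you sketch would also work but requires handling the mean-versus-median centering, so the chi-squared route is the cleaner way to hit the stated constant $1/4$ exactly.
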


		This allows us to bound $\|z\|_2$ in terms of the number of missing entries $m$ and the ambient dimension $n$ of the data. If we select $m = O(n/d)$, where $d$ is the dimension of the subspace containing $y$, then with high probability the conditions of Theorem \ref{rand_thm} will hold. The details of the proof are left to the appendix.

\section{Conclusion}

	Subspace clustering in the presence of corrupted and missing data is an important task in a variety of machine learning problems, including high-rank matrix completion. Prior methods for subspace clustering with missing data either have few theoretical performance guarantees or are based on assumptions that are often not met in practice. Furthermore, prior methods often assume that the locations of missing entries are known a priori, whereas in practice we are often presented with a zero-filled matrix where we cannot distinguish between missing entries and zero-valued entries. 
	LS-SSC can be applied without knowing the locations of missing entries, making it especially applicable to presence-only data settings.
	Our study of LS-SSC in \eqref{lasso_ssc} addresses all these challenges. The theoretical guarantees we derive are based on straightforward assumptions on the data and yield bounds that match previous bounds when the noise level is small enough.

\bibliographystyle{alpha}

\bibliography{scmd_citations}

\newpage

\appendix

\section{Proofs for Deterministic Guarantees}

	\subsection{Optimality Conditions}

		Recall that given $x, A, \lambda$, we defined an optimization problem denoted $P(x,A,\lambda)$ as
	    \begin{equation*}
	    \min_{c,e}~\|c\|_1 + \frac{\lambda}{2}\|e\|_2^2 \qquad \text{subject~to}~~~e = x-Ac,\end{equation*}
	    and its Lagrangian dual $D(x,A,\lambda)$ is given by
	    \begin{equation*}
	    \max_\nu~\langle x,\nu\rangle - \frac{1}{2\lambda}\|\nu\|_2^2 \qquad \text{subject~to}~~~\|A^T\nu\|_\infty \leq 1.\end{equation*}	   

	    Fix $1 \leq i \leq N$ and suppose $y_i\in S_\ell$ for some $\ell\in\{1,\ldots,L\}$. We are interested in a characterization of the {\it support} of optimal solutions $c^*$ for $P(x_i,X_{-i},\lambda)$. If we can guarantee that the support of any such $c^*$ corresponds to only the columns of $X_{-i}$ coming from~$S_\ell$, then we will get the subspace detection property in Definition~\ref{def:sub-det-prop}. In order to guarantee this, we provide the following lemma, taken from \cite[Lemma~12]{Wang2016} and added here for posterity's sake. In a nutshell, the lemma ensures such a guarantee on the support of an optimal solution to $P$ whenever a certain {\em dual certificate of optimality} at $c^*$ exists for~$P$. Then, in Section~\ref{intermediate_det}, we show how to construct such dual certificate. The dual certificate strategy has been long used in compressed sensing and matrix completion literature, and has been employed in subspace clustering as well \cite{Wang2016}. 
	    \begin{lemma}\label{lem:opt_cond}
	    Let $A \in \real^{n\times N}, x \in \real^{n}$ be such that there are vectors $c,e,\nu$ and sets $S\subseteq T \subseteq \{1,\ldots, N\}$ satisfying $e = x-Ac$, $c$ has support $S$, and $\nu$ satisfies:
	    \begin{enumerate}
	    	\item $A_S^T\nu = \sign(c_S)$,
	    	\item $\nu = \lambda e$,
	    	\item $\|A_{T\cap S^c}^T\nu\|_\infty \leq 1$,
	    	\item $\|A_{T^c}^T\nu\|_\infty < 1$.
	    \end{enumerate}
	    Then any optimal solution $(c^*,e^*)$ to $P(x,A,\lambda)$ satisfies $c^*_{T^c} = 0$.
	    \end{lemma}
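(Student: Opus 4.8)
The plan is to treat Lemma~\ref{lem:opt_cond} as a textbook dual-certificate argument for the LASSO-type program one gets from $P(x,A,\lambda)$ by eliminating $e$, namely $\min_c \|c\|_1 + \tfrac{\lambda}{2}\|x-Ac\|_2^2$. First I would record the first-order optimality (KKT) conditions: a pair $(c,e)$ with $e = x - Ac$ solves $P(x,A,\lambda)$ if and only if there is a vector $\nu$ with $\nu = \lambda e$ and $A^T\nu \in \partial\|c\|_1$, where the latter unpacks coordinatewise to $(A^T\nu)_j = \sign(c_j)$ on the support of $c$ and $|(A^T\nu)_j| \le 1$ off the support. This is exactly stationarity of the Lagrangian $\|c\|_1 + \tfrac{\lambda}{2}\|e\|_2^2 + \langle \nu,\, x - Ac - e\rangle$ in $c$ and in $e$, together with primal feasibility; since the reduced problem is an unconstrained convex program (smooth plus $\ell_1$), these conditions are both necessary and sufficient, and $\nu$ is the optimal solution of $D(x,A,\lambda)$.

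Second, I would verify that the hypotheses of the lemma force the triple $(c,e,\nu)$ to satisfy these KKT conditions. Because $S \subseteq T$, the three sets $S$, $T\setminus S$, and $T^c$ partition $\{1,\dots,N\}$, and $(T\setminus S)\cup T^c = S^c$. Condition~1 gives $(A^T\nu)_S = \sign(c_S)$ on the support; conditions~3 and~4 give $|(A^T\nu)_j| \le 1$ for every $j \notin S$. Hence $A^T\nu \in \partial\|c\|_1$, and together with $\nu = \lambda e$ and $e = x - Ac$ this shows $(c,e)$ is primal optimal and $\nu$ dual optimal.

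Third — and this is where the \emph{strict} inequality in condition~4 is used — I would argue that every primal optimum obeys the same subgradient relation with the same $\nu$. The dual objective $\langle x,\nu\rangle - \tfrac{1}{2\lambda}\|\nu\|_2^2$ is strongly concave, so $D(x,A,\lambda)$ has a unique maximizer; since $\nu$ is a maximizer by the previous step, it is that unique $\nu^*$. Then for any optimal $(c^*, e^*)$, the KKT conditions pair it with $\nu^*$, in particular $A^T\nu^* \in \partial\|c^*\|_1$. For $j \in T^c$ we have $|(A^T\nu^*)_j| = |(A^T\nu)_j| < 1$, while $\partial|t| = \{\sign(t)\}$ has modulus $1$ whenever $t \ne 0$; therefore $c^*_j = 0$ for all $j \in T^c$, i.e.\ $c^*_{T^c} = 0$.

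The only genuinely delicate point — and the main obstacle — is the passage from ``this particular $\nu$ certifies optimality'' to ``this $\nu$ constrains every optimum,'' which rests on uniqueness of the dual solution. If one prefers to avoid invoking strong concavity, an alternative is to note that stationarity in $e$ forces $\nu^* = \lambda e^* = \lambda(x - Ac^*)$ for any optimum, and that $Ac^*$ is common to all optima because $\|x - Ac\|_2^2$ is strictly convex as a function of $Ac$; either route pins down $\nu^*$. Everything else is routine subdifferential bookkeeping (checking the three cases exhaust $\{1,\dots,N\}$, and the elementary fact about $\partial|\cdot|$), and since this is precisely the argument of \cite[Lemma~12]{Wang2016} I would mostly transcribe it for completeness.
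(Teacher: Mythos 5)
Your proof is correct, but it takes a genuinely different route from the paper's. The paper's proof is a direct primal comparison: starting from an arbitrary optimum $(c^*,e^*)$, it applies the subgradient inequality for $\|\cdot\|_1$ at $c_S$ and a corresponding inequality for the quadratic term at $e$, substitutes conditions 1 and 2, cancels $\langle \nu, A(c^*-c)+e^*-e\rangle=0$ by feasibility, and bounds the remaining cross terms with conditions 3 and 4 to arrive at $\|c^*\|_1+\tfrac{\lambda}{2}\|e^*\|_2^2 \ge \|c\|_1+\tfrac{\lambda}{2}\|e\|_2^2+(1-\|A_{T^c}^T\nu\|_\infty)\|c^*_{T^c}\|_1$; optimality of $(c^*,e^*)$ then forces the nonnegative slack term to vanish. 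You instead recognize the hypotheses as the KKT conditions of the reduced problem, conclude that $(c,e)$ is optimal and that $\nu$ is the unique dual optimum (strong concavity), and then read off $c^*_{T^c}=0$ from $A^T\nu \in \partial\|c^*\|_1$ together with the strict inequality on $T^c$. Both arguments are sound. The paper's is more self-contained, needing only convexity inequalities and never invoking strong duality or dual uniqueness; its displayed inequality is essentially the quantitative content of your subdifferential argument. Your version makes the role of condition 4 more transparent, and the fallback you mention — that $Ac^*$, hence $e^*$ and $\nu^*=\lambda e^*$, is common to all optima because the loss is strictly convex in $Ac$ — is exactly the elementary fact needed to avoid appealing to dual uniqueness. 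If you write this up, make explicit why $\nu$ is dual optimal (e.g., verify the zero duality gap using $\langle c, A^T\nu\rangle=\|c\|_1$ from condition 1), or route everything through the uniqueness of $Ac^*$ as in your last paragraph.
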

		\begin{proof}
	    Let $(c^*, e^*)$ be a solution to $P(x,A,\lambda)$. Then we have:
	    \begin{align*}
	    & \|c^*\|_1+\frac{\lambda}{2}\|e^*\|_2^2\\
	    &=\|c_S^*\|_1 + \|c_{T\cap S^c}^*\|_1 + \|c_{T^c}^*\|_1 + \frac{\lambda}{2}\|e^*\|_2^2\\
	    &\geq \|c_S\|_1 + \langle \sign(c_S),c_S^*-c_S\rangle + \|c_{T\cap S^c}^*\|_1 + \|c_{T^c}^*\|_1 + \frac{\lambda}{2}\|e^*\|_2^2.\numberthis \label{opt_calc1}\end{align*}
	    We now wish to find a lower bound for $\frac{\lambda}{2}\|e^*\|_2^2$ involving $e$. Note that function 
	    $$f(e) = \lambda (-\frac{1}{2}e^Te + e^Te^*),$$
	    for $\lambda >0$, has a unique maximum at $e^*$. Therefore, 
	    \begin{align*}\frac{\lambda}{2}\|e^*\|_2^2 &= f(e^*)\\
	    &\geq f(e)\\
	    &= \lambda (-\frac{1}{2}e^Te + e^Te^*)\\
	    &= \frac{\lambda}{2}\|e\|_2^2 + \langle \lambda e, e^*-e\rangle.\end{align*}
	    Using this fact and conditions 1 and 2 of the lemma on $\nu$ in \eqref{opt_calc1}, we have
	    \begin{align*}
		& \|c^*\|_1+\frac{\lambda}{2}\|e^*\|_2^2\\
		& \geq \|c_S\|_1 + \langle \nu, A_S(c_S^*-c_S)\rangle  + \|c_{T\cap S^c}^*\|_1 + \|c_{T^c}^*\|_1 + \frac{\lambda}{2}\|e\|_2^2 + \langle \nu, e^*-e\rangle\\
		&\geq \|c_S\|_1 + \frac{\lambda}{2}\|e\|_2^2 + \|c_{T\cap S^c}^*\|_1 - \langle \nu, A_{T\cap S^c}c_{T\cap S^c}^* \rangle + \|c_{T^c}^*\|_1 - \langle \nu, A_{T^c}c_{T^c}^*\rangle \\
		&\hspace{1cm}+ \langle \nu, A(c^*-c) + e^*-e\rangle.\numberthis \label{opt_calc2}\\
	    \end{align*}
	    Since $(c^*,e^*)$ and $(c,e)$ are feasible, we have $Ac^*+e^* = x = Ac+e$ which implies 
	    \begin{align*}
	    A(c^*-c) + e^*-e=0.\numberthis \label{opt_calc3}\end{align*}
	    Combining (\ref{opt_calc2}) and (\ref{opt_calc3}), and using the fact that $c$ has support $S$ so $c_S = c$, we have
	    \begin{align*}
	    &\|c^*\|_1 + \frac{\lambda}{2}\|e^*\|_2^2\\
	    &\geq \|c\|_1 + \frac{\lambda}{2}\|e\|_2^2 + \|c_{T\cap S^c}^*\|_1 - \langle \nu, A_{T\cap S^c}c_{T\cap S^c}^* \rangle + \|c_{T^c}^*\|_1 - \langle \nu, A_{T^c}c_{T^c}^*\rangle.\numberthis \label{opt-comp}.\end{align*}
	    By condition 3 of the lemma on $\nu$, we have
	    \begin{align*}
	    \langle\nu,A_{T\cap S^c}c^*_{T\cap S^c}\rangle &= \langle A_{T\cap S^c}^T\nu, c^*_{T\cap S^c}\rangle\\
	    & \leq \|A_{T\cap S^c}^T\nu\|_\infty\|c_{T\cap S^c}^*\|_1\\
	    &\leq \|c^*_{T\cap S^c}\|_1.\numberthis \label{opt-comp2}\end{align*}
	    By simple norm properties, we have
	    \begin{align*}
	    \|c^*_{T^c}\|_1 -\langle\nu,A_{T^c}c^*_{T^c}\rangle &= \|c^*_{T^c}\|_1 -\langle A_{T^c}^T\nu,c^*_{T^c}\rangle\\
	    &\geq \|c^*_{T^c}\|_1 -\|A_{T^c}^T\nu\|_\infty\|c^*_{T^c}\|_1\\
	    &\geq (1-\|A_{T^c}^T\nu\|_\infty)\|c^*_{T^c}\|_1 \numberthis\label{opt-comp3}
	    \end{align*}
	    Combining (\ref{opt-comp}), (\ref{opt-comp2}), and (\ref{opt-comp3}), we find
	    \begin{align*}
	    \|c^*\|_1 + \frac{\lambda}{2}\|e^*\|_1 \geq \|c\|_1 + \frac{\lambda}{2}\|e\|_1 + (1-\|A_{T^c}^T\nu\|_\infty)\|c^*_{T^c}\|_1.\end{align*}
	    By condition 4 of the lemma on $\nu$, we know that $(1-\|A_{T^c}^T\nu\|_\infty) > 0$. By optimality of $c^*, e^*$ for $P(x,A,\lambda)$, this implies that $\|c^*_{T^c}\|_1 = 0$ and so $c^*_{T^c} = 0$. Therefore, $c^*$ has support contained in $T$.\end{proof}

	\subsection{Dual Certificate Construction}
	\label{sec:dual-cert-construction}

	    We fix $i\in \{1,\ldots,N\}$, and assume $y_i\in S_\ell$ for some $\ell\in\{1,\ldots, L\}$. 
	    Let ${T \subseteq \{1,\ldots, N-1\}}$ denote the set of columns in $X_{-i}$ that correspond to $S_\ell$. Hence $|T|=N_\ell-1$. We wish to guarantee that for any solution $(c^*,e^*)$ to $P(x_i, X_{-i}, \lambda)$, the support of $c^*$ is contained in~$T$. By~Lemma~\ref{lem:opt_cond}, it suffices to exhibit, for each $i$, feasible vectors $c_i, e_i, \nu_i$ satisfying the conditions of this lemma when we take $A = X_{-i}$ and $x = x_i$. This is the step for constructing the dual certificate. 

		First, observe that as long as $A$ has at least one column, $P(x,A,\lambda)$ is trivially feasible. It is also trivially bounded by zero. Now, since the primal has a finite optimal value, then so does the dual, the optimal values coincide, and optimal solutions to both $P$ and $D$ exist. 
		
		In the following, we consider optimal solutions to $P(x_i,X_{-i}^{(\ell)},\lambda)$ and $D(x_i,X_{-i}^{(\ell)},\lambda)$, for~$\ell=1,\ldots,L$, and use them to construct candidates for which we verify conditions 1-4~of Lemma~\ref{lem:opt_cond}. While the first three conditions come naturally by construction, the last condition requires further assumptions on the incoherence measures (defined in Section \ref{section:incoherence}); e.g., the condition prescribed in the statement of Lemma \ref{intermediate_det}. Let us elaborate on this procedure. We would like to note that while establishing the first three conditions is similar to the material presented in Section~5.2~of \cite{Wang2016}, we believe Lemma~\ref{suff_det_cond} presents a simpler criteria for verifying Condition~4 and the subspace detection property. 
		
		\paragraph{Definitions.}
		Let $(c_i^{\ell},e_i^{\ell})$ be an optimal solution for $P(x_i,X_{-i}^{(\ell)},\lambda)$, 
		and $\nu_i^{\ell}$ be the optimal solution to $D(x_i,X_{-i}^{(\ell)},\lambda)$, 
		where $c_i^{\ell}\in\mathbb{R}^{N_\ell-1}$, $e_i^{\ell}\in\mathbb{R}^{n}$, and $\nu_i^{\ell}\in\mathbb{R}^{n}$. 
	    We define $c_i\in\mathbb{R}^{N-1}$ to be 0 in all columns outside of $X_{-i}^{(\ell)}$ and let it equal $c$ in columns corresponding to $X_{-i}^{(\ell)}$. In other words,  
	    \[
	    (c_i)_T = c ~\text{and}~(c_i)_{T^c} = 0 . 
	    \]
	    Let $S$ denote the support of $c_i$. 
	    Let $e_i = e_i^{\ell}$. Observe that $e_i = e_i^{\ell} = X_{-i}^{(\ell)}c_i^{\ell} - x_i = X_{-i} c_i - x_i $, so $(c_i,e_i)$ is a feasible point for $P(x_i,X_{-i},\lambda)$. Set $\nu_i \coloneqq \nu_i^{\ell}$. 
	    
	    \paragraph{Conditions 1-3.}
	    Complementary slackness implies
		\begin{align}\label{slack}
			(X_{-i}^{(\ell)})_S^T\nu_i = \sign((c_i)_S) ~~ \text{and} ~~
			\nu_i = \lambda e_i.
		\end{align}
		Therefore, $(c_i,e_i,\nu_i)$ satisfy conditions 1 and 2 of Lemma \ref{lem:opt_cond}. Moreover, observe that 
	    \[
	    \| (X_{-i})^T_{T\cap S^c}\nu_i\|_\infty 
	    \leq \|(X_{-i})_T^T\nu_i \|_\infty 
	    = \|(X_{-i}^{(\ell)})^T\nu_i\|_\infty 
	    \leq 1.
	    \]
	    Therefore, condition 3 of Lemma \ref{lem:opt_cond} also holds.     

\paragraph{Condition 4.}
With our definitions for $T$ and $S$ above, condition 4 of Lemma \ref{lem:opt_cond} is equivalent to the following: 
for each column $x = y+z$ of $X$ where $y \notin S_\ell$, namely, for all $x\in \mathcal{X}\back\mathcal{X}^\ell$, we have
	    \begin{equation*}
	    |\langle x,\nu_i \rangle|< 1.
	    \end{equation*}
	    Fix $x \in \mathcal{X}\back\mathcal{X}^\ell$, where $x = y+z$. Here, $y$ is the true vector and $z$ is the noise vector satisfying $\|z\|_2\leq \delta$. 
	    Observe that 
	    \begin{align*}
	    |\langle x,\nu_i\rangle | 
	    & \leq |\langle y,\nu_i\rangle| + |\langle z,\nu_i\rangle| \\
	    &= \|\nu_i\|_2\left|\langle y, \dfrac{\nu_i}{\|\nu_i\|_2}\rangle\right| + |\langle z,\nu_i\rangle| \\
	    &\leq \|\nu_i\|_2 \, \mu(\mathcal{X}^{(\ell)}) + \delta \|\nu_i\|_2 \\
	    & =(\mu(\xl)+\delta)\|\nu_i\|_2
	    \end{align*}
	    where in the last inequality, we used the definition of $\mu(\mathcal{X}^{(\ell)})$ in Section~\ref{section:incoherence} as well as the Cauchy-Schwarz inequality. 
	    Therefore, with the above constructions, the conditions of Lemma~\ref{lem:opt_cond} can be ensured using the following lemma. 

	    \begin{lemma}\label{suff_det_cond}
	    For any $i\in \{1,\ldots, N\}$, and $\ell\in\{1,\ldots, L\}$ for which $x_i\in S_\ell$, consider an optimal solution of $P(x_i,X_{-i}^{(\ell)},\lambda)$ as $(c_i^{\ell},e_i^{\ell})$ and the optimal solution of $D(x_i,X_{-i}^{(\ell)},\lambda)$ as $\nu_i$. If 
	    \begin{align}\label{eq:intermed-cond}
	    (\mu(\xl)+\delta)\|\nu_i\|_2 < 1,
	    \end{align}
	    then,
	    \[ 	    |\langle x,\nu_i \rangle|< 1 \]
	    for all $x\in \mathcal{X}\back\mathcal{X}^\ell$. 
	    \end{lemma}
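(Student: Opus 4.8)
The plan is to exploit the decomposition $x = y+z$ available for every column of $X$, together with the complementary-slackness identity $\nu_i = \lambda e_i$ already recorded in \eqref{slack} and, crucially, the definition of the incoherence parameter $\mu(\xl)$ from \eqref{def:incoherence}. Most of the computation has, in fact, already been carried out in the display preceding the lemma statement; the proof amounts to making that chain of inequalities precise and invoking the hypothesis.

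Concretely, first I would split $|\langle x,\nu_i\rangle| \le |\langle y,\nu_i\rangle| + |\langle z,\nu_i\rangle|$ by the triangle inequality. For the noise term, Cauchy--Schwarz gives $|\langle z,\nu_i\rangle| \le \|z\|_2\,\|\nu_i\|_2 \le \delta\|\nu_i\|_2$, since $z$ is the noise component of a column of $X$ and hence $\|z\|_2 \le \delta$ by definition of $\delta$. For the signal term, I would write $\langle y,\nu_i\rangle = \|\nu_i\|_2 \langle y, \nu_i/\|\nu_i\|_2\rangle$, observe that $\nu_i/\|\nu_i\|_2$ is exactly the dual direction $v_i^{(\ell)}$ of \eqref{def:incoh-us} --- this uses that $\nu_i$ is \emph{the} optimal solution of $D(x_i,\xil,\lambda)$, which is unique because the dual objective is strongly concave in $\nu$ --- and then apply the definition of incoherence: since $x \in \mathcal{X}\back\mathcal{X}^{(\ell)}$ means $y \in \mathcal{Y}\back\mathcal{Y}^{(\ell)}$, we have $|\langle v_i^{(\ell)}, y\rangle| \le \mu(\xl)$. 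Adding the two bounds yields $|\langle x,\nu_i\rangle| \le (\mu(\xl)+\delta)\|\nu_i\|_2$, and the hypothesis $(\mu(\xl)+\delta)\|\nu_i\|_2 < 1$ closes the argument, with the strict inequality preserved.

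The only genuinely delicate point --- and the closest thing to an obstacle --- is the edge case $\nu_i = 0$, for which the normalization $\nu_i/\|\nu_i\|_2$ defining the dual direction is undefined. This case is handled trivially: if $\nu_i = 0$ then $\langle x,\nu_i\rangle = 0 < 1$ immediately for every $x$, so the conclusion holds with nothing to prove. (One can note that $\nu_i = \lambda e_i = 0$ forces $e_i = 0$, i.e.\ $x_i = \xil c_i^{\ell}$ lies exactly in the span of the same-subspace samples.) Apart from this, the remaining work is bookkeeping: confirming that the vector appearing in the incoherence definition \eqref{def:incoherence} coincides with the dual optimizer $\nu_i$ produced by the idealized problem $D(x_i,\xil,\lambda)$, and that $y$ ranges over $\mathcal{Y}\back\mathcal{Y}^{(\ell)}$ precisely as $x$ ranges over $\mathcal{X}\back\mathcal{X}^{(\ell)}$.
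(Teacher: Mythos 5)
Your proposal is correct and follows essentially the same route as the paper: the paper's proof is precisely the chain $|\langle x,\nu_i\rangle|\le|\langle y,\nu_i\rangle|+|\langle z,\nu_i\rangle|\le\|\nu_i\|_2\,\mu(\xl)+\delta\|\nu_i\|_2$ via the incoherence definition and Cauchy--Schwarz, followed by the hypothesis. Your explicit treatment of the degenerate case $\nu_i=0$ is a small point the paper leaves implicit, but it does not change the argument.
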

		In other words, if \eqref{eq:intermed-cond} is satisfied, we have successfully constructed the dual certificate required in Lemma~\ref{lem:opt_cond}, which implies the subspace detection property. 
		Henceforth, we simplify the notation and use $\nu=\nu_i$. 
	    In the next section we will bound $\|\nu\|_2$ and use it, in conjunction with Lemma~\ref{lem:opt_cond} and Lemma~\ref{suff_det_cond}, to prove Lemma \ref{intermediate_det}.

	\subsection{Bounding $\|\nu\|_2$}

	    We now wish to bound $\|\nu\|_2$. We will do this by bounding the norm of its projection on to $S_\ell$ and the norm of its part that is orthogonal to $S_\ell$.

		Recall that the columns of $Y^{(\ell)}$ all lie in the subspace $S_\ell$. Let $\proj_{S_\ell}$ denote the projection operator on to $S_\ell$. We define
		\begin{align*}
		\nu_1 \coloneqq \proj_{S_\ell}(\nu) ~~ \text{and} ~~
		\nu_2 \coloneqq \nu-\nu_1.\end{align*}
		Observe that, by Moreau's decomposition theorem, $\nu_1$ and $\nu_2$ are orthogonal.
		We bound the norm of $\nu_1$ in Lemma~\ref{lem:nu1-norm} and the norm of $\nu_2$ in Lemma~\ref{lem:nu2-norm}, and combine the two results in Lemma~\ref{nu_norm}.

\subsubsection{Bounding $\|\nu_1\|_2$}
To bound $\|\nu_1\|_2$, we first establish a few auxiliary lemmas from convex geometry. This lemma establishes the same result as Lemma 16 in \cite{Wang2016} but corrects some errors in their proof. More specifically, their characterization of the boundary is not correct and ``restricted inradius'' should be used. 

		\begin{lemma}\label{perturb_rad}
		Consider two sets $\mathcal{Y} = \{y_1,\ldots,y_D\} \subseteq \real^d$ and $\mathcal{Z} = \{z_1,\ldots,z_D\} \subseteq \real^d$ satisfying
		$\|z_i\|_2 \leq \delta$, for $i=1,\ldots, D$, for some $\delta>0$. Define $x_i = y_i+z_i$, for $i=1,\ldots, D$, and $\mathcal{X} = \{x_1,\ldots,x_D\}$. 
		Let $\mQ$ be the symmetrized convex hull of $\mathcal{Y}$ and $\mT$ be the symmetrized convex hull of $\mathcal{X}$. Suppose that $r(\mQ) > \delta$. Then $r(\mT) \geq r(\mQ) - \delta$.
		\end{lemma}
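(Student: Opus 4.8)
The plan is to show that any Euclidean ball of radius $r(\mQ) - \delta$ centered at the origin lies inside $\mT$, which by symmetry of $\mT$ is exactly what is needed. Since $\mQ$ is a symmetric convex polytope with $r(\mQ) > \delta$, we know $B_{r(\mQ)} \subseteq \mQ$. I would work with the polar description: $r(\mT) \geq \rho$ if and only if every unit vector $u$ satisfies $h_{\mT}(u) \geq \rho$, where $h_{\mT}(u) = \max_i |\langle x_i, u\rangle|$ is the support function of $\mT$ (the max over $\pm x_i$ collapses to the absolute value). So fix an arbitrary unit vector $u$; it suffices to produce an index $i$ with $|\langle x_i, u\rangle| \geq r(\mQ) - \delta$.

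First I would invoke the ball condition on $\mQ$: because $B_{r(\mQ)} \subseteq \mQ = \conv(\pm y_1,\dots,\pm y_D)$, the support function of $\mQ$ dominates that of the ball, i.e. $\max_i |\langle y_i, u\rangle| = h_{\mQ}(u) \geq r(\mQ)$ for every unit $u$. Pick the index $i$ achieving this maximum, and (replacing $y_i$ by $-y_i$, $z_i$ by $-z_i$ if necessary, which is harmless since $\mT$ is symmetrized) assume $\langle y_i, u\rangle \geq r(\mQ)$. Then $\langle x_i, u\rangle = \langle y_i, u\rangle + \langle z_i, u\rangle \geq r(\mQ) - \|z_i\|_2 \geq r(\mQ) - \delta$ by Cauchy--Schwarz and $\|u\|_2 = 1$. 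Hence $h_{\mT}(u) \geq |\langle x_i, u\rangle| \geq r(\mQ) - \delta$. Since $u$ was arbitrary, $B_{r(\mQ)-\delta} \subseteq \mT$, giving $r(\mT) \geq r(\mQ) - \delta$.

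I do not expect a serious obstacle here; the argument is a short support-function comparison. The one subtlety worth being careful about — and the reason the lemma remarks on correcting an error in \cite{Wang2016} — is that one must argue via the support function / polar characterization rather than by trying to describe the facets of $\mQ$ and perturb them: the vertices $\pm y_i$ need not all be extreme points of $\mQ$, and the combinatorial facet structure of $\mT$ can differ from that of $\mQ$, so a boundary-tracking argument is fragile. Phrasing everything in terms of $h_{\mQ}(u) \geq r(\mQ)$ for all unit $u$ sidesteps this entirely. (In the more refined statement used later, Lemma~\ref{polytope_rel_radii}, the same reasoning is applied with $u$ ranging only over unit vectors in a fixed subspace $S$, which is why the \emph{restricted} inradius is the right quantity; for the present lemma the unrestricted version suffices.)
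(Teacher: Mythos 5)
Your proof is correct, but it takes a genuinely different route from the paper's. The paper argues primally and constructively: it takes each face $F$ of $\mQ$ (a convex combination of signed vertices $s_iy_i$), forms the corresponding perturbed set $F'$ from the $s_ix_i$, shows every point of $F'$ has norm at least $r(\mQ)-\delta$, and then passes to the convex hull of the union $C$ of all the $F'$, which is a symmetric convex subset of $\mT$, to conclude $r(\mT)\geq r(\conv(C))\geq r(\mQ)-\delta$. You instead argue by support functions: $B_{r(\mQ)}\subseteq\mQ$ forces $\max_i|\langle y_i,u\rangle|\geq r(\mQ)$ for every unit $u$, a single Cauchy--Schwarz perturbation gives $\max_i|\langle x_i,u\rangle|\geq r(\mQ)-\delta$, and the standard equivalence between $B_\rho\subseteq\mT$ and $h_{\mT}(u)\geq\rho$ for all unit $u$ (valid since $\mT$ is symmetric, so its largest inscribed ball is centered at the origin) finishes the argument. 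Your key inequality $\max_i|\langle y_i,u\rangle|\geq r(\mQ)$ is essentially the paper's own Lemma~\ref{lem:dual-Qnorm-inradius}, deployed earlier in the pipeline. What each approach buys: the paper's construction exhibits an explicit symmetric convex subset of $\mT$ with large inradius, but it must track the facial structure of $\mQ$ and justify that the boundary of $\conv(C)$ stays far from the origin --- exactly the kind of boundary-tracking you flag as fragile (and which the paper says it is correcting from \cite{Wang2016}); your version avoids all combinatorics, and, as you note, transfers verbatim to the restricted-inradius variant needed later (Lemma~\ref{polytope_rel_radii} and the remark following Lemma~\ref{perturb_rad}) by letting $u$ range only over unit vectors of the subspace $S$.
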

		\begin{proof}
Consider a face $F$ of $\mQ$, the symmetrized convex hull of $\mathcal{Y}$. Without loss of generality, there are vertices $y_1, \ldots, y_k$, and values $s_1,\ldots,s_k \in \{-1,+1\}$, such that
		$$F = \left\{ \sum_{i=1}^k s_iy_iw_i ~\bigg|~ w_i \geq 0, \sum_{i=1}^k w_i = 1\right\}.$$
		Since $F$ is on the boundary of $\mQ$, every point $y \in F$ satisfies $\|y\|_2 \geq r(\mQ)$. Let $x_1,\ldots, x_k$ denote the points in $\mT$ corresponding to the $y_1,\ldots,y_k$. Define $F'$ by
		$$F' = \left\{ \sum_{i=1}^k s_ix_iw_i ~\bigg|~ w_i \geq 0, \sum_{i=1}^k w_i = 1\right\}.$$
		Now suppose $x \in F'$. Then we have
		\begin{align*}
		\|x\|_2 &= \left\|\sum_{i=1}^k s_ix_iw_i\right\|_2\\
			&= \left\|\sum_{i=1}^k s_iy_iw_i + s_iz_iw_i\right\|_2\\
			&\geq \left\|\sum_{i=1}^k s_iy_iw_i\right\|_2 - \sum_{i=1}^k w_i\|z_i\|_2\\
			&\geq r(\mQ)- \sum_{i=1}^k w_i\delta\\
			&\geq r(\mQ)-\delta.\end{align*}

		Let $C$ be the union of $F'$ over all faces $F$ of $\mQ$. Then, by the above argument, every point in $C$ has $\ell_2$ norm at least $r(\mQ)-\delta$. This implies that every point on the boundary of $\conv(C)$ has $\ell_2$ norm at least $r(\mQ)-\delta$; i.e., $$r(\conv(C)) \geq r(\mQ)-\delta.$$ 
		
		On the other hand, the symmetry in $\mQ$ provides pairs of symmetric faces (for the proper faces) $F$ and $-F$, hence pairs of symmetric sets $F'$ and $-F'$. This makes $C$ and $\conv(C)$ symmetric sets. Moreover, by construction, $x_i \in C$ for all $i=1,\ldots, D$. Therefore, $$\conv(C) \subseteq \mT$$ where $\mT$ is the symmetrized convex hull of $x_1,\ldots, x_D$. All in all, we get $$r(\mT) \geq r(\conv(C)) \geq r(\mQ) - \delta,$$ which establishes the claim of the lemma. 
		\end{proof}

	In the above lemma, if all points $x_i$, $y_i$ and $z_i$, for $i=1,\ldots,d$, live in a subspace $S$ of a bigger ambient space, a similar result would hold, where we adjust the definition of inradius to account for the degeneracy. In other words, in such case, assuming $r_S(\mQ)>\delta$, we can guarantee 
	\begin{align}
	r_S(\mT) \geq r_S(\mQ) - \delta. 
	\end{align}
	In our original setup, projecting noise vectors to the true subspace puts us in the above situation. 

		We will also use the following lemma about the relation between the restricted inradius and circumradius of a convex body. This lemma can be bound in \cite{brandenberg2005radii}.

		\begin{lemma}[\cite{brandenberg2005radii}]\label{polytope_radii}
		For any symmetric convex polytope $\mathcal{T}$,
		$$r(\mathcal{T})R(\mathcal{T}^\circ) = 1.$$\end{lemma}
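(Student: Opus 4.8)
Proof proposal for Lemma~\ref{polytope_radii} (the duality $r(\mathcal{T})R(\mathcal{T}^\circ) = 1$ for symmetric convex polytopes).

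The plan is to exploit the fact that polarity exchanges "inscribed ball" and "circumscribed ball" conditions. First I would recall the elementary fact that for any $\rho > 0$, the polar of the Euclidean ball $B_\rho$ is the Euclidean ball $B_{1/\rho}$; this is immediate from the definition of the polar set and the Cauchy--Schwarz inequality, since $\langle x, y\rangle \le 1$ for all $\|x\|_2 \le \rho$ precisely when $\|y\|_2 \le 1/\rho$. Second, I would use the order-reversing property of polarity: if $\mathcal{P} \subseteq \mathcal{Q}$ then $\mathcal{Q}^\circ \subseteq \mathcal{P}^\circ$; and the bipolar theorem, which for a closed convex set containing the origin gives $(\mathcal{P}^\circ)^\circ = \mathcal{P}$. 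Since $\mathcal{T}$ is a symmetric convex polytope it is closed, convex, bounded, and contains $0$ in its interior (the interior claim uses that $\mathcal{T}$ is full-dimensional — if it is not, one works within its affine hull, which is the situation already flagged for the restricted version), so all of these apply and $\mathcal{T}^\circ$ is likewise a symmetric convex body with $0$ in its interior.

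The core computation then goes as follows. By the discussion preceding the lemma, $r(\mathcal{T}) = \sup\{\rho : B_\rho \subseteq \mathcal{T}\}$ because $\mathcal{T}$ is symmetric and convex, so the largest inscribed ball is centered at $0$. Applying polarity to the inclusion $B_\rho \subseteq \mathcal{T}$ and using both facts above, $B_\rho \subseteq \mathcal{T}$ is equivalent to $\mathcal{T}^\circ \subseteq B_\rho^\circ = B_{1/\rho}$. Taking the supremum over admissible $\rho$ on the left corresponds to taking the infimum over $1/\rho$ on the right, which is exactly $R(\mathcal{T}^\circ) = \inf\{ \sigma : \mathcal{T}^\circ \subseteq B_\sigma\}$ — again using that $\mathcal{T}^\circ$ is symmetric and convex so its smallest circumscribing ball is centered at $0$. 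Hence $R(\mathcal{T}^\circ) = 1/r(\mathcal{T})$, i.e.\ $r(\mathcal{T})R(\mathcal{T}^\circ) = 1$.

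The only genuinely delicate point is making sure the suprema/infima are attained (or at least that the limiting inclusions pass through polarity correctly), and that $0$ lies in the interior of $\mathcal{T}$ so that $\mathcal{T}^\circ$ is bounded and the circumradius is finite; for a symmetric polytope with nonempty interior this is automatic, and the boundary/degenerate cases are handled by restricting to the affine span and invoking the restricted-inradius version noted after Lemma~\ref{perturb_rad}. Everything else is the standard polarity dictionary, so I would keep the write-up to a few lines: state the polar-of-a-ball identity, state order-reversal and the bipolar theorem, then chain the equivalences $B_\rho \subseteq \mathcal{T} \iff \mathcal{T}^\circ \subseteq B_{1/\rho}$ and read off the radii. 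Since the paper cites \cite{brandenberg2005radii} for this statement, I would expect the actual proof in the paper to be either omitted or a one-paragraph version of exactly this argument.
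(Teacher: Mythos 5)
Your argument is correct: the chain $B_\rho \subseteq \mathcal{T} \iff \mathcal{T}^\circ \subseteq B_{1/\rho}$ via the polar-of-a-ball identity, order reversal, and the bipolar theorem is the standard derivation, and your observation that symmetry forces both extremal balls to be centered at the origin is exactly the justification needed to read off $r(\mathcal{T})R(\mathcal{T}^\circ)=1$. There is nothing in the paper to compare against: Lemma~\ref{polytope_radii} is imported from \cite{brandenberg2005radii} without proof, and the paper's own work starts only at the restricted version (Lemma~\ref{polytope_rel_radii}), which handles the degenerate, non-full-dimensional case by projecting onto the subspace --- precisely the fix you flag for when $0$ is not in the interior of $\mathcal{T}$, where otherwise $r(\mathcal{T})=0$ and $R(\mathcal{T}^\circ)=\infty$ and the stated identity would be vacuous.
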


		We can then use this to derive the following lemma about the restricted inradius of a symmetric convex polytope.
		\begin{lemma}\label{polytope_rel_radii}
		Suppose that $\mT$ is a symmetric convex polytope that lies in a subspace $S$. Then
		$$r_S(\mT)R_S(\mT^\circ\cap S) = 1.$$\end{lemma}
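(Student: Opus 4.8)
The plan is to reduce to Lemma~\ref{polytope_radii} by carrying out everything \emph{inside} the subspace $S$, viewing $S$ as the ambient Euclidean space. I will assume $S = \mathrm{span}(\mT)$ (otherwise $r_S(\mT)=0$ and $\mT^\circ\cap S$ is unbounded, so the stated product identity is degenerate and must be read with the usual conventions, or excluded by hypothesis). Fix an orthonormal basis of $S$ and let $U\colon S\to\real^{d'}$ be the resulting linear isometry, where $d'=\dim S$. Under $U$, the polytope $\mT$ is carried to a full-dimensional symmetric convex polytope $\widehat{\mT}=U(\mT)\subseteq\real^{d'}$ with the origin in its interior.

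The first step is to show that polarity commutes with restriction to $S$: namely, $U(\mT^\circ\cap S)=\widehat{\mT}^{\,\circ}$, the ordinary polar of $\widehat{\mT}$ in $\real^{d'}$. This holds because for $y\in S$ the condition $\langle x,y\rangle\le 1$ for all $x\in\mT$ is, since $U$ is an isometry, the same as $\langle Ux,Uy\rangle\le 1$ for all $x\in\mT$, i.e. the same as $Uy\in\widehat{\mT}^{\,\circ}$. Since $\widehat{\mT}$ is a polytope containing $0$ in its interior, $\widehat{\mT}^{\,\circ}$ is again a symmetric convex polytope, and therefore so is $\mT^\circ\cap S$.

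The second step is to match the restricted radii with ordinary radii under $U$. Since $U^{-1}$ carries the Euclidean ball of radius $r$ in $\real^{d'}$ exactly to $B_r\cap S$, the characterization $r_S(\mP)=\sup\{r:B_r\cap S\subseteq\mP\}$ gives $r_S(\mT)=r(\widehat{\mT})$, and likewise the characterization of $R_S$ gives $R_S(\mT^\circ\cap S)=R(\widehat{\mT}^{\,\circ})$.

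Finally, I would apply Lemma~\ref{polytope_radii} to the symmetric convex polytope $\widehat{\mT}\subseteq\real^{d'}$ to get $r(\widehat{\mT})\,R(\widehat{\mT}^{\,\circ})=1$, and then substitute the two identifications from the second step to conclude $r_S(\mT)\,R_S(\mT^\circ\cap S)=1$. The one point that needs care is the first step: one has to observe that the polar of $\mT$ taken intrinsically within $S$ is exactly $\mT^\circ\cap S$, and that this intrinsic polar is a genuinely bounded polytope (which is where $S=\mathrm{span}(\mT)$ is used). Everything after that is routine bookkeeping about isometries sending Euclidean balls to Euclidean balls.
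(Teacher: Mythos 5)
Your proposal is correct and follows essentially the same route as the paper: the paper takes $S$ to be a coordinate subspace without loss of generality and uses the coordinate projection $\pi$ in place of your isometry $U$, then verifies the same two identifications ($r_S(\mT)=r(\mT')$ and $\pi(\mT^\circ\cap S)=(\mT')^\circ$, hence $R_S(\mT^\circ\cap S)=R((\mT')^\circ)$) before invoking Lemma~\ref{polytope_radii}. Your explicit attention to why polarity commutes with restriction to $S$, and to the nondegeneracy caveat $S=\operatorname{span}(\mT)$, is a slightly more careful rendering of a step the paper labels as straightforward.
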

		\begin{proof}Suppose $\mT \subseteq \real^n$ and $S$ has dimension $d$. Without loss of generality, we assume that $S$ is the span of the elements $e_1,\ldots, e_d$ where $e_i$ denotes the $i$th vector of the standard basis. Let $\pi : \real^n \to \real^d$ be the map that sends a vector $(x_1,\ldots, x_n)$ to $(x_1,\ldots, x_d)$. Let $\mT' = \pi(\mT)$. Note that this is the same convex polytope as $\mT$, but in $\real^d$ instead of $\real^n$. In particular, $r_S(\mT) = r(\mT')$.

		Moreover, it is straightforward to see that $\pi(\mT^\circ) = \pi(\mT^\circ\cap S) = (\mT')^\circ$. In particular, $R_S(\mT^\circ\cap S) = R((\mT')^\circ)$. By Lemma \ref{polytope_radii}, we then have
		\begin{align*}
		r_S(\mT)R_S(\mT^\circ\cap S) &= r(\mT')R((\mT')^\circ) = 1.\qedhere\end{align*}\end{proof}

	    We can then show the following lemma, which is similar to the final result of Section 5.3.1~in \cite{Wang2016}, but uses the corrected version of their Lemma 16 proved as Lemma \ref{perturb_rad} above, as well as Lemma~\ref{polytope_rel_radii} instead of their Lemma 15. 
	    \begin{lemma}\label{lem:nu1-norm}
	    Under the same assumptions as in Lemma \ref{suff_det_cond}, as well as $r_{S_\ell}(\qil)>\delta$, we have
	    $$\|\nu_1\|_2 \leq \frac{1+\delta\|\nu_2\|_2}{r_{S_\ell}(\qil)-\delta}.$$\end{lemma}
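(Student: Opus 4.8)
The plan is to bound $\|\nu_1\|_2 = \|\proj_{S_\ell}(\nu)\|_2$ by exploiting the dual feasibility constraint $\|(\xil)^T\nu\|_\infty \leq 1$ together with the geometry of the polytope $\qil = \mathcal{SC}(Y_{-i}^{(\ell)})$ and its perturbation. The key observation is that feasibility $\|(\xil)^T\nu\|_\infty \leq 1$ says precisely that $\proj_{S_\ell}(\nu)$ — equivalently $\nu_1$, since the columns of $\xil$ need not lie in $S_\ell$ once noise is added, but we can split $\langle x_j^{(\ell)}, \nu\rangle = \langle y_j^{(\ell)},\nu_1\rangle + \langle z_j^{(\ell)},\nu\rangle$ — interacts with the symmetrized convex hull of the true (noiseless) points. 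So first I would write, for each column $x_j^{(\ell)} = y_j^{(\ell)} + z_j^{(\ell)}$ of $\xil$,
\[
|\langle y_j^{(\ell)}, \nu_1\rangle| \;=\; |\langle y_j^{(\ell)}, \nu\rangle| \;\leq\; |\langle x_j^{(\ell)},\nu\rangle| + |\langle z_j^{(\ell)},\nu\rangle| \;\leq\; 1 + \delta\|\nu\|_2,
\]
using $\|z_j^{(\ell)}\|_2 \leq \delta$ and Cauchy–Schwarz; here I used that $y_j^{(\ell)}\in S_\ell$ so $\langle y_j^{(\ell)},\nu\rangle = \langle y_j^{(\ell)},\nu_1\rangle$. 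This holds with signs, so in fact $\langle \pm y_j^{(\ell)},\nu_1\rangle \leq 1 + \delta\|\nu\|_2$ for every generator $\pm y_j^{(\ell)}$ of $\qil$, hence $\langle w,\nu_1\rangle \leq 1+\delta\|\nu\|_2$ for all $w\in\qil$ by taking convex combinations.

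Next I would reinterpret this as a polar-set membership: dividing through, $\nu_1/(1+\delta\|\nu\|_2)$ lies in the polar $\qil^{\,\circ}$, and since $\nu_1\in S_\ell$ it lies in $\qil^{\,\circ}\cap S_\ell$. Therefore $\|\nu_1\|_2/(1+\delta\|\nu\|_2) \leq R_{S_\ell}(\qil^{\,\circ}\cap S_\ell)$, the restricted circumradius. Now apply Lemma~\ref{polytope_rel_radii} with $\mT = \qil$ (a symmetric convex polytope lying in $S_\ell$): this gives $R_{S_\ell}(\qil^{\,\circ}\cap S_\ell) = 1/r_{S_\ell}(\qil)$. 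Combining,
\[
\|\nu_1\|_2 \;\leq\; \frac{1+\delta\|\nu\|_2}{r_{S_\ell}(\qil)}.
\]
This is close but not quite the claimed bound — the denominator is $r_{S_\ell}(\qil)$ rather than $r_{S_\ell}(\qil)-\delta$, and the numerator has $\|\nu\|_2$ rather than $\|\nu_2\|_2$. The first discrepancy is where Lemma~\ref{perturb_rad} (in its subspace-restricted form, the displayed inequality $r_S(\mT)\geq r_S(\mQ)-\delta$) comes in: rather than bounding $\langle y_j^{(\ell)},\nu_1\rangle$ directly, I should instead argue against the perturbed polytope. Concretely, the cleaner route is: the dual constraint gives $\langle x_j^{(\ell)},\nu\rangle \le 1$ with sign, so $\nu$ lies in the polar of $\mathcal{SC}(\xil)$; projecting the generators $x_j^{(\ell)}$ onto $S_\ell$ and using that $\|\proj_{S_\ell}(z_j^{(\ell)})\|_2\le\|z_j^{(\ell)}\|_2\le\delta$, the projected polytope $\mathcal{SC}(\proj_{S_\ell}(\xil))$ contains (up to the $\delta$-perturbation) a disk of radius $r_{S_\ell}(\qil)-\delta$ in $S_\ell$ by the subspace version of Lemma~\ref{perturb_rad}; and $\langle \nu,\proj_{S_\ell}(x_j^{(\ell)})\rangle = \langle\nu_1,\proj_{S_\ell}(x_j^{(\ell)})\rangle$ differs from $\langle\nu,x_j^{(\ell)}\rangle$ by $\langle\nu_2, \proj_{S_\ell}(x_j^{(\ell)}) - x_j^{(\ell)}\rangle$, an error controlled by $\delta\|\nu_2\|_2$ since only the component of $x_j^{(\ell)}$ orthogonal to $S_\ell$ (which has norm $\le\delta$) pairs with $\nu_2$. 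Chasing these two adjustments produces $\|\nu_1\|_2 \le (1+\delta\|\nu_2\|_2)/(r_{S_\ell}(\qil)-\delta)$, exactly the claim; the hypothesis $r_{S_\ell}(\qil)>\delta$ is precisely what makes the denominator positive and lets Lemma~\ref{perturb_rad} apply.

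The main obstacle I anticipate is bookkeeping the decomposition carefully enough that the numerator collapses to $\delta\|\nu_2\|_2$ rather than $\delta\|\nu\|_2$: one must notice that pairing $\nu$ with $\proj_{S_\ell}(x_j^{(\ell)})$ kills the $\nu_2$-contribution of the noiseless part and leaves only the orthogonal slack of $z_j^{(\ell)}$ against $\nu_2$, and separately that the restricted-circumradius/restricted-inradius duality (Lemma~\ref{polytope_rel_radii}) must be applied to the noiseless polytope $\qil$ while the feasibility information lives on the noisy polytope — bridging these is exactly the role of the subspace-restricted perturbation bound. Everything else is Cauchy–Schwarz, convexity (passing from generators to the whole symmetrized hull), and Moreau orthogonality of $\nu_1,\nu_2$, which is already recorded in the text.
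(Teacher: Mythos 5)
Your corrected route is exactly the paper's proof: split $\nu=\nu_1+\nu_2$, use dual feasibility to get $|\langle \proj_{S_\ell}(x),\nu_1\rangle|\leq 1+\delta\|\nu_2\|_2$, conclude $\nu_1/(1+\delta\|\nu_2\|_2)$ lies in the polar of the projected noisy polytope $\mT=\mathcal{SC}(\proj_{S_\ell}(\xil))$, and then apply Lemma~\ref{polytope_rel_radii} to $\mT$ together with the subspace version of Lemma~\ref{perturb_rad} to get $r_{S_\ell}(\mT)\geq r_{S_\ell}(\qil)-\delta$. The initial detour (which yields $\delta\|\nu\|_2$ over $r_{S_\ell}(\qil)$) is correctly abandoned, and the final argument matches the paper's in all essentials.
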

		\begin{proof}

			Since $\nu$ is a feasible point of $D(x_i,X_{-i}^{(\ell)},\lambda)$, by the definition of $D$ we have
			$$\|(\xil)^T\nu\|_\infty \leq 1.$$
			For any column $x = y+z$ of $\xil$, we have $y\in S_\ell$, while $\nu_1\in S_\ell$ and $\nu_2 \in S_\ell^\perp$. Therefore, 
			\begin{align*}
			1 & \geq |\langle x,\nu \rangle |\\
			&\geq |\langle x,\nu_1 \rangle | - |\langle x,\nu_2 \rangle | \\
			&= |\langle \proj_{S_\ell}(x) ,\nu_1 \rangle | - |\langle z,\nu_2 \rangle | \\
			&\geq |\langle \proj_{S_\ell}(x),\nu_1\rangle| - \delta\|\nu_2\|_2
			\end{align*}
implying 
			\begin{align}\label{proj_bound}
			|\langle \proj_{S_\ell}(x),\nu_1\rangle| \leq 1 + \delta\|\nu_2\|_2.	
			\end{align}
			Let $M$ have columns given by $\proj_{S_\ell}(x)=y + \proj_{S_\ell}(z)$, for all $x\in \xil$, and let $\mT = \mathcal{SC}(M)$ be the symmetrized convex hull of the columns of $M$. Then (\ref{proj_bound}) implies that
			\begin{equation}\label{polar_proj}
			\dfrac{\nu_1}{1+\delta\|\nu_2\|_2} \in \mT^\circ.\end{equation}
			Since $\nu_1 \in S_\ell$, the circumradius of $\mT^\circ\cap S_\ell$ restricted to $S_\ell$ provides a bound on the norm of $\nu_1/(1+\delta\|\nu_2\|_2)$. Therefore,
			\begin{equation}
			\|\nu_1\|_2 \leq R_{S_\ell}(\mT^\circ\cap S_\ell)(1+\delta\|\nu_2\|_2).\end{equation}
			We now wish to bound $R_{S_\ell}(\mT^\circ\cap S_\ell)$. By Lemma \ref{polytope_rel_radii}, we have
			$$R_{S_\ell}(\mT^\circ\cap S_\ell) = \dfrac{1}{r_{S_\ell}(\mT)}.$$
			It now suffices to give a lower bound on $r_{S_\ell}(\mT)$. Recall that $\mT$ is the symmetrized convex hull of the columns of $Y_{-i}^{(\ell)}$, perturbed by the projection of columns of $Z_{-i}^{(\ell)}$ onto $S_\ell$. Since each column $z$ has norm at most $\delta$, its projection onto $S_\ell$ satisfies the same norm bound, and we can apply Lemma \ref{perturb_rad} to get
			$$r_{S_\ell}(\mT) \geq r_{S_\ell}(\mathcal{SC}(Y_{-i}^{(\ell)}))-\delta = r_{S_\ell}(\qil)-\delta.$$
			All in all,
			\[\|\nu_1\|_2 \leq \dfrac{1+\delta\|\nu_2\|_2}{r_{S_\ell}(\qil)-\delta}.\qedhere\] 
		\end{proof}
In bounding $\|\nu_1\|_2$, in Lemma \ref{perturb_rad} and Lemma \ref{lem:nu1-norm}, we have assumed $r_{S_\ell}(\qil)>\delta$. Since $r_\ell \coloneqq \min_{i: x_i \in \xl}r_{S_\ell}(\qil)$ and $r \coloneqq \min_{\ell=1,\ldots,L} r_\ell$, we have essentially assumed $r>\delta$. This is ensured by the conditions of Theorem \ref{det_thm}. 
\subsubsection{Bounding $\|\nu_2\|_2$}
		We bound $\|\nu_2\|_2$ in the following lemma. The result is similar to Section 5.3.2~of \cite{Wang2016}, but using the correct notions of restricted inradius and restricted circumradius discussed in Lemma~\ref{polytope_rel_radii}. 

		\begin{lemma}\label{lem:nu2-norm}
		Suppose every $y_i$, for $i=1,\ldots,N$, can be expressed as a linear combination of other points in the same subspace. In other words, suppose for any $i=1,\ldots,N$, and the corresponding $\ell\in\{1,\ldots, L\}$ for which $y_i\in S_\ell$, we have $y_i \in \operatorname{span}(Y_{-i}^\ell)$. 
		Then, 
		$$\|\nu_2\|_2 \leq \dfrac{\lambda\delta}{r_{S_\ell}(\qil)}+\lambda\delta.$$\end{lemma}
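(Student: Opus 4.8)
The plan is to exploit the optimality conditions for the idealized dual problem $D(x_i,X_{-i}^{(\ell)},\lambda)$ to express $\nu_2 = \nu - \proj_{S_\ell}(\nu)$ in a form that can be bounded geometrically. Recall that $\nu = \lambda e_i$ where $e_i = X_{-i}^{(\ell)}c_i^\ell - x_i$. Since the columns of $X_{-i}^{(\ell)}$ are $y + z$ with $y \in S_\ell$, we can write $X_{-i}^{(\ell)}c_i^\ell = Y_{-i}^{(\ell)}c_i^\ell + Z_{-i}^{(\ell)}c_i^\ell$, and similarly split $x_i = y_i + z_i$. Projecting onto $S_\ell^\perp$ kills $Y_{-i}^{(\ell)}c_i^\ell$ and $y_i$, leaving $\nu_2 = \lambda\bigl(\proj_{S_\ell^\perp}(Z_{-i}^{(\ell)}c_i^\ell) - \proj_{S_\ell^\perp}(z_i)\bigr)$. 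The second term has norm at most $\lambda\delta$, which accounts for the additive $\lambda\delta$ in the claimed bound; the first term is controlled by $\lambda\delta\|c_i^\ell\|_1$ since each $z$ has norm at most $\delta$.

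So the key remaining step is to bound $\|c_i^\ell\|_1 \le 1/r_{S_\ell}(\qil)$. This is the standard duality-in-disguise argument: by strong duality and complementary slackness (condition 1 of Lemma \ref{lem:opt_cond}, i.e. $(X_{-i}^{(\ell)})_S^T\nu_i = \sign((c_i^\ell)_S)$), we have $\|c_i^\ell\|_1 = \langle \sign((c_i^\ell)_S), (c_i^\ell)_S\rangle = \langle (X_{-i}^{(\ell)})_S^T\nu_i, (c_i^\ell)_S\rangle = \langle \nu_i, X_{-i}^{(\ell)}c_i^\ell\rangle$. The hypothesis $y_i \in \operatorname{span}(Y_{-i}^{(\ell)})$ lets us argue that the feasible region of the idealized primal is well-posed and that $\nu_i$ lies in (a scaling of) the polar of the symmetrized convex hull of the true columns restricted to $S_\ell$; more directly, since $\|(X_{-i}^{(\ell)})^T\nu_i\|_\infty \le 1$, the vector $\proj_{S_\ell}(\nu_i)$ lies in the polar $(\mathcal{SC}(Y_{-i}^{(\ell)}))^\circ \cap S_\ell = \qil^\circ \cap S_\ell$ (up to the $O(\delta)$ perturbation from the noise, which one either absorbs or handles as in Lemma \ref{lem:nu1-norm}), hence $\|\proj_{S_\ell}(\nu_i)\|_2 \le R_{S_\ell}(\qil^\circ\cap S_\ell) = 1/r_{S_\ell}(\qil)$ by Lemma \ref{polytope_rel_radii}. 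Combining $\|c_i^\ell\|_1 \le \|\nu_1\|_2 \cdot (\text{something bounded by }1)$ with the circumradius bound yields $\|c_i^\ell\|_1 \le 1/r_{S_\ell}(\qil)$.

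Putting these together, $\|\nu_2\|_2 \le \lambda\delta\|c_i^\ell\|_1 + \lambda\delta \le \dfrac{\lambda\delta}{r_{S_\ell}(\qil)} + \lambda\delta$, as claimed. The main obstacle I expect is getting the bound on $\|c_i^\ell\|_1$ cleanly: one must be careful that the chain $\|c_i^\ell\|_1 = \langle \nu_i, X_{-i}^{(\ell)}c_i^\ell\rangle = \langle \nu_i, x_i + e_i\rangle$ is bounded correctly — in particular whether the right comparison object is $\qil$ itself or its noisy perturbation $\mathcal{T}$, and whether an extra factor tracking $\|e_i\|$ or $\|\nu\|$ sneaks in. The hypothesis that every $y_i$ is self-expressive is exactly what rules out the degenerate case $y_i \notin \operatorname{span}(Y_{-i}^{(\ell)})$ where the inradius $r_{S_\ell}(\qil)$ would effectively be measured in the wrong-dimensional subspace; I would invoke it precisely at the point of identifying the correct polar set.
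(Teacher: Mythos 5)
Your first step is exactly the paper's: from $\nu=\lambda e_i$ and the fact that projecting onto $S_\ell^\perp$ annihilates all the $Y$-components, you obtain $\|\nu_2\|_2\le \lambda\delta(1+\|c_i\|_1)$, and the self-expressiveness hypothesis is indeed needed only to make an auxiliary noiseless program feasible. The gap is in how you bound $\|c_i\|_1$. You propose to apply strong duality to the \emph{noisy} idealized problem, writing $\|c_i\|_1=\langle \nu_i, X_{-i}^{(\ell)}c_i\rangle$ and then placing $\proj_{S_\ell}(\nu_i)$ inside $(\qil)^\circ\cap S_\ell$. But $\nu_i$ is only feasible for the polar of the \emph{perturbed} polytope $\mT=\mathcal{SC}(\proj_{S_\ell}(X_{-i}^{(\ell)}))$, not of $\qil$ itself, so the circumradius you can actually invoke is $1/(r_{S_\ell}(\qil)-\delta)$ — this is precisely the content of Lemma \ref{lem:nu1-norm}, whose conclusion $\|\nu_1\|_2\le (1+\delta\|\nu_2\|_2)/(r_{S_\ell}(\qil)-\delta)$ moreover depends on $\|\nu_2\|_2$. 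Chasing your chain through gives $\|c_i\|_1\le \langle\nu_i,x_i\rangle\le \|\nu_1\|_2+\delta\|\nu_i\|_2$, hence a self-referential inequality for $\|\nu_2\|_2$ whose solution carries $r_{S_\ell}(\qil)-\delta$ in the denominator plus additional $O(\lambda\delta^2)$ correction factors; it does not produce the clean constant $1/r_{S_\ell}(\qil)$ in the statement. You flag this obstacle yourself, but the route you sketch does not resolve it.

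The paper avoids this by never bounding $\|c_i\|_1$ through the noisy dual. Instead it compares objective values: for any feasible $(\hat c_i,\hat e_i)$ of $P(x_i,X_{-i}^{(\ell)},\lambda)$, optimality and $\lambda e_i=\nu$ give $\|c_i\|_1\le \|\hat c_i\|_1+\frac{\lambda}{2}\|\hat e_i\|_2^2-\frac{1}{2\lambda}\|\nu_2\|_2^2$. It then takes $\hat c_i$ to be an optimal solution of the \emph{noiseless} program $\min\|c\|_1$ subject to $y_i=Y_{-i}^{(\ell)}c$ (feasible exactly by the hypothesis of the lemma), whose minimal-norm dual optimum lies in $(\qil)^\circ\cap S_\ell$, so Lemma \ref{polytope_rel_radii} yields the exact constant $\|\hat c_i\|_1\le 1/r_{S_\ell}(\qil)$, together with $\|\hat e_i\|_2\le\delta(1+\|\hat c_i\|_1)$. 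The resulting inequality reads $f(\|\nu_2\|_2)\le f\bigl(\lambda\delta(1+\|\hat c_i\|_1)\bigr)$ with $f(x)=x+\frac{\delta}{2}x^2$ monotone on $x\ge 0$, which collapses to the stated bound. You would need to add this comparison-with-the-noiseless-solution step (or an equivalent device) to close your argument and recover the claimed constant.
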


		\begin{proof}
		Recall that $\nu$ is a solution of $D(x_i,X_{-i}^{(\ell)},\lambda)$, and that by (\ref{slack}), we have $\nu = \lambda e_i$. Therefore, $\nu = \lambda(x_i-X_{-i}^{(\ell)}c_i)$. In the following, we will let $c_{ij}$ denote the $j$th entry of $c_i$. Therefore,
		\begin{align*}
		\|\nu_2\|_2 &= \|\proj_{S_\ell^\perp}(\nu)\|_2\\
		&= \lambda \|\proj_{S_\ell^\perp}(x_i-X_{-i}^{(\ell)}c_i)\|_2\\
		&= \lambda \|\proj_{S_\ell^\perp}(z_i) + \proj_{S_\ell^\perp}(Z_{-i}^{(\ell)}c_i)\|_2\\
		&\leq \lambda \|\proj_{S_\ell^\perp}(z_i)\|_2 + \lambda \| \proj_{S_\ell^\perp} (Z_{-i}^{(\ell)}c_i)\|_2\\
		&\leq \lambda \delta + \lambda \bigl(\textstyle\sum_j |c_{ij}| \|\proj_{S_\ell^\perp} (z_j)\|_2 \bigr)\\
		&\leq \lambda \delta + \lambda \textstyle\sum_j |c_{ij}|\delta\\
		&\leq \lambda\delta(1+\|c_i\|_1).
		\end{align*}

		We wish to bound $\|c_i\|_1$. Note that for any other feasible point $(\hat{c}_i,\hat{e}_i)$ of $P(x_i,X_{-i},\lambda)$, by optimality we must have
		\begin{align}\label{eq:opt-val-bound}
		\|c_i\|_1 + \dfrac{\lambda}{2}\|e_i\|_2^2 \leq \|\hat{c}_i\|_1 + \dfrac{\lambda}{2}\|\hat{e}_i\|_2^2.\end{align}
		By (\ref{slack}), we have $\lambda e_i = \nu$. Therefore,
		$$\dfrac{\lambda}{2}\|e_i\|_2^2 = \dfrac{1}{2\lambda}\|\nu\|_2^2 \geq \dfrac{1}{2\lambda}\|\nu_2\|_2^2.$$
		Therefore,
		\begin{align*}
		\|\nu_2\|_2  \leq \lambda\delta(1+\|c_i\|_1)
		\leq \lambda\delta + \lambda\delta\left(\|\hat{c}_i\|_1 + \dfrac{\lambda}{2}\|\hat{e}_i\|_2^2 - \dfrac{1}{2\lambda}\|\nu_2\|_2^2\right)
		\end{align*}
		which implies 
		\begin{align}
		\|\nu_2\|_2 +\dfrac{\delta}{2}\|\nu_2\|_2^2 &\leq \lambda\delta + \lambda\delta\left(\|\hat{c}_i\|_1 + \dfrac{\lambda}{2}\|\hat{e}_i\|_2^2\right) \label{nu2_bound}.
		\end{align}

		We now wish to construct $(\hat{c}_i, \hat{e}_i)$ such that we can bound $\|\hat{c}_i\|_1 + \dfrac{\lambda}{2}\|\hat{e}_i\|_2^2$ on the right hand side of \eqref{nu2_bound}. 
		Consider the following optimization program
		\begin{equation}\label{hat_opt}
		\min_{c}~\|c\|_1 \quad \text{subject~to}~~y_i = Y_{-i}^{(\ell)}c
		\end{equation}
		and its Lagrangian dual as 
		\begin{equation}\label{hat_opt_dual}
		\max_{\nu}~\langle y_i,\nu\rangle\quad \text{subject~to}~~\|(Y_{-i}^{(\ell)})^T\nu\|_\infty \leq 1.
		\end{equation}
 
	    Note that outside of a degenerate situation, avoided by the assumption of the lemma, the optimization problem in \eqref{hat_opt} is feasible. This program is also clearly bounded. Now, since the primal has a finite optimal value, then so does the dual, the optimal values coincide, and optimal solutions to both \eqref{hat_opt} and \eqref{hat_opt_dual} exist.

	    Let $\hat{c}_i$ be a solution to \eqref{hat_opt}. Moreover, consider $\hat{e}_i = x_i - X_{-i}^{(\ell)}\hat{c}_i = z_i - Z_{-i}^{(\ell)}\hat{c}_i$ to make $(\hat{c}_i, \hat{e}_i)$ feasible for $P(x_i,X_{-i},\lambda)$. 
		Let $\hat{\nu}$ be a solution to \eqref{hat_opt_dual} of smallest norm. 
		Note that if $\nu$ is optimal in \eqref{hat_opt_dual} then so is $\proj_{S_\ell}\nu$ as this does not alter the objective nor does it violate any constraint, as $y_i$ and the columns of $Y_{-i}^{(\ell)}$ all lie in $S_\ell$. Therefore, if we take $\hat{\nu}$ to be the solution of the smallest norm, we must have $\hat{\nu} \in S_\ell$. We also know by (\ref{hat_opt_dual}) that $\hat{\nu} \in (\qil)^\circ$. Therefore,
		$$\hat{\nu} \in (\qil)^\circ\cap S_\ell.$$
		By Lemma \ref{polytope_rel_radii},
		\begin{align*}
		\|\hat{\nu}\|_2 \leq R_{S_\ell}((\qil)^\circ\cap S_\ell)
		= \frac{1}{r_{S_\ell}(\qil)}.\end{align*}
		By strong duality between \eqref{hat_opt} and \eqref{hat_opt_dual}, we have
		\begin{equation}\label{hatc_bound}
		\|\hat{c}_i\|_1 = \langle y_i,\hat{\nu}\rangle \leq \|\hat{\nu}\|_2\|y_i\|_2 \leq \frac{1}{r_{S_\ell}(\qil)}
		\end{equation}
		as we have assumed $\|y_i\|_2=1$. 
		We can also bound $\|\hat{e}_i\|_2^2$, using the fact that
		\begin{equation}\label{hate_bound}
		\|\hat{e}_i\|_2 
		= \|z_i - Z_{-i}^{(\ell)}\hat{c}_i\|_2
		\leq \|z_i\|_2 + \sum_j |\hat{c}_{ij}|\|z_j\| \leq \delta(1+\|\hat{c}_i\|_1).
		\end{equation}
		Plugging (\ref{hate_bound}) in to (\ref{nu2_bound}) we get
		\begin{align*}
		\|\nu_2\|_2 + \frac{\delta}{2}\|\nu_2\|_2^2 \leq \lambda\delta(1+\|\hat{c}_i\|_1) + \dfrac{\delta}{2}\left(\lambda\delta(1+\|\hat{c}_i\|_1)\right)^2.\numberthis\label{nu2_more}\end{align*}
		Observe that the above can be written as $f(\|\nu_2\|_2) \leq f\big(\lambda\delta(1+\|\hat{c}_i\|_1)\big)$ for $f(x) = x + \frac{\delta}{2}x^2$. Since $f$ is monotonically increasing for $x \geq 0$, we get 
		\[
		\|\nu_2\|_2 \leq \lambda\delta(1+\|\hat{c}_i\|_1) 
		\]
		which combined with \eqref{hatc_bound} establishes the claimed bound. 
		\end{proof}

\subsubsection{Final Bound on $\|\nu\|_2$}
		Combining the bounds on $\|\nu_1\|_2$ and $\|\nu_2\|_2$ we get the following lemma.
		\begin{lemma}\label{nu_norm}
		If $0<\delta< r_{S_\ell}(\qil) $, 
		\begin{align*}
		\|\nu\|_2 \leq 
		\frac{1+\lambda\delta(1+r_{S_\ell}(\qil))}{r_{S_\ell}(\qil)-\delta}.
		\end{align*}
		\end{lemma}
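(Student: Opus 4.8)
The plan is to simply glue together Lemma~\ref{lem:nu1-norm} and Lemma~\ref{lem:nu2-norm} via the orthogonal decomposition $\nu = \nu_1 + \nu_2$ introduced just above, where $\nu_1 = \proj_{S_\ell}(\nu)$ and $\nu_2 = \nu - \nu_1$. Since $\nu_1 \perp \nu_2$, the triangle inequality gives $\|\nu\|_2 \le \|\nu_1\|_2 + \|\nu_2\|_2$. Writing $r \coloneqq r_{S_\ell}(\qil)$ for brevity and inserting the bound from Lemma~\ref{lem:nu1-norm} (whose hypothesis $r > \delta$ is exactly our standing assumption $0 < \delta < r_{S_\ell}(\qil)$), I would obtain
\[
\|\nu\|_2 \;\le\; \frac{1 + \delta\|\nu_2\|_2}{r - \delta} + \|\nu_2\|_2 \;=\; \frac{1 + \delta\|\nu_2\|_2 + (r-\delta)\|\nu_2\|_2}{r-\delta} \;=\; \frac{1 + r\|\nu_2\|_2}{r-\delta},
\]
the point being that the $\delta\|\nu_2\|_2$ term and the $(r-\delta)\|\nu_2\|_2$ term combine cleanly into $r\|\nu_2\|_2$.

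Next I would substitute the bound from Lemma~\ref{lem:nu2-norm}, namely $\|\nu_2\|_2 \le \lambda\delta/r + \lambda\delta = \lambda\delta\,(1 + 1/r)$, so that $r\|\nu_2\|_2 \le \lambda\delta\,(r + 1)$. Plugging this into the previous display yields
\[
\|\nu\|_2 \;\le\; \frac{1 + \lambda\delta\,(r+1)}{r - \delta} \;=\; \frac{1 + \lambda\delta\,(1 + r_{S_\ell}(\qil))}{r_{S_\ell}(\qil) - \delta},
\]
which is precisely the claimed estimate.

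The argument is only a couple of lines of algebra, so there is no genuine obstacle; the only thing requiring care is bookkeeping of hypotheses. Lemma~\ref{lem:nu1-norm} is applied under the setting of Lemma~\ref{suff_det_cond} together with $r_{S_\ell}(\qil) > \delta$, and Lemma~\ref{lem:nu2-norm} requires the self-expressiveness assumption that each $y_i$ lies in $\operatorname{span}(Y_{-i}^{\ell})$; both are in force throughout this subsection. Finally, dividing by $r - \delta$ is legitimate precisely because of the stated hypothesis $0 < \delta < r_{S_\ell}(\qil)$.
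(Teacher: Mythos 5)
Your proposal is correct and follows essentially the same route as the paper: decompose $\nu=\nu_1+\nu_2$, apply the triangle inequality, and substitute the bounds from Lemma~\ref{lem:nu1-norm} and Lemma~\ref{lem:nu2-norm}; the only (cosmetic) difference is that you combine terms over the common denominator $r-\delta$ before inserting the $\|\nu_2\|_2$ bound, whereas the paper substitutes first, and both simplify to the same expression. The hypothesis bookkeeping is also handled correctly.
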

		\begin{proof}
		For simplicity, let $r$ denote $r_{S_\ell}(\qil)$. Combining the results of Lemma \ref{lem:nu1-norm} and Lemma \ref{lem:nu2-norm} we get  
		\begin{align*}
				\|\nu\|_2 
				&\leq \|\nu_1\|_2  + \|\nu_2\|_2 \\
				&\leq \dfrac{1+\lambda\delta^2\left(\frac{1}{r}+1\right)}{r-\delta} + \lambda\delta\left(\frac{1}{r}+1\right) \\
				&\leq \frac{1}{r(r-\delta)} \left( r + \lambda\delta^2(r+1) +\lambda\delta (r+1)(r-\delta) \right)  \\
				& = \frac{1+\lambda\delta(1+r)}{r-\delta} \qedhere
		\end{align*}
		\end{proof}

	\subsection{Proof of Lemma \ref{intermediate_det}: Establishing the Subspace Detection Property}

		\begin{proof}[Proof of Lemma \ref{intermediate_det}]By Lemma \ref{suff_det_cond}, it suffices to show that for all $i$ and $\ell$ such that $y_i \in S_\ell$, we have
		\begin{equation}\label{temp_suff}
		(\mu(\xl)+\delta)\|\nu\|_2 < 1.\end{equation}
		For simplicity, let $r$ denote $r_{S_\ell}(\qil)$ and let $\mu$ denote $\mu(\xl)$. By Lemma \ref{nu_norm}, we have
		\begin{align*}
		(\mu+\delta)\|\nu\|_2
		\leq 	(\mu+\delta) \frac{1+\lambda\delta(1+r)}{r-\delta}.
		\end{align*}
		Therefore, \eqref{temp_suff} holds if
		\begin{align*}
			(\mu+\delta) \frac{1+\lambda\delta(1+r)}{r-\delta} < 1,
		\end{align*}
which can be equivalently stated as 
		\begin{align*}
			\lambda\delta(1+r)(\mu+\delta)  < r-\mu-2\delta.
		\end{align*}
	
		Since we assumed all of the original points in $\mathcal{Y}$ are on the unit sphere, we have $r = r(\qil) \leq 1$. Therefore, the following inequality ensures the above inequality, 
		\begin{align*}
		2\lambda\delta < \frac{r-\mu-2\delta}{\mu+\delta}.
		\end{align*}
		Since the right-hand side decreases as $r=r_{S_\ell}(\qil)$ decreases, it suffices to satisfy this condition for $i$ minimizing $r_{S_\ell}(\qil)$, which is how we defined $r_\ell$. This finishes the proof. 
		\end{proof}

	\subsection{Proof of Lemma \ref{lambda_lower}: Non-trivial Solution}
		Let us first state and prove the following result from convex geometry. 
	    \begin{lemma}\label{lem:dual-Qnorm-inradius}
	    For a set of points $\mathcal{Y} = \{y_1,\ldots, y_D\}\subset \mathbb{R}^d$, denote the symmetrized convex hull of $\mathcal{Y}$ by $\mQ$. Suppose $y_1,\ldots, y_D$ span the whole space. 
	    Then for any vector $u \in \mathbb{R}^d$ with $\|u\|_2=1$ and any $i\in \{1,\ldots,D\}$, we have
	    $$\max_{i=1,\ldots,D}~|\langle y_i,u\rangle| \geq r(\mQ).$$
	    Moreover, there exists a unit vector $u$ that achieves the above with equality. 
	    \end{lemma}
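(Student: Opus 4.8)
The plan is to connect the left-hand side $\max_i |\langle y_i, u\rangle|$ to the polar body $\mQ^\circ$ and then invoke Lemma~\ref{polytope_radii}. Observe that $\max_{i=1,\ldots,D} |\langle y_i,u\rangle|$ equals the smallest scalar $t>0$ such that $u/t \in \mQ^\circ$; this is exactly the support-function/gauge duality, using that $\mQ^\circ = \{w : |\langle y_i,w\rangle| \le 1 \text{ for all } i\}$ because $\mQ$ is the symmetrized convex hull of the $y_i$. Equivalently, $\max_i |\langle y_i,u\rangle| = \|u\|_{\mQ^\circ}^{\text{gauge}}$, the Minkowski gauge of $\mQ^\circ$ evaluated at $u$. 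Since $y_1,\ldots,y_D$ span $\mathbb{R}^d$, $\mQ$ is a full-dimensional symmetric polytope, hence $\mQ^\circ$ is bounded, so this gauge is a genuine norm.

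Next I would lower-bound this gauge using the circumradius of $\mQ^\circ$: for any unit vector $u$, since $\mQ^\circ \subseteq B_{R(\mQ^\circ)}$, the point $u/\max_i|\langle y_i,u\rangle|$ lies in $\mQ^\circ$ and therefore has norm at most $R(\mQ^\circ)$, giving
\[
\frac{1}{\max_i |\langle y_i,u\rangle|} = \left\| \frac{u}{\max_i|\langle y_i,u\rangle|}\right\|_2 \le R(\mQ^\circ),
\]
so $\max_i |\langle y_i,u\rangle| \ge 1/R(\mQ^\circ)$. By Lemma~\ref{polytope_radii}, $r(\mQ) R(\mQ^\circ) = 1$, hence $1/R(\mQ^\circ) = r(\mQ)$, which is the desired inequality. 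For the equality case, I would take $u$ to be the direction in which $\mQ^\circ$ attains its circumradius: there is a vertex (or boundary point) $w^\star$ of $\mQ^\circ$ with $\|w^\star\|_2 = R(\mQ^\circ)$, and setting $u = w^\star/\|w^\star\|_2$ gives $u/R(\mQ^\circ) = w^\star/\|w^\star\|_2^2 \in \mQ^\circ$ while no larger multiple of $u$ stays in $\mQ^\circ$ (otherwise $\mQ^\circ$ would extend beyond radius $R(\mQ^\circ)$ in direction $u$, contradicting maximality); thus $\max_i |\langle y_i, u\rangle| = 1/R(\mQ^\circ) = r(\mQ)$.

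The main obstacle is being careful about the polar/gauge bookkeeping: verifying that $\mQ^\circ = \{w : \max_i |\langle y_i,w\rangle| \le 1\}$ (which follows since for $x = \sum_i s_i y_i w_i$ with $s_i \in \{\pm1\}$, $w_i \ge 0$, $\sum w_i = 1$, we have $\langle x,w\rangle \le \max_i |\langle y_i,w\rangle|$, and conversely each $\pm y_i \in \mQ$), and that the spanning hypothesis makes $\mQ^\circ$ bounded so that $R(\mQ^\circ)$ is finite and the gauge is a norm rather than merely a seminorm. The equality-attaining direction also requires knowing that the supremum defining $R(\mQ^\circ)$ is achieved, which holds because $\mQ^\circ$ is compact (closed, being an intersection of closed halfspaces, and bounded). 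Once these points are pinned down, the argument is a one-line consequence of Lemma~\ref{polytope_radii}.
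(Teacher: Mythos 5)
Your proof is correct and takes essentially the same route as the paper's: both identify $\max_i|\langle y_i,u\rangle|$ with the gauge of $\mQ^\circ$ (equivalently, the dual norm of $\|\cdot\|_\mQ$) and reduce the whole claim to the identity $r(\mQ)R(\mQ^\circ)=1$ from Lemma~\ref{polytope_radii}, with the paper packaging the inequality and the attainment into a single chain of equalities $1/r(\mQ)=R(\mQ^\circ)=\max_{w\neq 0}\|w\|_2/\|w\|_{\mQ}^*$. One small slip in your equality case: the point you need to exhibit in $\mQ^\circ$ is $R(\mQ^\circ)\,u=w^\star$ itself, not $u/R(\mQ^\circ)=w^\star/\|w^\star\|_2^2$ (the latter has norm $1/R(\mQ^\circ)$ and lies outside $\mQ^\circ$ whenever $R(\mQ^\circ)<1$); with that reciprocal corrected, $\max_i|\langle y_i,u\rangle|=1/R(\mQ^\circ)=r(\mQ)$ follows exactly as you state.
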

	    \begin{proof}
	    Let us rephrase the claim of the lemma. Since $\mQ$ is convex, full-dimensional, and symmetric, it has the origin in its relative interior. Therefore, the symmetric gauge function associated with $\mQ$, given by 
	    \[
	    \|w\|_\mQ \coloneqq \inf \{\gamma>0 :~ w \in \gamma\mQ \}
	    \]
	    is a norm. Consider the dual norm $\|v\|_\mQ^* \coloneqq \sup \{ \langle v,w \rangle:~ \|w\|_\mQ \leq 1 \}$. It is known that $\|v\|_\mQ^* = \|v\|_{\mQ^\circ}$ for all $v$. Moreover, as $\mQ$ is a polytope, it is easy to show that 
	    \[
	    \|v\|_\mQ^* = \max_{i=1,\ldots, D}~ |\langle y_i , v\rangle |. 
	    \]
	    Therefore, we shall prove 
	    \begin{align}
	    r(\mQ) = \min_{u\neq 0} \frac{\|u\|_\mQ^*}{\|u\|_2}
	    \end{align}
	    which establishes both of the claims of the lemma. Observe that 
	    \begin{align*}
	    \frac{1}{r(\mQ)} 
	    = R(\mQ^\circ) 
	    = \max_{w\in \partial (\mQ^\circ)} \|w\|_2
	    = \max_{w\neq 0}~\frac{\|w\|_2}{\|w\|_{\mQ^\circ}}
	    = \max_{w\neq 0}~\frac{\|w\|_2}{\|w\|_{\mQ}^*}
	    \end{align*}
	    which is the desired statement. 
	    \end{proof}

		The next lemma was first stated in Section 5.4~of \cite{Wang2016}, but we added an auxiliary lemma required in the proof, which is stated as Lemma \ref{lem:dual-Qnorm-inradius} above.
		\begin{lemma}\label{lambda_lower2}If for all $\ell\in\{1,\ldots,L\}$,
	    \begin{align}\label{eq:lem-lambda_lower2}
	    \lambda > \frac{1}{r_\ell-2\delta-\delta^2} >0	
	    \end{align}
	    then, the solution $(c,e)$ to $P(x_i,X_{-i},\lambda)$ satisfies $c \neq 0$.\end{lemma}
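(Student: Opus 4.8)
The plan is to pass to the reduced objective $f(c) \coloneqq \|c\|_1 + \frac{\lambda}{2}\|X_{-i}c - x_i\|_2^2$, noting that $(c,e)$ with $e = x_i - X_{-i}c$ solves $P(x_i,X_{-i},\lambda)$ exactly when $c$ minimizes $f$. Thus it suffices to show $c = 0$ is \emph{not} a minimizer of $f$; since $f$ is convex, this is equivalent to $0 \notin \partial f(0)$. Now $\partial f(0) = \{v - \lambda X_{-i}^T x_i : \|v\|_\infty \le 1\}$, because the subdifferential of $\|\cdot\|_1$ at the origin is the $\ell_\infty$ unit ball and the smooth term has gradient $-\lambda X_{-i}^T x_i$ at $c = 0$. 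Hence $c = 0$ minimizes $f$ iff $\|X_{-i}^T x_i\|_\infty \le 1/\lambda$, and the lemma reduces to establishing the strict inequality $\|X_{-i}^T x_i\|_\infty > 1/\lambda$.

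The second step is a geometric lower bound on $\|X_{-i}^T x_i\|_\infty$. Write $x_i = y_i + z_i$ and $x_j = y_j + z_j$ for the columns $x_j$ of $X_{-i}^{(\ell)}$, where $y_i, y_j \in S_\ell$, $\|y_i\|_2 = \|y_j\|_2 = 1$, and $\|z_i\|_2,\|z_j\|_2 \le \delta$. Restricting the maximum defining $\|X_{-i}^T x_i\|_\infty$ to columns of $X_{-i}^{(\ell)}$ only decreases it. Applying Lemma~\ref{lem:dual-Qnorm-inradius} inside $S_\ell$ with the unit vector $u = y_i$ (the columns of $Y_{-i}^{(\ell)}$ span $S_\ell$ by the self-expressiveness assumption) gives $\max_j |\langle y_j, y_i\rangle| \ge r_{S_\ell}(\qil) \ge r_\ell$. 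For the index $j$ attaining this maximum, expanding $\langle x_j, x_i\rangle = \langle y_j, y_i\rangle + \langle y_j, z_i\rangle + \langle z_j, y_i\rangle + \langle z_j, z_i\rangle$ and bounding the last three terms by $\delta$, $\delta$, and $\delta^2$ respectively via Cauchy--Schwarz and the stated norm bounds yields $|\langle x_j, x_i\rangle| \ge r_\ell - 2\delta - \delta^2$. Therefore $\|X_{-i}^T x_i\|_\infty \ge r_\ell - 2\delta - \delta^2$.

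Finally, combine the two steps: the hypothesis $\lambda > \frac{1}{r_\ell - 2\delta - \delta^2} > 0$ guarantees $r_\ell - 2\delta - \delta^2 > 0$ and $1/\lambda < r_\ell - 2\delta - \delta^2 \le \|X_{-i}^T x_i\|_\infty$, so $c = 0$ fails the subgradient optimality test and hence every optimal $c$ is nonzero. I expect the only delicate points to be bookkeeping ones: justifying the subdifferential computation at the origin (routine, but worth stating explicitly), and verifying the hypotheses of Lemma~\ref{lem:dual-Qnorm-inradius} --- namely that $u = y_i$ is a unit vector lying in $S_\ell$ and that the relevant points span $S_\ell$, which is exactly the self-expressiveness assumption already in force on $Y$. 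Non-uniqueness of the optimal $c$ causes no trouble, since we exclude $c=0$ from the optimal set entirely rather than reasoning about a particular solution.
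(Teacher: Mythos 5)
Your proof is correct and matches the paper's in substance: both arguments reduce the claim to showing $\lambda\,\|X_{-i}^Tx_i\|_\infty>1$, and both obtain the lower bound $\|X_{-i}^Tx_i\|_\infty\geq r_\ell-2\delta-\delta^2$ by restricting to the columns of $X_{-i}^{(\ell)}$, invoking Lemma~\ref{lem:dual-Qnorm-inradius} inside $S_\ell$ with $u=y_i$, and controlling the cross terms by Cauchy--Schwarz. The only difference is in packaging: you derive the optimality test for $c=0$ from the subdifferential of the reduced objective at the origin, whereas the paper notes that $c=0$ forces $e=x_i$, hence $\nu=\lambda x_i$ by complementary slackness, and shows this $\nu$ would violate the dual feasibility constraint $\|X_{-i}^T\nu\|_\infty\leq 1$ --- the resulting inequality is identical.
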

 	    \begin{proof}
		Suppose $c=0$ and $(c,e)$ is an optimal solution to $P(x_i,X_{-i},\lambda)$. Feasibility of $(c,e)$ implies $e = x_i - X_{-i}c = x_i$. 
		As we argued in the beginning of Section~\ref{sec:dual-cert-construction}, there exists $\nu$ that attains the optimum for $D(x_i,X_{-i},\lambda)$. We will show that for $\lambda$ larger than a certain threshold, $\nu$ does not satisfy $\|X_{-i}^T\nu \|_\infty \leq 1$, hence arriving at a contradiction. This establishes that when $\lambda$ is larger than such threshold all the optimal solutions $(c,e)$ to $P(x_i,X_{-i},\lambda)$ satisfy~$c\neq 0$. 

	    Recall that by complementary slackness we have $\nu = \lambda e = \lambda x_i$. Therefore, 
	    \begin{align*}
	    \|X_{-i}^T\nu\|_\infty &= \lambda\max_{j\neq i} |\langle x_j,x_i\rangle|.\end{align*}
	    Then, for any $j \neq i$ we have
	    \begin{align*}
	    \|X_{-i}^T\nu\|_\infty &\geq \lambda|\langle x_j,x_i\rangle |\\
	    &= \lambda |\langle y_j,y_i\rangle + \langle y_j,z_i\rangle + \langle z_j,y_i\rangle + \langle z_j,z_i\rangle|\\
	    &\geq \lambda(|\langle y_j,y_i\rangle| - |\langle y_j,z_i\rangle| - |\langle z_j,y_i\rangle| - |\langle z_j,z_i\rangle|)\\
	    &\geq \lambda(|\langle y_j,y_i\rangle| - 2\delta - \delta^2).\end{align*}
	    Therefore, 
	    \begin{align}\label{eq:nu-feas-ineq}
	    \|X_{-i}^T\nu\|_\infty \geq \lambda(\|Y_{-i}^Ty_i\|_\infty - 2\delta-\delta^2)
	    \geq \lambda(\|(Y_{-i}^{(\ell)})^Ty_i\|_\infty - 2\delta-\delta^2).
	    \end{align}
		Restricting Lemma \ref{lem:dual-Qnorm-inradius} to a subspace $S_\ell$, we get $\|(Y_{-i}^{(\ell)})^Ty_i\|_\infty \geq r_{S_\ell}(\qil)$. Therefore, from \eqref{eq:nu-feas-ineq} we get 
	    \begin{align*}
	    \|X_{-i}^T\nu\|_\infty &\geq \lambda(r_{S_\ell}(\qil) - 2\delta-\delta^2).\end{align*}
	    Taking the minimum of $r_{S_\ell}(\qil)$ over all $y_i$ lying in $S_\ell$, we have 
	    $$\|X_{-i}^T\nu\|_\infty \geq \lambda(r_\ell - 2\delta-\delta^2).$$
		Assuming $\lambda (r_\ell-2\delta-\delta^2) >1$, as in the statement of the lemma, implies $\|X_{-i}^T\nu\|_\infty > 1$ which contradicts the feasibility of $\nu$ in $D(x_i,X_{-i},\lambda)$. Hence, $(0,x_i)$ cannot be an optimal solution for $P(x_i,X_{-i},\lambda)$, which finishes the proof. 
	    \end{proof}

Note that in general $r_\ell-2\delta-\delta^2 \not>0 $ and we require further assumptions to ensure this for \eqref{eq:lem-lambda_lower2}. The conditions of Theorem \ref{det_thm} are sufficient for this quantity to be positive. 

	\subsection{Proof of Theorem \ref{det_thm}}\label{app:pf_det_thm}

		Recall the following defintions: 
		\begin{align*}
			r \coloneqq \min_{\ell=1,\ldots,L} ~r_\ell,
			\quad\text{and}\quad
			\mu \coloneqq \max_{\ell=1,\ldots,L}~ \mu_\ell.
		\end{align*}

		\begin{proof}[Proof of Theorem~\ref{det_thm}] By Lemma \ref{intermediate_det} and Lemma \ref{lambda_lower2}, the desired condition will hold if for all $\ell$,
		\begin{equation}\label{cond_d1}
		2\lambda\delta < \dfrac{r_\ell-\mu_\ell-2\delta}{\mu_\ell+\delta}\end{equation}
		and
		\begin{equation}\label{cond_d2}
		\lambda > \frac{1}{r_\ell-2\delta-\delta^2} >0.\end{equation}
		As the right-hand side of \eqref{cond_d1} is increasing in $r_\ell$ and decreasing in $\mu_\ell$, we get a sufficient condition by replacing $r_\ell$ by $r$ and $\mu_\ell$ by $\mu$. A similar argument holds for \eqref{cond_d2}. Hence, we would like 
		\begin{align}\label{eq:lambda-tight}
		0< f(\delta)\coloneqq \frac{1}{r-2\delta-\delta^2}
		< \lambda 
		< \frac{r-\mu-2\delta}{2\delta(\mu+\delta)} \eqqcolon h(\delta)
		\end{align}
		It is easy to see that whenever $f(\delta)>0$, it is increasing in $\delta$. Moreover, simple calculations show that $h$ is non-increasing in $\delta$ when $0\leq 2\delta \leq (r-\mu)+ \sqrt{r^2-\mu^2}$. 

		We claim that all of the following inequalities hold: 
		\begin{align}\label{eq:list-ineq}
		0 
		< 	\frac{1}{r-2\delta-\delta^2}
		<  \frac{1}{r-3\delta}
		<\frac{5}{2r + 3\mu}
		< \frac{15}{2r + 8\mu}
		<\frac{r-\mu-2\delta}{2\delta(\mu+\delta)}.
		\end{align}
		Recall the assumption of the Theorem in \eqref{det_crit} as 
		\[
		\delta < \dfrac{r-\mu}{5}		. 
		\] 
		It implies $\delta < \frac{r}{5}$ which, together with $0\leq \delta <r<1$, establishes ${0 < r-3\delta < r-2\delta-\delta^2}$. The third and fifth (last) inequalities in \eqref{eq:list-ineq} are derived by simply using \eqref{det_crit}. The fourth inequality is straightforward. Therefore, if $\lambda$ lies in the range specified in \eqref{lambda_crit}, both \eqref{cond_d1} and \eqref{cond_d2} will be satisfied. 
		\end{proof}

\section{Proofs for Random Models and Missing Data}
	\subsection{Proof of Theorem \ref{rand_thm}}\label{app:pf_rand_thm}

In bounding $r$, we are dealing with the true samples. Therefore, results from \cite{soltanolkotabi2012} can be used. In bounding $\mu$, note that it is defined via the corrupted samples. However, if each sample has a uniform marginal distribution over the unit sphere, we can use a spherical cap argument and a union bound to bound $\mu$. 

	    \begin{proof}[Proof of Theorem~\ref{rand_thm}]
		We want to show that with high probability, the condition of Theorem~\ref{det_thm} holds. In particular, we want to show that with high probability, we have
	    \begin{equation*}
	    \delta < \frac{r - \mu}{5}. 
	    \end{equation*}
	    To bound the right-hand side from below, we need a lower bound on $r$ and an upper bound on $\mu$. The following lemma was given in page 2229 of \cite{soltanolkotabi2012}.

	    \begin{lemma}\label{r_bound}
	    Under the random model, 
	    \begin{align*}
	    \prob\bigg(\forall (i,\ell)\text{ s.t. }y_i \in S_\ell:~r(\qil) \geq \dfrac{\const(\kappa_\ell)\sqrt{\log(\kappa_\ell)}}{\sqrt{2d_\ell}}\bigg)
	    \geq 1-\sum_{\ell=1}^L N_\ell e^{-\sqrt{\kappa_\ell}d_\ell}.\end{align*}\end{lemma}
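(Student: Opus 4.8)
Since $\qil = \mathcal{SC}(Y_{-i}^{(\ell)})$ is the symmetrized convex hull of the $N_\ell-1$ columns of $Y_{-i}^{(\ell)}$, all of which lie in the $d_\ell$-dimensional subspace $S_\ell$, the quantity $r(\qil)$ here should be read as the inradius of $\qil$ inside $S_\ell$, i.e.\ as $r_{S_\ell}(\qil)$; this is the notion that feeds into $r_\ell$ and $r$. After identifying $S_\ell$ with $\real^{d_\ell}$, the statement to prove is a uniform-over-all-pairs concentration bound for the inradius of the symmetric convex hull of $m := N_\ell-1$ points drawn i.i.d.\ uniformly from the unit sphere of $\real^{d_\ell}$; replacing $m/d_\ell$ by $\kappa_\ell$ is harmless, costing at most a factor $2$ inside $\const(\cdot)$ whenever $\kappa_\ell \ge 2$. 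This is precisely the bound recorded on p.\ 2229 of \cite{soltanolkotabi2012}, and the plan is to reproduce its proof.

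First I would fix a pair $(i,\ell)$ with $y_i \in S_\ell$ and reduce the inradius to a min-max via convex duality. By Lemma \ref{lem:dual-Qnorm-inradius} applied inside $S_\ell$ (equivalently, by Lemma \ref{polytope_radii} together with the polar description of $\mathcal{SC}(\cdot)^\circ$),
\[
r_{S_\ell}(\qil) \;=\; \min_{u \in S_\ell,\ \|u\|_2 = 1}\ \max_{j}\, \bigl|\langle y_j^{(\ell)}, u\rangle\bigr|
\]
the inner maximum running over the columns $y_j^{(\ell)}$ of $Y_{-i}^{(\ell)}$. So it suffices to show that, with the claimed probability, $\max_j |\langle y_j^{(\ell)}, u\rangle| \ge \rho_\ell := \const(\kappa_\ell)\sqrt{\log(\kappa_\ell)/(2 d_\ell)}$ simultaneously for every unit vector $u \in S_\ell$. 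For a single fixed $u$, the event $\max_j |\langle y_j^{(\ell)}, u\rangle| < s$ forces all $m$ of the independent uniform points into the slab $\{w : |\langle w,u\rangle| < s\}$, so it has probability $\beta_\ell(s)^m$, where $\beta_\ell(s)$ is the normalized spherical measure of that slab; $\beta_\ell$ is controlled by the explicit density $\propto (1-x^2)^{(d_\ell-3)/2}$ of one coordinate of a uniform sphere point (a spherical-cap estimate, close in spirit to Lemma \ref{proj_lem}). Then I would pass from a fixed $u$ to all $u$ by a covering argument: take a $\gamma$-net $\mathcal N$ of the unit sphere of $S_\ell$ with $|\mathcal N| \le (3/\gamma)^{d_\ell}$, use that $u \mapsto \max_j |\langle y_j^{(\ell)}, u\rangle|$ is $1$-Lipschitz (each $\|y_j^{(\ell)}\|_2 = 1$), and conclude, by a union bound over $\mathcal N$, that the probability that some unit $u \in S_\ell$ satisfies $\max_j |\langle y_j^{(\ell)}, u\rangle| < \rho_\ell$ is at most $(3/\gamma)^{d_\ell}\,\beta_\ell(\rho_\ell+\gamma)^{m}$. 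A last union bound over the $N_\ell$ choices of $i$ and the $L$ subspaces then produces the stated failure probability $\sum_{\ell} N_\ell e^{-\sqrt{\kappa_\ell} d_\ell}$.

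The bulk of the work — and the place where the precise form of $\const(\kappa_\ell)$ and of the exponent $\sqrt{\kappa_\ell} d_\ell$ are pinned down — is the quantitative balancing in the last two steps: choosing the net resolution $\gamma$ of the same order as $\rho_\ell$ (so the Lipschitz slack keeps $\rho_\ell+\gamma$ comparable to $\rho_\ell$), and then choosing $\const(\kappa_\ell)$ small enough that $m\,\log(1/\beta_\ell(\rho_\ell+\gamma))$ dominates $d_\ell\log(3/\gamma) + \sqrt{\kappa_\ell}\,d_\ell$. I expect this to be the main obstacle, but it is exactly the estimate carried out on p.\ 2229 of \cite{soltanolkotabi2012}, which is where $\const(\cdot)$ is defined, so I would import it rather than redo the computation. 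The only point needing care is that we are working with the subspace-restricted inradius rather than a full-dimensional one; since $y_j^{(\ell)}$, $u$, and $\qil$ already live in $S_\ell$, the argument goes through verbatim once $S_\ell$ is identified with $\real^{d_\ell}$, with the restricted-radius identities of Lemma \ref{polytope_rel_radii} standing in for the full-dimensional ones used in \cite{soltanolkotabi2012}.
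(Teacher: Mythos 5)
The paper gives no proof of this lemma at all---it is imported verbatim from page~2229 of \cite{soltanolkotabi2012}---and your proposal ultimately does the same, so you and the paper are taking the same route; your preliminary observation that $r(\qil)$ must be read as the restricted inradius $r_{S_\ell}(\qil)$ (the full-dimensional inradius of a polytope lying in a proper subspace would be zero) is correct and matches how the paper actually uses the bound via $r_\ell$ and $r$. The sketch you add of the underlying concentration argument (duality to a min--max, slab-measure estimate, $\gamma$-net, union bound) is a plausible reconstruction of how such a bound is proved, but since the paper does not attempt it and you defer the quantitative balancing to \cite{soltanolkotabi2012} anyway, it is supplementary rather than a point of divergence.
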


Here, $\const(\kappa)$ is a constant depending only on $\kappa$. For $\kappa > 1$, $\const(\kappa) > 0$ and there is a $\kappa_0$ for all $\kappa \geq \kappa_0$, we can take $\const(\kappa) = \frac{1}{\sqrt{8}}$ \cite{soltanolkotabi2012}. 	    

	    Note that this lemma is equivalent to saying that
	    $$\prob\bigg(r \geq \min_\ell \frac{\const(\kappa_\ell)\sqrt{\log(\kappa_\ell)}}{\sqrt{2d_\ell}}\bigg)
	    \geq 1-\sum_{\ell=1}^L N_\ell e^{-\sqrt{\kappa_\ell}d_\ell}.$$

	    Next, recall that we define $\mu_\ell$ by
	    \begin{equation}
	    \mu_\ell = \max_{ \substack{y \in \mathcal{Y}\back\mathcal{Y}^{(\ell)} \\ 1 \leq i \leq N_\ell } } |\langle v_i^{(\ell)},y\rangle|.\end{equation}
	    Here, $v_i^{(\ell)}$ is a unit vector by definition. Fix $y \in \mathcal{Y}\back\mathcal{Y}^{(\ell)}$ and some vector $y_i$ drawn from $S_\ell$. Then the dual direction $v_i$ is a unit vector depending only on the samples corresponding to $S_\ell$. In particular, $y$ is independent from these samples, therefore $y$ and $v_i$ are independent. We also know that $y$ has marginal distribution that is uniform on the unit sphere. This follows from the fact that the subspace $S_j$ from which $y$ is drawn is selected uniformly among all $d_j$-dimensional subspaces, and $y$ is selected uniformly at random from the unit ball in $S_j$. We can therefore use the following consequence of well-known results concerning spherical cap densities.
	    \begin{lemma}[\cite{ball1997elementary}]\label{spher_cap}
	    Let $y$ be a vector uniformly distributed on the unit sphere $S^{n-1}$ and let $a$ be a fixed unit vector on $S^{n-1}$. Then for any $\epsilon > 0$,
	    $$\prob\bigg(|\langle a,y\rangle | > \epsilon\bigg) \leq 2\exp\left(-\dfrac{n\epsilon^2}{2}\right).$$\end{lemma}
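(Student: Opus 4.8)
The statement is the standard subgaussian tail for one fixed linear functional of a uniformly random unit vector, and the plan is to prove it by a spherical-cap volume comparison. First I would invoke the rotational invariance of the uniform law on $S^{n-1}$ to take $a = e_1$ without loss of generality, so the goal becomes a bound on $\prob(|y_1| > \epsilon)$; since $y$ and $-y$ are equal in law, $\prob(|y_1| > \epsilon) = 2\,\prob(y_1 > \epsilon)$, and it suffices to show $\prob(y_1 > \epsilon) \le e^{-n\epsilon^2/2}$.

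Next I would convert the cap probability into a Lebesgue volume. With $B^n$ the closed unit ball, the uniform surface measure of a measurable $A \subseteq S^{n-1}$ equals $\mathrm{vol}(\{rx : 0 \le r \le 1,\ x \in A\})/\mathrm{vol}(B^n)$ (immediate in polar coordinates), so $\prob(y_1 > \epsilon) = \mathrm{vol}(\tilde C)/\mathrm{vol}(B^n)$ where $\tilde C := \{y \in B^n : y_1 > \epsilon\|y\|_2\}$. The geometric heart of the proof is the containment $\tilde C \subseteq \bar B(\epsilon e_1, \sqrt{1-\epsilon^2})$: for $y \in \tilde C$ we have $\|y - \epsilon e_1\|_2^2 = \|y\|_2^2 - 2\epsilon y_1 + \epsilon^2 \le \|y\|_2^2 - 2\epsilon^2\|y\|_2 + \epsilon^2$, and $\|y\|_2^2 - 2\epsilon^2\|y\|_2 + \epsilon^2 - (1-\epsilon^2)$ is a quadratic in $\|y\|_2$ with roots $1$ and $2\epsilon^2-1$, hence nonpositive for $\|y\|_2 \in [0,1]$ once $\epsilon \le 1/\sqrt{2}$ (the only range relevant to this paper, since there $\epsilon$ is of order $\sqrt{(\log N)/n}$). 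Granting the containment, $\mathrm{vol}(\tilde C) \le \mathrm{vol}(\bar B(\epsilon e_1, \sqrt{1-\epsilon^2})) = (1-\epsilon^2)^{n/2}\mathrm{vol}(B^n)$, so $\prob(y_1 > \epsilon) \le (1-\epsilon^2)^{n/2} \le e^{-n\epsilon^2/2}$ by $\log(1-u) \le -u$; the symmetrization factor $2$ then yields the lemma.

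The only genuinely substantive step is the cone-inside-a-ball containment, along with the minor bookkeeping that it degrades as $\epsilon \to 1$ (where one would instead quote the textbook spherical-cap estimate or just note the target bound is trivial there); the rotation, the $\pm$ symmetrization, the polar-coordinate identity, and the inequality $(1-\epsilon^2)^{n/2} \le e^{-n\epsilon^2/2}$ are all routine. An alternative is the Gaussian-quotient representation $y \overset{d}{=} g/\|g\|_2$ with $g \sim N(0,I_n)$, reducing to a tail bound for $g_1^2/(g_1^2 + \cdots + g_n^2)$, but I would not pursue it, since bounding the denominator reintroduces its own concentration step and gives no real economy; and since the lemma is quoted from \cite{ball1997elementary}, one could equally cite it directly and present the above merely as the argument reproduced for completeness.
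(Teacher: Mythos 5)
The paper itself offers no proof of Lemma~\ref{spher_cap}: it is imported as a black box from \cite{ball1997elementary}, so there is no internal argument to compare against. Your cone-inside-a-ball volume comparison is in fact the standard proof of the cited result (Ball's Lemma 2.2): rotate so $a=e_1$, symmetrize, write the cap measure as $\mathrm{vol}(\tilde C)/\mathrm{vol}(B^n)$ for the cone $\tilde C$ over the cap, and enclose $\tilde C$ in the ball of radius $\sqrt{1-\epsilon^2}$ centered at $\epsilon e_1$; your quadratic computation verifying that containment for $\epsilon\le 1/\sqrt{2}$ is correct, and that range covers the only application in the paper, where $\epsilon=\sqrt{6\log N/n}$ is small (indeed the subsequent requirement $r_\ell\ge 2\mu_\ell$ with $r_\ell\le 1$ already forces $\sqrt{6\log N/n}\le 1/2$). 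The one inaccuracy is your remark that the bound is ``trivial'' for $\epsilon>1/\sqrt{2}$: for $\epsilon\in(1/\sqrt{2},1]$ and $n\ge 3$ the claimed bound $2e^{-n\epsilon^2/2}$ is strictly below $1$, so something must still be proved there. The standard patch is a second containment: for $0\le r\le 1$ and $x_1\ge\epsilon$ one has $\|rx-\tfrac{1}{2\epsilon}e_1\|_2^2=r^2-\tfrac{r x_1}{\epsilon}+\tfrac{1}{4\epsilon^2}\le r^2-r+\tfrac{1}{4\epsilon^2}\le\tfrac{1}{4\epsilon^2}$, so the cone lies in a ball of radius $1/(2\epsilon)$ and the one-sided cap measure is at most $(2\epsilon)^{-n}\le e^{-n\epsilon^2/2}$ on that range. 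With that two-line addition your proof establishes the lemma for all $\epsilon>0$ as stated.
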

	    Applying Lemma \ref{spher_cap} with $\epsilon = \sqrt{6\log N/n}$, $a = v_i^{(\ell)}$, and the $y$ above, we get:
	    \begin{equation}\label{spher_cap2}
	    \prob\bigg(|\langle v_i^{(\ell)},y\rangle | > \sqrt{ \frac{6\log N}{n} }\bigg) \leq\dfrac{2}{N^3}.\end{equation}
	   	Taking a union bound of (\ref{spher_cap2}) over all such $y$ and pairs $(i,\ell)$ with $y_i \in S_\ell$, we derive the following lemma.
	   			\begin{lemma}\label{u_bound}
		$$\prob\left(\mu_\ell \leq \sqrt{ \frac{6\log N}{n} }\text{ for all }\ell\right) \geq 1-\frac{2}{N}.$$\end{lemma}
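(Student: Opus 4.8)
The plan is to obtain Lemma \ref{u_bound} as a single union bound over the per-pair tail estimate in \eqref{spher_cap2}. First I would fix the exponent $\epsilon = \sqrt{6\log N / n}$ and recall that, for every subspace index $\ell$, every $i$ with $y_i \in S_\ell$, and every $y \in \mathcal{Y}\backslash\mathcal{Y}^{(\ell)}$, the dual direction $v_i^{(\ell)} = v(x_i^{(\ell)}, X_{-i}^{(\ell)}, \lambda)$ is a deterministic function of the (possibly corrupted) samples lying in $S_\ell$ only. By the random model together with Assumption \ref{indep_assumption}, $y$ is independent of those samples and has the uniform marginal on $S^{n-1}$; hence Lemma \ref{spher_cap} applies with $a = v_i^{(\ell)}$ and yields exactly \eqref{spher_cap2}, namely $\prob\big(|\langle v_i^{(\ell)},y\rangle| > \epsilon\big) \le 2/N^3$, since $2\exp(-n\epsilon^2/2) = 2/N^3$ for this choice of $\epsilon$.

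Next I would count the triples $(y,i,\ell)$ over which the maxima defining the $\mu_\ell$ are taken. For each $\ell$ there are $N_\ell$ valid indices $i$ and $N - N_\ell$ vectors $y \in \mathcal{Y}\backslash\mathcal{Y}^{(\ell)}$, so the total number of such triples is $\sum_{\ell=1}^L N_\ell (N - N_\ell) \le N\sum_{\ell=1}^L N_\ell = N^2$. A union bound over \eqref{spher_cap2} across all of these triples then shows that, with probability at least $1 - N^2\cdot(2/N^3) = 1 - 2/N$, every inner product $|\langle v_i^{(\ell)},y\rangle|$ is at most $\epsilon = \sqrt{6\log N/n}$ simultaneously. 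By the definition of $\mu_\ell$ this is precisely the event $\{\mu_\ell \le \sqrt{6\log N/n}\text{ for all }\ell\}$, giving the claimed bound.

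The only genuinely delicate point is the independence claim used to invoke Lemma \ref{spher_cap}: one must check that the selection rule for $v_i^{(\ell)}$ (the normalized minimal-norm optimizer of $D(x_i^{(\ell)}, X_{-i}^{(\ell)}, \lambda)$) depends measurably only on $X^{(\ell)}$, so that conditioning on $X^{(\ell)}$ leaves $y$ uniform on the sphere, and that Assumption \ref{indep_assumption} is exactly what decouples the noise attached to $S_\ell$ from the cross-subspace sample $y$. Everything else — the choice of $\epsilon$, and the crude bound $N^2$ on the number of events — is routine arithmetic, so I would keep those steps brief.
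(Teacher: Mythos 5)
Your proposal is correct and follows essentially the same route as the paper: apply the spherical cap bound (Lemma \ref{spher_cap}) with $\epsilon = \sqrt{6\log N/n}$ to each pair $(v_i^{(\ell)}, y)$, using the fact that $v_i^{(\ell)}$ depends only on the samples (and noise) attached to $S_\ell$ while $y$ is independent of these and uniform on the sphere, then union bound over the at most $N^2$ triples. The independence point you flag as delicate is exactly the role Assumption \ref{indep_assumption} plays in the paper's argument.
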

		In particular, this implies
		$$\prob\left(\mu \leq \sqrt{ \frac{6\log N}{n} }\right) \geq 1-\frac{2}{N}.$$

	    By the union bound, Lemmas \ref{r_bound} and \ref{u_bound} show that with probability at least $1-\frac{2}{N}-\sum_{\ell=1}^L N_\ell e^{-\sqrt{\kappa_\ell}d_\ell}$, we have that for all $\ell$,
	    \begin{gather*}
	    r_\ell \geq \dfrac{\const(\kappa_\ell)\sqrt{\log(\kappa_\ell)}}{\sqrt{2d_\ell}}
	    ~~,~~
	    \mu_\ell \leq \sqrt{ \dfrac{6\log N}{n}}.
	    \end{gather*}
	    Assume that for all $\ell$,
	    \begin{equation}\label{eq:assump-d}
	    d_\ell \leq \dfrac{\const(\kappa_\ell)^2\log(\kappa_\ell)}{48\log N}n.
	    \end{equation}
	    So, with probability at least $1-\frac{2}{N}-\sum_{\ell=1}^L N_\ell e^{-\sqrt{\kappa_\ell}d_\ell}$,
	    \begin{align}\label{eq:r-ell-mu-ell}
	    r_\ell &\geq \dfrac{\const(\kappa_\ell)\sqrt{\log(\kappa_\ell)}}{\sqrt{2d_\ell}}
	     \geq\sqrt{ \dfrac{24\log N }{n}}
	     \geq 2\mu_\ell.
	    \end{align}
	    In particular, this implies that with the same probability
	    \begin{align}\label{eq:bound-r-mu}
	    r \geq \sqrt{\dfrac{24\log N }{n}} \geq 2\mu.
	    \end{align}
	    Therefore, with this same probability
	    \begin{align*}
	    r-\mu \geq \frac{r}{2}\end{align*}
	    and so
	    \begin{equation}\label{rand_5r}
	    \dfrac{r-\mu}{5} \geq \dfrac{\const(\kappa_\ell)}{10\sqrt{2}}\dfrac{ \sqrt{\log(\kappa_\ell)}}{\sqrt{d_\ell}}.\end{equation}
	    If we require
	    \begin{align}\label{eq:delta-rand}
	    \delta < \dfrac{\const(\kappa_\ell)}{10\sqrt{2}}\dfrac{ \sqrt{\log(\kappa_\ell)}}{\sqrt{d_\ell}},
	    \end{align}
	    then with probability at least $1-\frac{2}{N}-\sum_{\ell=1}^L N_\ell e^{-\sqrt{\kappa_\ell}d_\ell}$, the geometric separation condition and therefore the subspace detection property will hold. Taking $c_1 = \frac{1}{48}, c_2 = \frac{1}{10\sqrt{2}}$, we get conditions (\ref{rand_dl_cond}) and (\ref{rand_delta_cond}).

	    By the same reasoning as in the proof of Theorem \ref{det_thm}, we can derive the same interval of $\lambda$ for which the subspace detection property will hold and the output of LS-SSC will be non-trivial. We showed in the proof of Theorem \ref{det_thm} that this will occur as long as \eqref{eq:list-ineq} holds; i.e., 
	    \begin{equation}\label{random_lambda}
\frac{1}{r-3\delta}  
		< \lambda 
		< \frac{r-\mu-2\delta}{2\delta(\mu+\delta)}.
	    \end{equation}
	    Above, we showed that with probability at least $1-\frac{2}{N}-\sum_{\ell=1}^L N_\ell e^{-\sqrt{\kappa_\ell}d_\ell}$, under the assumption in \eqref{eq:assump-d} on $d_\ell$, we have \eqref{eq:bound-r-mu} and \eqref{eq:delta-rand}. Plugging these bounds in to the above equation, we find that with the same probability above, the subspace detection property with parameter $\lambda$ will hold and the output of LS-SSC will be non-trivial as long as
	    \begin{equation}\label{random_lambda_2}
	    \frac{10}{7}\sqrt{\dfrac{n}{24 \log N}} < \lambda < \frac{20}{3}\sqrt{\dfrac{n}{24 \log N}}.\end{equation}

	    This is a non-empty interval of $\lambda$ for which LS-SSC has the subspace detection property and has non-trivial output, with the given probability above.
	    \end{proof}

	\subsection{Proof of Theorem \ref{scmd_rand}}\label{app:pf_scmd_rand}
		Let $c_1$ and $c_3$ be the same constants as in Theorem \ref{rand_thm}. To prove this theorem, we will use the previously mentioned result about the effect of projection on the norm, in Lemma~\ref{proj_lem}.

		We will now prove Theorem \ref{scmd_rand}. Note that the required condition on $d_\ell$ in (\ref{missing_d_cond}) is the same as in Theorem \ref{rand_thm}. Therefore, to prove Theorem \ref{scmd_rand}, it suffices to find conditions on the number of missing entries such that the required bound on $\delta$ in Theorem \ref{rand_thm} holds. It suffices to show that for each column $z$ of $Z$, we have
	    $$\|z\|_2 < c_2\const(\kappa_\ell)\sqrt{\dfrac{\log(\kappa_\ell)}{d_\ell}}.$$
	    For each $\ell$, define
	    $$M_\ell \coloneqq \dfrac{c_2^2(1-\epsilon)^2\const(\kappa_\ell)^2\log(\kappa_\ell)n}{d_\ell}.$$
	    Suppose that \eqref{missing_d_cond} holds; i.e., 
	    \begin{align*}
	    d_\ell \leq \dfrac{c_1\const(\kappa_\ell)^2 \log(\kappa_\ell)n}{\log N}.
	    \end{align*}

	    Fix some column $y$ in $Y$ coming from $S_\ell$. Let $m_\ell$ denote the number of its missing entries. Since $S_\ell$ is chosen uniformly at random from all $d_\ell$-dimensional subspaces and $y$ is chosen uniformly on $S^{n-1}\cap S_\ell$, $y$ has a marginal distribution that is uniform on the unit sphere. Let $\{i_1,\ldots, i_{m_\ell}\}$ denote the locations of the missing entries of $y$ in the corresponding observed sample $x$. 

		Let $U$ be the span of $\{e_{i_1},\ldots, e_{i_{m_\ell}}\}$. Since $U$ is chosen independently from $Y$, we can consider $U$ as fixed with respect to $y$. Then $z = -\proj_U(y)$ and we are interested in an upper bound on $\|z\|_2$, with high probability. Adding nonzero entries to $z$ only makes the norm increase. Therefore, we assume that $z$ has $M_\ell$ nonzero entries instead of $m_\ell$. Then, by Lemma \ref{proj_lem}, for any $\epsilon > 0$, with probability at least $1-2\exp(-\epsilon^2M_\ell/4)$ we have
		\begin{align*}
		\|z\|_2 
		\leq (1-\epsilon)^{-1}\sqrt{\dfrac{M_\ell}{n}}
		= c_2\const(\kappa_\ell)\sqrt{\dfrac{\log(\kappa_\ell)}{d_\ell}}.\end{align*}
		Taking a union bound, this holds for all columns $z$ of $Z$ with probability at least
		$$1-2\sum_{\ell=1}^L N_\ell e^{-M_\ell/16}.$$
		Therefore, the conditions of Theorem \ref{rand_thm} hold with at least this probability. Taking $\epsilon = 0.5$, letting $c_3 = c_2^2/4$, and taking a union bound with the probability of success for Theorem \ref{rand_thm}, we derive Theorem \ref{scmd_rand}.

\end{document}